\documentclass[twoside]{article}

\usepackage[preprint]{aistats2027}
\usepackage{amsmath, amssymb, amsthm}
\usepackage[round]{natbib}
\usepackage{graphicx}
\usepackage{tikz}
\usetikzlibrary{automata, positioning, arrows.meta}
\usepackage{subcaption}
\usepackage{booktabs}
\usepackage{multirow}
\usepackage{lipsum}
\usepackage{comment}

\newcommand{\fracpartial}[2]{\frac{\partial #1}{\partial  #2}}

\newcommand{\attn}{\boldsymbol{\alpha}}
\newcommand{\readout}{\boldsymbol{\beta}}
\newcommand{\E}{\mathbb E}
\newcommand{\prob}{\mathbb P}
\newcommand{\given}{\;|\;}
\newcommand{\cp}[1]{\left( #1 \right)}
\newcommand{\cb}[1]{\left[ #1 \right]}
\newcommand{\good}[1]{\(\mathbf{#1}\)}
\newcommand{\hd}[2]{\shortstack{#1\\#2}}
\newcommand{\sd}[1]{{\scriptsize(\(#1\))}}
\definecolor{oao}{HTML}{CC79A7}
\definecolor{aoa}{HTML}{009E73}
\definecolor{a}{HTML}{56B4E9}
\definecolor{o}{HTML}{D55E00}
\theoremstyle{plain}
\newtheorem{theorem}{Theorem}
\newtheorem{lemma}{Lemma}
\newtheorem{proposition}{Proposition}
\newtheorem{corollary}{Corollary}

\theoremstyle{definition}

\usepackage{xcolor}

\begin{document}

%
\runningtitle{Attention Learning Can Lag Output Layer Learning}

%

\twocolumn[

    \aistatstitle{
    Training Dynamics of Transformers on Unbounded Inputs:
    \\Attention Learning Can Lag Output Layer Learning
    }
    
    \aistatsauthor{Yaomengxi Han, Debarghya Ghoshdastidar}
    
    \aistatsaddress{Technical University of Munich} 
]

\begin{abstract}
The success of transformers is attributed to several factors, including their capacity to express formal languages over variable-length inputs, and ability of the attention mechanism to focus on relevant input features. However, there is limited understanding of the training dynamics of shallow transformers, specifically how to decouple the learning of the attention and the output (feedforward) layer. A natural extension of the theory of one-step gradient feature learning in feedforward networks suggests that one could first learn the attention weights with a single gradient step, and then train the output layer. We show that this fails for a shallow transformer trained to recognize a star-free language from unbounded-length input. We prove that the transformer can be trained to recognize the language with a three-phase gradient descent: the output layer is updated first, followed by an attention update, and the output layer is fine-tuned in the third phase. In contrast, the language cannot be learned by training only the attention or output layer, or by a single attention-layer update followed by finetuning the output layer. We empirically show that ``attention learning lagging output layer learning" is beneficial in other language recognition, regression, and in-context learning tasks as well.
\end{abstract}

\section{Introduction}

\emph{How do neural architectures learn features and attentions? What is the impact of the learning dynamics on the generalization abilities of the trained model?}
These are major open questions in deep learning theory.
Significant progress has been made to address these questions in feedforward neural networks, both from a broad optimization perspective---for instance, implicit biases of gradient descent \citep{LyuL20}---as well as from the viewpoint of learning specific tasks, like learning parity, sparse polynomial or gaussian mixture classification \citep{DanielyM20,AbbeAM23,FreiCB23}.
A notable theoretical insight has emerged in this context, sometimes referred to as \emph{one-step gradient feature learning} \citep{BaESWWY22,ShiWL23}---for learning tasks using a shallow feedforward network, it often suffices to perform one step of gradient descent on the input layer to learn the features and then train the output layer till convergence.
Decoupling the training of different layers helps to study simplicity bias, lottery tickets, etc., in feedforward networks \citep{AbbeAM23,ShiWL23}.

Much less is understood about how attention learning happens in transformers. There are studies on implicit biases and training dynamics (see Relation to prior work), but it is not clear what the ideal way is to theoretically decouple the training of attention and output layers in a shallow transformer, comprising a softmax attention layer followed by a feedforward output layer. For example, a natural extension of the gradient feature learning paradigm in a shallow transformer is to perform a single gradient descent update of the attention weights, followed by training of the output layer. It is not clear if a \emph{``one-step attention learning''} paradigm is suitable for training shallow transformers or a different multi-phase decoupled training is needed, for example, in context of the prototypical task of learning formal languages \citep{StroblMW0A24}.

\paragraph{Contributions.}
We study how attention learning happens in a shallow transformer trained to recognize formal language, where the training consists of words of varying length (potentially unbounded). Despite active research on  the theory of attention learning, little is known about the training dynamics of transformers when trained on variable-length inputs. We aim to address if a one-step attention learning paradigm---akin to gradient feature learning in feedforward networks \citep{ShiWL23}---is an appropriate simplification of how shallow transformers learn to recognize a formal language. We have the following key contributions.

\textit{Multi-phase gradient descent dynamics in learning the occurrence language.}
We consider a minimalist example of learning to recognize the occurrence language $A_s = \Sigma^* s\Sigma^*$ on the finite alphabet $\Sigma=\{1,\ldots,d\}$, i.e., the language of all words containing $s$, where $s\in\Sigma$ is not known apriori (Figure \ref{fig:overview}a). The occurrence language can be expressed by a shallow transformer with $d$-variate encoding of each bit, a single attention head, and a linear output layer (Proposition \ref{prop:expressivity}). Empirically, for this task, gradient descent (GD) first learns the output layer before the attention weights (Figure \ref{fig:overview}b). Theorem \ref{thm:main_positive} formally shows that it is possible to learn the optimal transformer using multi-phase GD: (Phase 1) one-step GD update of output layer; (Phase 2) one-step GD update of attention weights; (Phase 3) training output layer till convergence. This result indicates that a \emph{one-step attention learning} works in transformers, but when the attention learning happens after output layer is partly trained (see Figure \ref{fig:overview}d-e).

\textit{Impossibility results for simpler learning approaches.}
It is natural to question if the single-head transformer can also be trained using a simpler gradient-based rule.  Propositions \ref{prop:attn_only} and \ref{prop:output_only} show that, if either only the output or the attention layer is trained with the other held fixed, then a shallow transformer cannot learn uniformly over the set of $d$ languages $\{A_s: s\in \Sigma\}$. The same also holds for transformers with $O(\sqrt{d})$ heads (Proposition~\ref{prop:output_only_multi_heads}), but arbitrary fixed attention, showing that lazy training fails for this task, unless with a large number of heads. Furthermore, Theorem~\ref{thm:attention-first-fails} shows that a two-phase approach of one step of GD update of attention, followed by training the output, cannot learn the language. Thus a na{\"i}ve extension of gradient feature learning principle does not work in this case.

\textit{Finite sample analysis and empirical study on other languages.}
While the aforementioned results consider GD on population loss, Corollary \ref{thm:fbGD} extends the learnability result to finite sample setting, showing that the three-phase training also works with $O(d\ln d)$ samples for any fixed tolerance. 
We also numerically study other regression, language recognition, or in-context learning tasks with shallow transformers, and show that in most cases, the three-phase approach (with one-step GD update of output layer before one-step attention learning) outperforms other decoupled training of attention and output layers. Hence, we hypothesize that this approach is potentially a reasonable theoretical simplification of transformer training dynamics.

\begin{figure}[t]
    \centering
    \includegraphics[width=\linewidth]{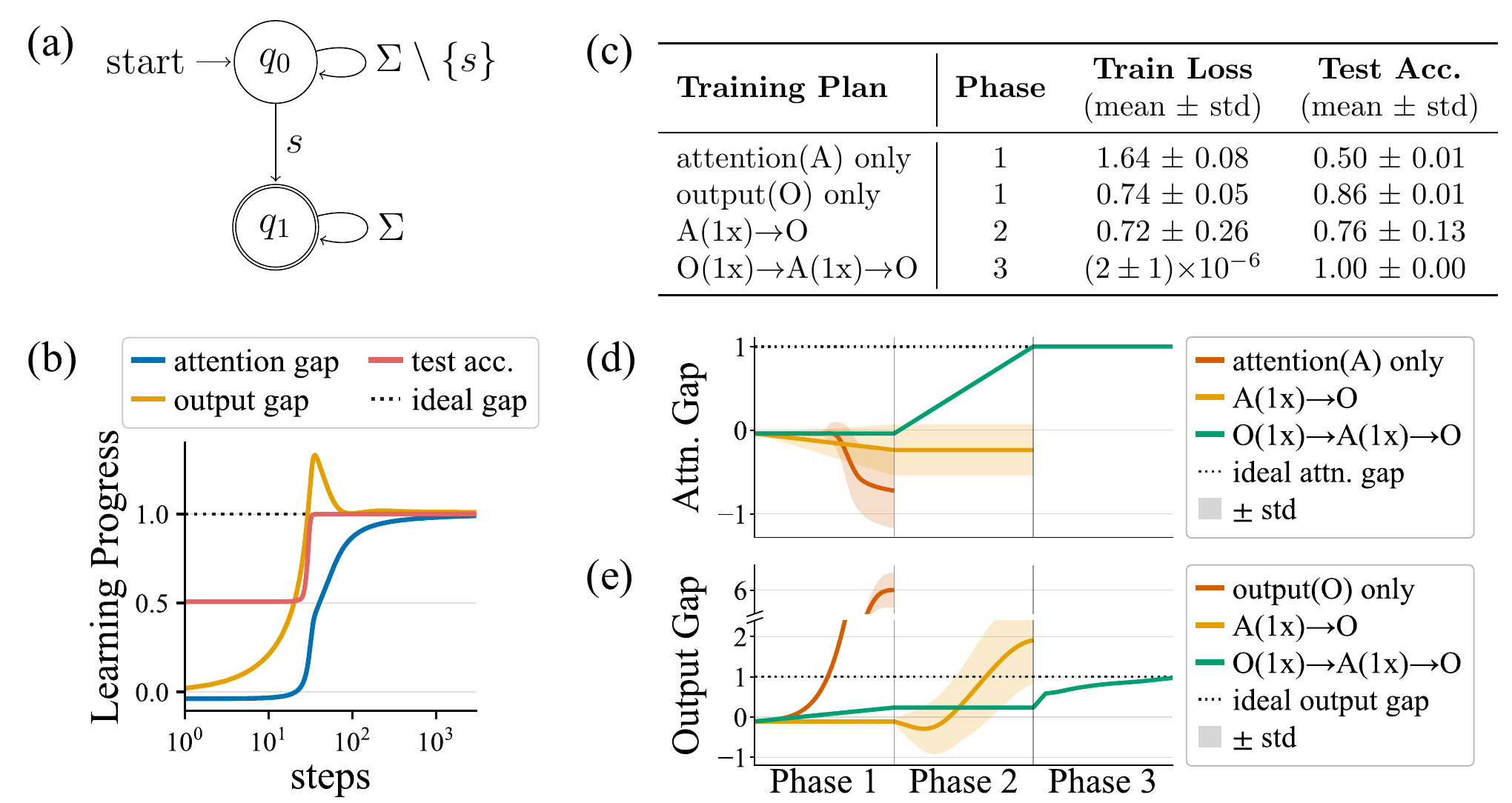}
    \caption{\textbf{(a)} Automaton of the occurrence language $A_s = \Sigma^*s\Sigma^*$ with alphabet $\Sigma = \{1,\ldots,d\}$ and unknown $s\in\Sigma$. The language can be expressed by a transformer with one attention head and linear output layer \eqref{eq:single_head_transformer}, which encodes word $w=(w_1,\ldots,w_m)$ as $(\mathbf{e}_{w_1},\ldots,\mathbf{e}_{w_m})$, where $\mathbf{e}_i \in \mathbb{R}^d$ is $i$-th standard basis vector. The optimal shallow transformer has weights of the output layer $\boldsymbol{\beta} = 2\mathbf{e}_s \in \mathbb{R}^d$, and attention score for symbol $i\in\Sigma$ is $\gamma_i=1$ if $i=s$, and 0 otherwise.
    \textbf{(b)} A shallow transformer is trained to recognize $A_s$ using gradient descent jointly on both layers. The output gap $(\beta_s - \max\limits_{i\neq s} \beta_i)/2$ and attention gap $(\gamma_s - \max\limits_{i\neq s} \gamma_i)$, plotted across training epochs, show that output gap increases before attention gap appears, indicating that attention learning lags output layer learning.
    \textbf{(c)} Several training plans are considered, which decouple gradient descent on attention layer (A) and output layer (O), sometimes using a single gradient step (1x). ``One-step attention learning'', where one attention update is followed by training output layer till loss converges, A(1x)$\to$O, does not correctly learn the language. We later prove that a 3-phase training, O(1x)$\to$A(1x)$\to$O, correctly learns to recognize $A_s$.
    \textbf{(d)--(e)} show how our 3-phase plan reaches the ideal gaps in both layers, but training attention/output only, or attention before output do not.  
    }
    \label{fig:overview}
\end{figure}

\paragraph{Relation to prior work.}
The expressivity and training dynamics of transformers is an active area of research in deep learning theory. While there is a rich literature on the expressivity of transformers in terms of formal languages, circuit complexity or space/time complexity classes (see surveys in  \citet{StroblMW0A24,LinB25,BarceloCCSY25}), the current research on training dynamics of transformers has predominantly focused on finite-dimensional learning problems.
There are results on the lazy training regime of shallow transformers \citep{HronBSN20, ArdaCE26} or implicit biases towards maximum margin \citep{tarzanagh2023}, which hold in general data settings. However, fine-grained characterization of the training dynamics requires studying specific data distributions or learning tasks, which we also assume. 

\citet{WenZLZ25,han2026} study the training dynamics of transformers when trained on sparse parity function---a prototypical task for feedforward networks.
\citet{MarionBBB25,Dohmatob25} study the joint dynamics of attention and output layer of a shallow transformer for single-location regression, while \citet{WangW0L24} derive the joint training dynamics for sparse token averaging. The settings in these works correspond to finite/fixed-dimensional inputs, thereby ignoring the ability of transformers to handle varying input length. Furthermore, the attention-only or joint training dynamics do not reveal the importance of learning different layers. 
\citet{BiettiCBJB23,ChenSWY24} study the dynamics of training induction heads in transformers for next token prediction on bigrams and in-context n-gram Markov chains. \citet{BiettiCBJB23} empirically observe, under joint training, the induction head is learned more slowly than the global bigram model, similar to Figure~\ref{fig:overview}b, and show that the induction head can be learned by sequential steps: first on output matrix, then on two attention key matrices. \citet{ChenSWY24} also consider a multi-phase training of different layers, involving a parameter-inefficient polynomial regression. Both works, however, restrict the problems to fixed context windows, effectively to fixed-length inputs. Moreover, these works do not explore if attention or output needs to be learned first.

Several empirical works study how transformers learn formal languages~\citep{LiuAGKZ23, Liu_glitch, ButoiKSVCD25}, but theoretical characterization of the training dynamics of transformers in learning formal languages is rather recent. \citet{YangKWL24} study the gradient flow dynamics of learning word co-occurrence for fixed-length input, showing that the first phase mainly trains the output layer, before the joint training of output and attention happens. \citet{HuangL025} study the joint training dynamics of even-pairs and parity-check languages (the latter with chain of thought), restricting to bounded-length words, and observe a similar slow learning of attention in early phase. These works implicitly suggest that a phenomenon equivalent to \emph{one-step gradient feature learning} in feedforward networks may not hold for transformers, where attention learning typically happens late. Our work takes a step towards resolving this matter by proving that one-step attention learning mechanism is indeed possible, but needs to follow a step of output layer update, which aligns with the intuition from known joint training dynamics. Such a simplifying characterization of the training dynamics is a key contribution of our work, along with the impossibility of learning via simpler attention-only or attention-first training strategies. Prior works also assume fixed- or bounded-length inputs, which our analysis does not require.
\section{Problem Formulation}
\label{sec:problem}
This section describes the occurrence language recognition task, the training data distribution, the transformer model, and its ability to express the language.

\paragraph{Occurrence language and Training data.}
Let $\Sigma = \{1, 2, \dots, d\}$ be a finite alphabet of size $d\ge 2$. For a symbol \(s\in\Sigma\), the \emph{occurrence language}, defined as $A_s := \Sigma^* s \Sigma^*$, is the set of strings that contain $s$. We encode membership by the label function \(\displaystyle\chi_s(w) := 2 \cdot \mathbf{1}\{s \text{ occurs in } w\} - 1 \in \{\pm 1\}.\) Consider any non-empty word $w=(w_1, \dots, w_m) \in \Sigma^*$ drawn from a distribution, say $\rho$, such that the length is $m \sim \mathrm{Geometric}(p)$, i.e., $\prob(m = k) = p(1-p)^{k-1}$ for $k \geq 1$, and given $m$, the symbols $w_1, \dots, w_m \sim_{i.i.d.} \mathrm{Uniform}(\Sigma)$. Equivalently, after the first symbol, the sequence terminates at each step independently with probability $p$. Setting $p = \frac{1}{d-1}$ exactly balances the proportion of positive/negative samples since, for any \(s\in\Sigma\), it holds that
$\prob_{w\sim\rho}(\chi_s(w)=-1) = \sum_{m=1}^{\infty} \prob\cp{|w| = m} \cdot \cp{1-1/d}^m = \frac12$.
%
We use $D_s$ to denote the joint distribution over $(w, y)$ where $w \sim \rho$ and $y = \chi_s(w) \in \{\pm1\}$. Throughout this paper, we fix a target symbol $s \in \Sigma$, unknown to the learner, and write $D \equiv D_s$, $\chi \equiv \chi_s$ when $s$ is unambiguous from context. The transformer, described next, is trained by gradient descent on the population loss over $D$.

\paragraph{Shallow transformer architecture.}
To recognize the language $A_s$, we consider a transformer with a single attention layer with one head, and a linear output layer. Every $i \in \Sigma$ is embedded as $i$-th standard basis vector $\mathbf{e}_i \in \mathbb{R}^d$, and a word $w = (w_1, \dots, w_m) \in \Sigma^*$ encodes into \((x_1, \dots, x_m)\), where \(x_t = \mathbf{e}_{w_t}\). Positional encoding is not used since membership \(\chi_s(w)\) is permutation-invariant. We prepend a fixed classification token $x_{\mathrm{CLS}}$, which attends to the \(m\) input positions but not to itself. The prediction is computed from the attention output of this token alone. 

Since the model prediction depends only on the output of the classification token, the query and key matrices enter the model only through the vector $\boldsymbol{\alpha} := W_K^\top W_Q\, x_{\mathrm{CLS}} \in \mathbb{R}^d$, whose $i$-th entry is the attention logit that the \texttt{CLS} token assigns to symbol $i$. Likewise, the value matrix and the linear output enter only through their product $\boldsymbol{\beta} \in \mathbb{R}^d$. We treat $(\boldsymbol{\alpha}, \boldsymbol{\beta})$ as the trainable parameters, and call them the attention and output weights, respectively. 
The model computes
\begin{align} \label{eq:single_head_transformer}
    f_{\attn, \readout}(w)
    & = \readout^\top \cp{\sum_{t=1}^m \frac{\exp(\attn^\top x_t)}{\sum_{r=1}^m \exp(\attn^\top x_{r})} \cdot x_t} - 1 \notag \\
    & = \sum_{i=1}^d \beta_i \cdot \frac{C_i(w)\exp(\alpha_i)}{\sum_{j=1}^d C_j(w)\exp(\alpha_j)} - 1,
\end{align}
where the second line groups the attention scores from positions that carry the same symbol by using $C_i(w) := \sum_{j=1}^m \mathbf 1\{w_j = i\}$ to denote the number of occurrences of symbol $i$ in $w$; a fixed bias \(-1\) is used. We use \(\gamma_i(w): = \frac{C_i(w) \exp(\alpha_i)}{\sum_{j=1}^d C_j(w) \exp(\alpha_j)}\) to denote the total attention scores placed on symbol $i\in[d]$. 

\paragraph{Loss and expressivity.}
The transformer is trained to minimize the population (squared) risk over $D$, 
\begin{equation*}
\mathcal{L}(\attn,\readout) = \E_{(w,y)\sim D}\cb{\frac{1}{2}\cp{f_{\attn,\readout}(w) - y}^2}.
\end{equation*}
In Corollary \ref{thm:fbGD}, some results are extended to a finite sample case, where the empirical risk is minimized.

The following expressivity result, proved in Appendix~\ref{app:proof_prop_expressivity}, shows that there is a choice of $\attn,\readout$ for which the loss can be made arbitrarily small.

\begin{proposition}[Shallow transformer expresses occurrence language]\label{prop:expressivity}
   For any specified $\varepsilon\in(0,1)$, a shallow transformer with $\readout=2\mathbf{e}_s$ and $\attn = \ln(4d/\varepsilon^2) \mathbf{e}_s$ achieves  \(\mathcal L(\attn, \readout) \le \varepsilon\).
   The result also holds for any $\attn$ such that \(\alpha_s - \max_{j\ne s} \alpha_j \ge \ln(4d/\varepsilon^2)\).
\end{proposition}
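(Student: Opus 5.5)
With $\readout = 2\mathbf{e}_s$ the model is $f(w) = 2\gamma_s(w) - 1$, so the proof reduces to controlling $\gamma_s(w)$ on positive and on negative words. I would split the population risk according to whether $s$ occurs in $w$.

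If $s \notin w$, then $C_s(w)=0$, hence $\gamma_s(w)=0$, $f(w)=-1=y$, and the loss is exactly zero. This holds for any $\attn$.

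If $s\in w$, then $y=1$, and the loss is $\tfrac12(2\gamma_s-2)^2 = 2(1-\gamma_s)^2$. Write $\Delta := \alpha_s - \max_{j\ne s}\alpha_j$. Since $C_s\ge 1$ and $\sum_{j\ne s}C_j \le m-1$,
\[
1-\gamma_s = \frac{\sum_{j\ne s} C_j e^{\alpha_j}}{C_s e^{\alpha_s} + \sum_{j\ne s} C_j e^{\alpha_j}} \le \frac{\sum_{j\ne s}C_j e^{\alpha_j}}{C_s e^{\alpha_s}} \le (m-1)e^{-\Delta}.
\]
We also have the trivial bound $1-\gamma_s\le 1$. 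Therefore
\[
\mathcal L \le 2\,\E\big[\min\{1,(m-1)^2e^{-2\Delta}\}\big] \le 2e^{-\Delta}\,\E[m-1],
\]
using $\min\{1,x^2\}\le x$ for $x\ge0$. Under the geometric law with $p=1/(d-1)$ we have $\E[m]=d-1$, so $\E[m-1]=d-2\le d$.

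Plugging in $\Delta\ge\ln(4d/\varepsilon^2)$ gives $\mathcal L\le 2d\cdot \varepsilon^2/(4d) = \varepsilon^2/2 \le \varepsilon$ for $\varepsilon\in(0,1)$. The specific choice $\attn=\ln(4d/\varepsilon^2)\mathbf{e}_s$ has $\Delta = \ln(4d/\varepsilon^2)$ exactly, so it is a special case of the general claim.

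The only genuine obstacle is the unbounded length $m$. A squared bound $\E[(m-1)^2]e^{-2\Delta}$ would also work, since the geometric law has finite second moment of order $d^2$; that route gives roughly $\mathcal L\le 2d^2\varepsilon^4/(16d^2)$, which is fine too. The $\min$ trick above is the cleaner way to handle it and matches the constant $4d$.
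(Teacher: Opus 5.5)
Your proof is correct, and it takes a different route from the paper's. The paper truncates on word length at $M = 2d/\varepsilon$. Long words are handled by the geometric tail, $\Pr(|w|>M) < e^{-2/\varepsilon}$, together with the crude bound $2$ on the per-word loss. For positive words of length at most $M$, it derives from $e^{\Delta}\ge 4d/\varepsilon^2$ the uniform lower bound $\gamma_s(w)\ge 2/(2+\varepsilon)$, so each such word has loss below $\varepsilon/2$.

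You avoid the truncation entirely. The pointwise bound $1-\gamma_s \le \min\{1,(m-1)e^{-\Delta}\}$, combined with $(1-\gamma_s)^2\le 1-\gamma_s$, lets you integrate directly against the length law using only $\mathbb{E}[m-1]=d-2$. This gives a sharper statement, $\mathcal L \le 2(d-2)e^{-\Delta}$. Under the stated margin that is $\varepsilon^2/2$, and it shows that a margin of $\ln(2d/\varepsilon)$ would already suffice. It also uses only the mean length rather than the geometric tail.

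So your closing remark that the min trick ``matches the constant $4d$'' is backwards: your bound shows the threshold $4d/\varepsilon^2$ has slack. The $\varepsilon^2$ there is what the paper's truncation needs, since $\varepsilon^2 M = 2d\varepsilon$ must be absorbed against $4d$ to get the uniform bound on short words.

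The paper's packaging does make explicit that every positive word up to length $2d/\varepsilon$ is classified with a margin, although your pointwise inequality contains the same information. Your argument is also essentially the one the paper itself uses later in Phase~2 of the proof of Theorem~\ref{thm:main_positive}. There it bounds $1-\gamma_s'\le (m-1)e^{-\Delta\alpha'}$ and averages over $m$.
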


\paragraph{Training.}Our main results on training dynamics assume that transformer is trained to minimise the population risk $\mathcal{L}$ with GD.  A gradient step on the parameters is written as \(\attn' =  \attn - \eta\nabla_{\attn}\mathcal{L}(\attn, \readout), \quad \readout' = \readout - \eta'\nabla_{\readout}\mathcal{L}(\attn, \readout),\)
with parameter-wise step sizes $\eta, \eta' \ge 0$. We subsequently use training plans that decouple GD updates of $\attn,\readout$, that is, each step updates only one of the two parameters, while the other is frozen (i.e., the learning rate for that parameter is \(0\)). Our results in Sections~\ref{sec:impossibility} and~\ref{sec:positive_result} differ in the initialization of $(\attn, \readout)$ and in the order of training the layers, as explained in the respective results. 


\section{Failure of One- and Two-Phase Training}\label{sec:impossibility}
We first show that training a single layer is insufficient to learn the occurrence language, that is, if either layer is fixed independently of the target, then training the other cannot compensate.
\begin{proposition}[Training attention only cannot learn the occurrence language]\label{prop:attn_only}
    There exists an absolute constant $c_1 > 0$, such that for every \(d\ge 3\) and $\tilde{\readout} \in \mathbb{R}^d$,
    \[
        \max_{s \in [d]}\; \inf_{\attn \in \mathbb{R}^d}\;
        \mathcal{L}_s(\attn, \tilde{\readout}) \;\geq\; c_1,
    \]
    that is, for any fixed output, there is a target $s\in\Sigma$ where no attention weight achieves arbitrarily low risk.
\end{proposition}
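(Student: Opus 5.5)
The plan rests on one structural fact: for fixed $\tilde{\readout}$, $f_{\attn,\tilde{\readout}}(w)+1=\sum_i \gamma_i(w)\tilde\beta_i$ is a convex combination of the entries $\tilde\beta_i$ over symbols present in $w$, whatever $\attn$ is. We will pick the adversarial target $s$ from $\tilde{\readout}$ alone and lower bound $\mathcal L_s(\attn,\tilde{\readout})$ uniformly in $\attn$. We split into two cases according to whether some coordinate satisfies $\tilde\beta_i\le 1$.

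\emph{Case 1: $\tilde\beta_i>1$ for all $i$.} Take any $s$. On every negative word (those not containing $s$), $f+1>1$, so $(f-y)^2=(f+1)^2>1$. Negative words have probability $1/2$ under $\rho$, so $\mathcal L_s\ge \tfrac12\cdot\tfrac12\cdot 1=\tfrac14$ for every $\attn$.

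\emph{Case 2: some $s$ has $\tilde\beta_s\le 1$.} Choose this $s$. We couple positive words with negative ones. Let $w$ be a word containing exactly one occurrence of $s$, and let $w^-$ be $w$ with that occurrence deleted, assumed nonempty. Because the softmax renormalises, the attention on the remaining symbols is rescaled by $1-\gamma_s(w)$, so
\[
b:=f(w)+1=(1-\gamma_s(w))\,a+\gamma_s(w)\tilde\beta_s,\qquad a:=f(w^-)+1 .
\]
Hence $b\le\max(a,1)$. The joint error then satisfies $(a-0)^2+(b-2)^2\ge 1$: if $a\le 1$ then $b\le 1$ and $(b-2)^2\ge 1$, while if $a>1$ then $a^2>1$. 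Note that $w$ has label $+1$ and $w^-$ has label $-1$, so these are exactly the two squared errors.

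It remains to turn this pointwise bound into a constant lower bound on the risk. Fix a word $v$ over $\Sigma\setminus\{s\}$ of length $k\ge 1$. The probability that $\rho$ produces a word with exactly one $s$ whose deletion gives $v$ is $p(1-p)^k(k+1)d^{-(k+1)}$, since there are $k+1$ positions for $s$. The probability that $\rho$ produces $v$ itself is $p(1-p)^{k-1}d^{-k}$. Their ratio is $(1-p)(k+1)/d$, which is at least $(1-p)\ge 1/2$ whenever $k\ge d-1$ (using $d\ge3$). Restricting to such $v$ and writing $\mu$ for the law of $\rho$ restricted to negative words of length at least $d-1$, we obtain
\[
2\mathcal L_s\ \ge\ \sum_{v}\Big[\rho(v)\,a_v^2+\tfrac12\rho(v)\,(b_{v}-2)^2\Big]\ \ge\ \tfrac12\,\mu(\Sigma^*).
\]
Here $b_v-2$ is understood as $\min$ over the $k+1$ insertion positions. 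By permutation invariance of $f$, all insertion positions in fact give the same value.

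Finally, $\mu(\Sigma^*)=\sum_{k\ge d-1}p(1-p)^{k-1}(1-1/d)^k$ is a geometric tail. With $p=1/(d-1)$, it is bounded below by an absolute constant, of order $e^{-2}$ times a constant, uniformly in $d\ge3$. This yields $c_1$ as the minimum of $1/4$ and this constant.

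The main obstacle is this last step. The pointwise coupling bound is easy, but it only helps if the coupled mass does not vanish with $d$. Short words, including length one, carry only $O(1/d)$ mass, so we must work with long words, where the ratio $(k+1)/d$ is favourable. We then have to verify carefully, with explicit constants for small $d$, that the geometric tail beyond length $d-1$ has constant probability.
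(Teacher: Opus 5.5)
Your proposal is correct and follows the paper's proof. The shared steps are: a case split on $\min_i \tilde\beta_i$ relative to $1$ (the paper takes $s=\arg\min_i\tilde\beta_i$), the convex-combination identity $f(z')+1=\gamma_s(z')\tilde\beta_s+(1-\gamma_s(z'))\,(f(z)+1)$ for a positive word $z'$ obtained by inserting $s$ into a negative word $z$, and a coupling of negative words with their positive counterparts over long lengths.

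The only difference is the coupling set. The paper inserts any number $n\ge 1$ of copies of $s$, so the mass ratio $R(d)^{m+1}-1$ exceeds $\tfrac12$ already for $m\ge\lceil (d+1)/2\rceil$, which gives $c_1=1/32$. Your single-insertion coupling (which the paper itself uses for Proposition~\ref{prop:output_only}) needs $k\ge d-1$ and gives $\mathcal L_s\ge\tfrac14\mu(\Sigma^*)=\tfrac18(1-2/d)^{d-2}\ge e^{-2}/8$. So the step you flag as the main obstacle is just this closed-form geometric sum combined with $\ln(1-x)\ge -x/(1-x)$.
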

Since the bound holds for the infimum over $\attn\in\mathbb R^d$, it applies to any procedure for training the attention weights, not only to gradient descent. A similar result holds with the roles of the two layers exchanged.
\begin{proposition}[Training output only cannot learn the occurrence language]\label{prop:output_only}
    There exists an absolute constant $c_2 > 0$, such that for every \(d\ge 3\) and $\tilde{\attn} \in \mathbb{R}^d$,
    \[
        \max_{s \in [d]}\; \inf_{\readout \in \mathbb{R}^d}\;
        \mathcal{L}_s(\tilde{\attn}, {\readout}) \;\geq\; c_2,
    \]
    that is, for any fixed attention, there is a symbol \(s\), for which no output can achieve an arbitrarily small risk.
\end{proposition}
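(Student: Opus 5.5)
With $\tilde{\attn}$ fixed, the model is linear in $\readout$: $f(w)=\readout^\top \boldsymbol{\gamma}(w)-1$, where $\boldsymbol{\gamma}(w)\in\Delta^{d-1}$ is the attention vector, which depends only on $\tilde{\attn}$. So $\inf_{\readout}\mathcal{L}_s$ is a least-squares problem. The plan is to show it cannot reach small error for some $s$. Order the symbols so that $\tilde\alpha_{(1)}\ge\tilde\alpha_{(2)}\ge\dots$, and pick the target $s$ among the symbols with small logits. Concretely, take $s$ to be a symbol ranked in the lower half of the $\tilde{\attn}$ values, or even the minimizer $s=\arg\min_i\tilde\alpha_i$. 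Then, whenever $s$ appears in $w$, its attention weight $\gamma_s(w)$ is at most $C_s(w)/\sum_j C_j(w)$ of the uniform-attention share, i.e. at most roughly $1/|w|$ when $s$ appears once.

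The key step is a pairing (indistinguishability) argument. I would construct two families of words with opposite labels but nearly equal $\boldsymbol{\gamma}(w)$. Let $w$ be a negative word, so $s\notin w$. Let $w^+$ be $w$ with one extra occurrence of $s$ appended. Then
\[
\|\boldsymbol{\gamma}(w^+)-\boldsymbol{\gamma}(w)\|_1 \le 2\gamma_s(w^+) \le \frac{2}{C_{\mathrm{tot}}(w)+1},
\]
where $C_{\mathrm{tot}}(w)$ here stands for the attention mass of $w$ normalized by $e^{\tilde\alpha_s}$. This mass is at least $|w|$ because $\tilde\alpha_s$ is minimal. Since $f$ is linear in $\boldsymbol{\gamma}$ with coefficient $\readout$, we get
\[
|f(w^+)-f(w)|\le 2\|\readout\|_\infty/(|w|+1).
\]
The two predictions must nevertheless differ by about $2$ for the loss to be small. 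This forces either $\|\readout\|_\infty\gtrsim |w|$, or error of order $1$ on pairs with $|w|\ge L$.

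To turn this into a loss bound, I would argue as follows.
\begin{enumerate}
\item First handle large $\readout$. If $\beta_s$ is large, it also damages other words: words containing $s$ with moderate share get overshot. A cleaner route is to use $(a-1)^2+(b+1)^2\ge \frac{1}{2}(2-|a-b|)^2$ to lower bound the loss.
\item Restrict to the set of long words, $|w|\in[L,2L]$ with $L\asymp d$. Under $\rho$ with $p=1/(d-1)$, this set has constant probability of being negative.
\item Under $\rho$, the probability of $w^+$ is comparable to that of $w$ times $p\cdot\frac1d$ factors. Here I would rather use a measure-coupling: positive words with exactly one $s$ and length $m\in[L,2L]$ carry constant mass, and removing the $s$ maps them onto negative words of length $m-1$ with density ratio bounded by constants times $m/d\asymp 1$.
\end{enumerate}
This yields $\mathcal{L}\ge c\,\E[(2-|f(w^+)-f(w)|)_+^2]$ over the coupled pairs.

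The main obstacle is the large-$\|\readout\|$ regime, where the pairwise gap $2\|\readout\|_\infty/(m+1)$ can reach $2$. To handle it, I would exploit variable length. With $\beta_s=B$ large, a positive word with a single $s$ at length $m$ gets prediction roughly $B/m+\text{(rest)}-1$. For this to be close to $+1$ across all $m\in[L,2L]$, it would need to track $1/m$, which differs by a constant factor across that range. This mismatch is exactly what the unbounded/variable length prevents.

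Formally, I would compare lengths $m$ and $2m$ among positive words with one $s$. The negative-word component $\readout^\top\boldsymbol{\gamma}$ of the rest concentrates near a fixed value $\bar b$ by the law of large numbers for the counts. Then the predictions on these two groups are approximately $\bar b+B/m$ and $\bar b+B/(2m)$, and the predictions on negatives are near $\bar b$. Fitting $-1$, $+1$, $+1$ with these three values is impossible up to a constant error. I expect the concentration step to be the most technical part. The reason is that the entries of $\readout$ not at $s$ may also be large and adversarial, so I would bound the variance term via $\mathrm{Var}(\readout^\top\boldsymbol\gamma)$ and absorb it into the loss. The choice $d\ge3$ ensures there are enough symbols that the minimizer $s$ exists with the required properties and $p<1$.
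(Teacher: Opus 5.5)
You have the same ingredients as the paper: the target $s=\arg\min_i\tilde\alpha_i$, the coupling of a negative word $w$ with its one-$s$ insertions $w^+$, and the bound $\gamma_s(w^+)\le 1/(|w|+1)$. With these the bounded-weight case is settled. The gap is the large-weight regime. You leave it open, and the sketch you give for it rests on claims that fail for an arbitrary fixed $\tilde\alpha$.

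Write $\gamma_s(w^+)=1/(1+S(w))$ with $S(w)=\sum_t e^{\tilde\alpha_{w_t}-\tilde\alpha_s}$. All you know is $S(w)\ge |w|$. If a few background symbols carry huge logits, $S(w)$ jumps by orders of magnitude depending on whether they occur. So the predictions on one-$s$ positives are not $\approx \bar b+B/m$. Likewise, $\readout^\top\gamma(w)$ on negatives need not concentrate at all: the attention can sit on one or two tokens, and the background entries of $\readout$ are adversarial. The step ``fit $-1,+1,+1$ with $\bar b,\ \bar b+B/m,\ \bar b+B/(2m)$'' is therefore unjustified. Making it rigorous would need an anti-concentration argument for $S$ across lengths, which you have not supplied.

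The missing idea is that this regime never has to be confronted. Your bound through $\|\readout\|_\infty$ is lossy. The exact identity is $f(w^+)=G(w)(1-\gamma_s(w^+))+\beta_s\gamma_s(w^+)-1$ with $G(w):=\sum_{i\ne s}\beta_i\gamma_i(w)=f(w)+1$, so you only need an upper bound on $\beta_s$. Split on $\beta_s$ alone.

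\textbf{Case $\beta_s\ge d+1$.} Use the words made only of $s$; this is the precise version of your ``moderate share'' remark. On them $\gamma_s=1$ whatever $\tilde\alpha$ is. Their total mass is $1/((d-1)^2+1)$, and each has error $\frac12(\beta_s-2)^2\ge\frac12(d-1)^2$. This already gives a constant loss.

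\textbf{Case $\beta_s<d+1$.} Pair negatives $z$ of length $m\ge d+1$ with their one-insertion counterparts $z'$; the mass ratio is $(m+1)(1-p)/d\ge\frac12$. Then split on $G(z)$. If $G(z)>\frac12$, the word $z$ itself has error $>\frac18$. Otherwise $G(z)(1-\gamma_s(z'))\le\frac12$ and $\beta_s\gamma_s(z')<\frac{d+1}{d+2}$, so $f(z')<\frac12$ and every counterpart has error $>\frac18$.

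The background weights, however large or negative, never need controlling, and no concentration or length comparison is involved. Summing over $m\ge d+1$ gives $\mathcal L_s\ge\frac1{32}(1-\frac2d)^d\ge\frac1{864}$.
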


Proposition~\ref{prop:output_only} concerns training the output over only one attention head. A natural objection is that fixed attention may suffice with enough heads. In response to this, consider the multi-head extension of \eqref{eq:single_head_transformer}:
\begin{equation}\label{eq:multi_head_output}
    f_{\mathcal{A}, \mathcal{B}}(w) = \sum_{\substack{h\in[H]\\i\in[d]}} \beta_i^{(h)} \frac{C_i(w) \exp\cp{\alpha^{(h)}_i}}{\sum_{j=1}^d C_j(w) \exp\cp{\alpha^{(h)}_j}} - 1,
\end{equation}
where $H$ is the number of heads. One can write the attention weights across heads as \(\mathcal A = (\attn^{(1)}, \dots, \attn^{(H)}) \in \mathbb R^{d\times H}\) and the output weights \(\mathcal B = (\readout^{(1)}, \dots, \readout^{(H)}) \in \mathbb R^{d\times H}\). With $H = d$ heads, fixed attention does suffice: if each head $h\in[H]$ are initialized to attend to symbol $h$, i.e., $\attn^{(h)}$ satisfies the configuration in Proposition~\ref{prop:expressivity}, then for any target $s$, when the output weights converge to $\readout^{(s)} = 2 \mathbf e_s$ and $\readout^{(h)} = \mathbf{0}_d$ for $h \neq s$, the transformer achieve arbitrarily small risk. Further, in the limit $H \to \infty$ with random attention weights, training the output layer becomes kernel regression with a deterministic kernel~\citep{HronBSN20}. The next result shows that substantially fewer than $d$ heads, specifically $O(\sqrt{d})$ are not enough.
 
\begin{proposition}[Risk lower bound for output-only training with multiple heads]\label{prop:output_only_multi_heads}
    There is an absolute constant \(c\in(0,1)\), such that when the number of heads \(H\le c\sqrt d\), for any \(d\ge 3\) and fixed attention \(\tilde{\mathcal A}\),
    \[
        \max_{s\in[d]} \min_{\|\mathcal B\|_F \le d^{1/4}} \mathcal L_s(\tilde{\mathcal A}, \mathcal B) \ge c.
    \]
\end{proposition}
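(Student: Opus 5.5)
We reduce the statement to a dimension-counting argument in a Hilbert space of functions of the count vector $C(w)=(C_1(w),\dots,C_d(w))$. For fixed attention $\tilde{\mathcal A}$, the predictor $f_{\tilde{\mathcal A},\mathcal B}+1$ is linear in $\mathcal B$. It lies in the span $V$ of the $Hd$ features
\[
\phi_{h,i}(w)=\frac{C_i(w)e^{\tilde\alpha^{(h)}_i}}{\sum_j C_j(w)e^{\tilde\alpha^{(h)}_j}}.
\]
These satisfy $\sum_i\phi_{h,i}\equiv1$ for each $h$, and $0\le\phi_{h,i}\le1$. Write $g_s:=\chi_s+1=2\cdot\mathbf 1\{s\in w\}$. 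Then $\mathcal L_s=\tfrac12\|\langle\mathcal B,\phi\rangle-g_s\|_{L^2(\rho)}^2$. It therefore suffices to show that for some $s$, every $\mathcal B$ with $\|\mathcal B\|_F\le d^{1/4}$ leaves residual at least $2c$. We will argue by averaging over $s$: it is enough to show $\frac1d\sum_s\min_{\mathcal B}\|\langle\mathcal B,\phi\rangle-g_s\|^2\ge 2c$.

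\textbf{Step 1: the targets are nearly orthogonal and have constant norm.} Put $q_s=\prob(s\in w)=1/2$ and center $\bar g_s=g_s-1$. For $s\ne s'$, inclusion–exclusion with the geometric length law and $p=1/(d-1)$ gives $\prob(s,s'\notin w)=\sum_m p(1-p)^{m-1}(1-2/d)^m$. We will show that this equals $1/4+O(1/d)$. Consequently the Gram matrix $G$ of $\{\bar g_s\}$ satisfies $G=I+E$ with $\|\bar g_s\|^2=1$ and $|E_{ss'}|=O(1/d)$. Since $E$ is $O(1/d)$ times (rank-one plus small) off the diagonal, we expect $\|G\|_{op}=O(1)$. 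More precisely, $\bar g_s$ has covariance $1$ on the diagonal and roughly $-\tfrac{c'}{d}$ off it, so $\lambda_{\max}(G)\le 1+O(1)$.

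\textbf{Step 2: projection bound.} Let $V'=V+\mathrm{span}\{1\}$, which has dimension at most $Hd+1$. Then
\[
\sum_s\|P_{V'}\bar g_s\|^2\le \lambda_{\max}(G)\cdot\dim V'.
\]
This only gives $O(Hd)$, which is too weak, since we need $o(d)$. This is where the norm constraint $\|\mathcal B\|_F\le d^{1/4}$ and the small number of heads must enter, and it is the main obstacle. The approach we would try first is to bound the achievable correlation directly:
\[
\sum_s \max_{\|\mathcal B\|_F\le d^{1/4}}\langle \langle\mathcal B,\phi\rangle,\bar g_s\rangle
\le d^{1/4}\sum_s\|(\langle\phi_{h,i},\bar g_s\rangle)_{h,i}\|_F .
\]
The key estimate is $\sum_{s}\sum_{h,i}\langle\phi_{h,i},\bar g_s\rangle^2\lesssim H$. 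To see why this should hold, fix $h$: the vector $(\phi_{h,i})_i$ is a probability vector, so $\sum_i\phi_{h,i}^2\le1$. Applying Bessel's inequality in the near-orthonormal system $\{\bar g_s\}$ (via Step 1) gives $\sum_s\sum_i\langle\phi_{h,i},\bar g_s\rangle^2\le\lambda_{\max}(G)\,\E\sum_i\phi_{h,i}^2\le O(1)$. Summing over $h$ yields $O(H)$.

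\textbf{Step 3: conclusion.} By Cauchy–Schwarz, $\sum_s\|\cdot\|_F\le\sqrt{d\cdot O(H)}$. The total achievable correlation is thus $O(d^{1/4}\sqrt{dH})=O(d^{3/4}H^{1/2})\le O(c^{1/2}d)$ when $H\le c\sqrt d$. Hence the average over $s$ of the best correlation is $O(\sqrt c)$, while $\|\bar g_s\|^2=1$. Expanding the square, and handling the constant component through the mean since $\sum_i\phi_{h,i}=1$ means the bias is also absorbed in $V$, we get that the average residual is at least $\tfrac12\bigl(1-O(\sqrt c)\bigr)\ge c$ for small $c$. This lower bounds the maximum over $s$. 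The delicate points are the constant offset between $g_s$ and $\bar g_s$ and the verification that $\lambda_{\max}(G)=O(1)$ uniformly in $d\ge3$, both of which follow from the explicit geometric-sum computations.
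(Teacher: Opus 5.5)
\textbf{There is a genuine gap: the near-orthogonality claimed in Step 1 is false, and Steps 2--3 depend on it.} The sum you write down evaluates to
$\prob(s,s'\notin w)=\sum_{m\ge1}p(1-p)^{m-1}(1-2/d)^m=\frac{d-2}{3d-4}\to\frac13$, not $\frac14+O(1/d)$. The slightly negative covariance you have in mind is the fixed-length picture. Mixing over a geometric length of scale $d$ makes the two events strongly positively correlated: short words miss both symbols, long words contain both.

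Hence $\E[\bar g_s\bar g_{s'}]=\frac{d-4}{3d-4}=:r\to\frac13$ for $s\ne s'$. So $G=(1-r)I+r\mathbf 1\mathbf 1^\top$ and $\lambda_{\max}(G)=1+(d-1)r\approx d/3$.

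Your key estimate $\sum_s\sum_{h,i}\langle\phi_{h,i},\bar g_s\rangle^2\lesssim H$ is then false, and so is the Step 3 correlation bound. Take $H=1$ and $\tilde\alpha=Le_1$ with $L\to\infty$. Then $\phi_{1,1}\to\mathbf 1\{1\in w\}=(1+\bar g_1)/2$, so $\langle\phi_{1,1},\bar g_s\rangle\to r/2\approx\frac16$ for every $s\ne1$. The admissible choice $\mathcal B=d^{1/4}e_1$ then gives total correlation of order $d^{5/4}$, not $O(d^{3/4}H^{1/2})$. So ``average residual $\ge 1-O(\sqrt c)$'' does not follow.

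\textbf{The missing step is to remove the common mode before applying Bessel.} Let $u:=\frac1d\sum_s\bar g_s$, which is essentially a function of the number of distinct symbols in $w$, and write $\bar g_s=u+\tilde g_s$.
\begin{itemize}
\item By exchangeability, $\tilde g_s\perp 1,u$, and the Gram matrix of the $\tilde g_s$ is $(1-r)(I-\frac1d\mathbf 1\mathbf 1^\top)$.
\item Hence its largest eigenvalue is at most $\frac65$, and $\|\tilde g_s\|^2=\frac{2(d-1)}{3d-4}\ge\frac23$.
\item Projecting onto $\mathrm{span}\{1,u\}^\perp$ gives $\|F-g_s\|^2\ge\|\tilde g_s\|^2-2\langle F,\tilde g_s\rangle$.
\item Your Bessel and Cauchy--Schwarz steps, applied to $\{\tilde g_s\}$, then give an average loss of at least $\frac13-O(\sqrt c)$.
\end{itemize}

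Once repaired, your averaging argument is valid and genuinely different from the paper's. The paper pigeonholes a single $s$ with $\sum_h\E[(\gamma_s^{(h)})^2]\le H/d$. It then couples each negative word of length in $[\lceil d/2\rceil,d-1]$ with its $s$-inserted positive counterparts. It shows $f$ moves by at most $2\|\mathcal B\|_F\sqrt{H/(dM)}$ on average, while the label moves by $2$.

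The coupling avoids any correlation structure among the targets, which is exactly where your proposal went wrong. The repaired averaging argument buys two things: it shows a constant fraction of targets are hard, and it gives a much larger constant.
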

The norm constraint $\|\mathcal{B}\|_F \leq d^{1/4}$ is a requirement of our proof technique, and we do not know whether it can be removed. It is, however, not restrictive, since every solution we are aware of has a norm that is constant or decays with $H$. Consider the training plan in Theorem~\ref{thm:main_positive} for two cases: if all heads but one are frozen with zero output weights, the single-head solution can be achieved, whose output weights have constant norm \(\|\mathcal B\|_F = 2\). If all $H$ heads are trained from identical zero initialization, they remain identical by symmetry, and the parameter \(\attn^{(h)}\) will converge to satisfy the configuration in Proposition~\ref{prop:expressivity}, while \(\readout^{(h)}\) will converge to \(\frac{2}{H}\mathbf e_s\) for each head \(h\in[H]\). In this case, the output norm is $\|\mathcal{B}\|_F = 2/\sqrt{H}$.

\paragraph{Two-phase training with one-step attention learning.}
Propositions~\ref{prop:attn_only}--\ref{prop:output_only_multi_heads} separate the occurrence language from settings in which training a single layer suffices, whether learning the output layer over a fixed representation~\citep{HronBSN20,ArdaCE26} or learning the features under a fixed output layer~\citep{FreiCB23,han2026,tarzanagh2023}---both approaches lead to the risk bounded away from zero for some target. We next consider a two-phase plan: a single gradient step on the attention weights, followed by training the output layer. This scheme is an adaptation of the one-step gradient feature learning in feed-forward networks, where it provably learns certain target classes~\citep{BaESWWY22, ShiWL23}. As we show next, this 2-phase GD training also cannot learn the occurrence language if one starts from uniform attention $\attn=\mathbf{0}_d$.


\begin{theorem}[Attention-first two-phase GD fails to learn occurrence]
\label{thm:attention-first-fails}
There is absolute constant $c_3 > 0$ such that the following holds for every $d \geq 9$, $\tilde{\readout} \in \mathbb{R}^d$, and $\eta > 0$. 
Suppose the transformer is initialized as $\attn = \mathbf{0}_d$ (uniform attention) and ${\readout}=\tilde{\readout}$, and one gradient step is used to update the attention weights as
    \[
        \attn'_s := -\eta \nabla_{\attn}\mathcal{L}_s(\mathbf{0}_d, \tilde{\readout}),
    \]
where $s\in\Sigma$ is unknown but influences the updated attention $\attn'_s$ through the population risk gradient. Then
    \[
        \max_{s \in [d]}\; \inf_{\readout \in \mathbb{R}^d}\;
        \mathcal{L}_s(\attn'_s, \readout) \;\geq\; c_3,
    \]
    that is, there is a target $s\in\Sigma$ for which the optimal output layer on top of the updated attention $\attn'_s$ cannot achieve an arbitrarily small risk.
\end{theorem}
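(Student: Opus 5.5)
We will compute the gradient at uniform attention in closed form and show that the updated attention $\attn'_s$ has a structure that lets us reuse Proposition~\ref{prop:output_only}. At $\attn=\mathbf 0_d$ we have $\gamma_i(w)=C_i(w)/m$, and differentiating \eqref{eq:single_head_transformer} gives $\partial \gamma_i/\partial\alpha_k = \gamma_i(\mathbf 1\{i=k\}-\gamma_k)$. Hence
\[
\partial_{\alpha_k}\mathcal L_s(\mathbf 0_d,\tilde\readout)=\E\Big[(f(w)-y)\,\gamma_k(w)\big(\tilde\beta_k-\textstyle\sum_i\tilde\beta_i\gamma_i(w)\big)\Big],
\]
with $f(w)=\sum_i\tilde\beta_i C_i/m-1$ and $y=\chi_s(w)$. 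Since $\rho$ is exchangeable in the symbols and only the label depends on $s$, we split $y=\chi_s$ into a part independent of $s$ and a part that depends on $s$ only through $C_s$. All terms without $y$ are the same for every $s$. The $y$-term is $-\E[\chi_s(w)\gamma_k(\tilde\beta_k-\tilde\beta^\top\gamma)]$, a quadratic form in $\tilde\readout$ whose coefficients, by symmetry, take only a few values depending on whether $k=s$, $i=s$, $i=k$.

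The key structural claim is that $\attn'_s=\mathbf a+\eta\,\mathbf b_s$, where $\mathbf a$ does not depend on $s$ and $\mathbf b_s$ is small. More precisely, we want $|(\mathbf b_s)_k|$ for $k\neq s$ to be $O(1/d)$ relative to the terms that do not depend on $s$, and $(\mathbf b_s)_s$ to be controlled. The obstruction is that $\attn'_s$ may genuinely single out $s$: if $\tilde\beta_s$ is large, the gradient in coordinate $s$ could push $\alpha_s$ up. We therefore bound how large the $s$-dependent coordinate shift can be.

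The moments $\E[\chi_s\,C_s/m]$ and $\E[\chi_s\,C_sC_i/m^2]$ under the geometric length distribution with $p=1/(d-1)$ are all $O(1/d)$ or $O(1/d^2)$. For example, $\E[\mathbf 1\{C_s>0\}C_s/m]=\E[C_s/m]=1/d$, and $\E[C_s/m]$ is exactly the same $1/d$ for the unconditional version. So the $s$-specific signal in coordinate $s$ is of order $\tilde\beta_s/d$ plus cross terms. The scheme of the argument is:

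(i) If $|\alpha'_{s,s}-\max_{j\neq s}\alpha'_{s,j}|$ is not large, attention on $s$ is not concentrated. We then argue as in Proposition~\ref{prop:output_only}: there are words containing $s$ once among many other symbols that receive little attention weight on $s$. Conditioned on such events, these positive words are hard to separate from negative words of similar composition under any linear readout $\readout$. This gives a risk of at least a constant.

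(ii) We show case (i) must happen for some $s$. Because $\mathbf a$ is common to all targets, a max over $s$ lets us pick an $s$ for which $a_s$ is not the largest coordinate, and indeed lies below the top coordinate by a margin. Pigeonhole over $d\ge 9$ symbols and a bounded budget of $\tilde\readout$-mass helps here. The $s$-specific boost $\eta(\mathbf b_s)_s$ must then be compared with that margin.

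The main obstacle is step (ii): $\eta$ is arbitrary, so scaling cannot be used to kill $\mathbf b_s$. Instead, we need a scale-invariant statement. The sign and magnitude of the $s$-boost relative to the $s$-independent drift $\mathbf a$ must be such that, for some $s$, attention ends up either not favoring $s$ or favoring it less than the other coordinates dictated by $\tilde\readout$.

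We would try the following. Write $(\mathbf b_s)_s\approx c(\tilde\beta_s-\bar\beta)/d$. Choose $s$ to be an index with $\tilde\beta_s$ at most the median. For that $s$ the boost is nonpositive or tiny, while $\mathbf a$ depends on $\tilde\readout$ only. If the gradient pushes some other symbol's attention up, then large $\eta$ makes the attention concentrate on non-$s$ symbols, which is even worse for recognizing $s$. If $\eta$ is small, we are near uniform attention, and the bound of Proposition~\ref{prop:output_only} at $\tilde\attn\approx \mathbf 0$ applies via continuity. Combining these two regimes with an explicit lower bound uniform in $\eta$ yields $c_3$.
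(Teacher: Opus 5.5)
There is a genuine gap. Your step (ii) is the whole difficulty of the theorem, and the resolution you sketch does not work. Write $\alpha'_{s,k}$ for the $k$-th coordinate of $\attn'_s$. Since $\gamma_s(w)>0$ only on positive words, the per-sample gradient gives $\fracpartial{\mathcal L_s}{\alpha_s}(\mathbf 0_d,\tilde{\readout})=\E[(f(w)-1)\,\gamma_s(\tilde\beta_s-\tilde{\readout}^\top\boldsymbol\gamma)]$. To leading order on long words this is $(\bar\beta-2)(\tilde\beta_s-\bar\beta)/d$, where $\bar\beta$ is the mean of $\tilde{\readout}$.

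\begin{itemize}
\item The sign of the $s$-boost is therefore governed by $(2-\bar\beta)(\tilde\beta_s-\bar\beta)$, not by the rank of $\tilde\beta_s$. For $\bar\beta>2$, every $\tilde\beta_s<\bar\beta$ pushes $\alpha'_{s,s}$ \emph{up}, so your median rule fails. Short words, where $\tilde{\readout}^\top\boldsymbol\gamma$ is far from $\bar\beta$, are not controlled at all.
\item The small-$\eta$ continuity argument gives no constant uniform in $\tilde{\readout}$, because the gradient is quadratic in the unbounded $\tilde{\readout}$. The intermediate range of $\eta$ is not covered either.
\item The quantity in step (i), $\alpha'_{s,s}-\max_{j\ne s}\alpha'_{s,j}$, is the wrong one. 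A single dominant background logit does not stop attention from concentrating on $s$ in words that lack that symbol. What matters is $e^{\alpha'_{s,s}}$ relative to the total weight of the symbols present in a long word.
\item Proposition~\ref{prop:output_only} cannot be reused verbatim, since it needs $\alpha_s=\min_i\alpha_i$.
\end{itemize}

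The missing idea is an exact, scale-free identity. The prefactor $f(w)-1$ in the formula above is the same for every target, and $\sum_s\gamma_s(\tilde\beta_s-\tilde{\readout}^\top\boldsymbol\gamma)=0$ pointwise. Hence $\sum_{s=1}^d\fracpartial{\mathcal L_s}{\alpha_s}(\mathbf 0_d,\tilde{\readout})=0$. So for every $\tilde{\readout}$ and every $\eta>0$, some target $s$ has $\alpha'_{s,s}\le 0$; no decomposition $\mathbf a+\eta\mathbf b_s$ is needed.

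Shift invariance of the softmax gives $\sum_k\fracpartial{\mathcal L_s}{\alpha_k}=0$, hence $\sum_k\alpha'_{s,k}=0$. By AM--GM, $\sum_{k\ne s}e^{\alpha'_{s,k}}\ge d-1$, while $e^{\alpha'_{s,s}}\le 1$. Note the comparison is with the \emph{average} logit, not the maximum.

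From this the paper argues as follows.
\begin{itemize}
\item Take a negative word $z$ of length $m\ge d$ and insert one $s$. With conditional probability at least $1/8$, $\gamma_s(z')<4/d$. Either some $e^{\alpha'_{s,j}}\ge d/2$ and symbol $j$ occurs in $z$, or Paley--Zygmund applies to $\sum_t e^{\alpha'_{s,z_t}}$.
\item Final readouts with $\beta_s>d/4$ are handled separately via words consisting only of $s$.
\item The proof concludes by pairing $z$ with its one-insertion counterparts, as in Proposition~\ref{prop:output_only}.
\end{itemize}
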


The proofs of all results in this section are deferred to
Appendix~\ref{app:proof_prop_attn_only}-\ref{app:proof_attention_first_fails}. Together, these impossibility results show that neither layer can be trained alone to learn the occurrence language, and that taking the attention step first does not help. In Section~\ref{sec:positive_result}, we show that a three-phase training plan (training the output, then the attention, then the output again) does help the transformer to learn the language.
\section{Learning with Three-Phase Training}\label{sec:positive_result}
The GD dynamics in Figure~\ref{fig:overview}b show that output weights separate before the attention weights move. This is in line observations of \citet{BiettiCBJB23,HuangL025} about joint gradient dynamics in other settings. Motivated by this, we study the following three-phase training plan. We initialize $\attn = \readout = \mathbf{0}_d$, which is agnostic to $s$, and train as follows:

\textit{Phase 1:} One GD step on the output with step size $\eta_1$ and frozen attention: $\readout' = -\eta_1 \nabla_{\readout}\mathcal{L}_s(\mathbf{0}_d, \mathbf{0}_d)$.

\textit{Phase 2:} One GD step on the attention weights with step size $\eta_2$ and frozen output: $\attn' = -\eta_2 \nabla_{\attn}\mathcal{L}_s(\mathbf{0}_d, \readout')$.

\textit{Phase 3:} $T$ steps of GD on the output weights with step size $\eta_3$, starting from $\readout'$ with the attention fixed at $\attn'$. Formally, $\readout(0) = \readout'$ and $\readout(t+1) = \readout(t) - \eta_3 \nabla_{\readout} \mathcal{L}_s(\attn', \readout(t))$ for $t = 0, \dots, T-1$. We denote the resulting output weights by $\readout'' := \readout(T)$.

The following theorem shows that this training plan learns the occurrence language for every target symbol up to any specified arbitrarily small risk bound.

\begin{theorem}[Three-phase training learns the occurrence language]\label{thm:main_positive}
    For every \(d\ge 3\), every target symbol \(s\in[d]\), and every \(\varepsilon\in(0,1)\), when the step sizes and steps satisfy: \(0 < \eta_1<\frac{d-1}{2},\; \eta_2 > \frac{20(d-1)^2}{\eta_1}\ln\frac{8d}{\varepsilon}, \; \eta_3 = 1,\;T \geq 4d\ln\frac{16d}{\varepsilon}\), then the parameters $(\attn', \readout '')$ obtained by the above 3-phase training plan satisfy:
    \[
        \mathcal{L}_s(\attn', \readout'') \le \varepsilon.
    \]
\end{theorem}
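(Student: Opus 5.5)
We follow the three phases in order, computing each update in closed form from the symmetry of $\rho$, and then treat Phase 3 as a convex quadratic problem whose minimizer can be compared with the configuration of Proposition~\ref{prop:expressivity}.

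\emph{Phase 1.} At $\attn=\readout=\mathbf{0}_d$ the model outputs $f\equiv -1$, so the residual is $f-y=-1-y$. This equals $-2$ on positive words and $0$ on negative words. With uniform attention, $\gamma_i(w)=C_i(w)/|w|$, so
\[
\nabla_{\readout}\mathcal L_s=-2\,\E\cb{\mathbf 1\{s\in w\}\,C_i(w)/|w|}_{i\in[d]} .
\]
By symmetry among the symbols $j\neq s$, the gradient has the form $\readout'=\eta_1(a\,\mathbf e_s+b\,\mathbf 1_{-s})$. We will show $a>b>0$ and compute both explicitly under the geometric length law with $p=1/(d-1)$. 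For instance, $a=2\E[C_s/|w|]=2/d$. For $b$ we use $\sum_i \E[\mathbf 1\{s\in w\}C_i/|w|]=\prob(s\in w)=1/2$, which gives $b=(1-2/d)/(d-1)\cdot\ldots$ after normalization. The key quantity is the gap $a-b$, which should be of order $1/d$. The condition $\eta_1<(d-1)/2$ keeps $\beta'_i<2$, so that the sign of the residual in Phase 2 is controlled.

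\emph{Phase 2.} Next we compute $\nabla_{\attn}\mathcal L_s(\mathbf 0,\readout')$. At $\attn=\mathbf 0$,
\[
\partial_{\alpha_i}f=\gamma_i\bigl(\beta'_i-\textstyle\sum_j\gamma_j\beta'_j\bigr).
\]
With $\readout'$ of the above form this equals $\eta_1(a-b)\,\gamma_i(\mathbf 1\{i=s\}-\gamma_s)$. Hence
\[
\partial_{\alpha_i}\mathcal L=\eta_1(a-b)\,\E\cb{(f-y)\,\gamma_i(\mathbf 1\{i=s\}-\gamma_s)} .
\]
By symmetry $\attn'$ again has the form $\alpha'_s\mathbf e_s+\alpha'_{o}\mathbf 1_{-s}$, and only the gap $\Delta:=\alpha'_s-\alpha'_o$ matters, since softmax is shift-invariant. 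The residual is $f-y=\eta_1(b+(a-b)\gamma_s)-1-y$. On negative words $\gamma_s=0$ and $f-y=\eta_1 b>0$. On positive words $f-y=\eta_1(\ldots)-2<0$, using $\eta_1<(d-1)/2$. Both cases push $\alpha_s$ up relative to the others. We will lower bound the gradient gap by $c\,\eta_1/(d-1)^2$ (note $(a-b)\sim 1/d$ and $\E[\gamma_s(1-\gamma_s)\mathbf 1\{s\in w\}]\gtrsim 1/d$), giving $\Delta\ge \eta_2\eta_1/(20(d-1)^2)\ge\ln(8d/\varepsilon)$ under the hypothesis on $\eta_2$. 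The main obstacle is this step: obtaining an explicit constant-accurate lower bound on expectations of the form $\E[\gamma_s(1-\gamma_s)\mathbf 1\{s\in w\}]$ over unbounded geometric lengths. We plan to restrict to the event $C_s=1$ with $|w|\le d$, which has probability $\Omega(1)$, where $\gamma_s\in[1/d,1/2]$ is controlled by hand.

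\emph{Phase 3.} With $\attn'$ fixed, $\mathcal L_s(\attn',\cdot)$ is a convex quadratic $\tfrac12\readout^\top M\readout-\readout^\top v+c$ with $M=\E[\gamma\gamma^\top]$. Proposition~\ref{prop:expressivity}, in its gap form with $\Delta\ge\ln(8d/\varepsilon)$, gives $\mathcal L_s(\attn',2\mathbf e_s)\le\varepsilon/2$ (adjusting the constant $4d/\varepsilon^2$ versus $8d/\varepsilon$ as needed). Hence the minimum over $\readout$ is at most $\varepsilon/2$. Since $\gamma$ lies in the simplex, $\lambda_{\max}(M)\le 1$, so $\eta_3=1$ is a valid step size. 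We restrict to the two-dimensional invariant subspace spanned by $\mathbf e_s$ and $\mathbf 1_{-s}$, which is preserved by symmetry from $\readout'$. On this subspace we lower bound the smallest eigenvalue of $M$ by $\gtrsim 1/(2d)$, using $\E[\gamma_s^2]\gtrsim 1/2$ and the spread of the non-$s$ mass. Standard GD on a quadratic then gives
\[
\mathcal L(\readout(T))-\mathcal L^*\le (1-1/(4d))^{T}\bigl(\mathcal L(\readout')-\mathcal L^*\bigr)\le 2e^{-T/(4d)}.
\]
This is at most $\varepsilon/2$ for $T\ge 4d\ln(16d/\varepsilon)$, up to a bounded initial gap, which finishes the proof.
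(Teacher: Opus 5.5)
\textbf{Phases 1--2.} These follow the paper's proof essentially step for step. You have the same closed form $\readout'=\eta_1\bigl(\tfrac2d\mathbf e_s+\tfrac{1-2/d}{d-1}\mathbf 1_{-s}\bigr)$ with output gap $\eta_1/(d-1)$. You have the same factorisation $\partial_{\alpha_i}f=\eta_1(a-b)\gamma_i(\mathbf 1\{i=s\}-\gamma_s)$, and the same target $\Delta\ge\eta_1\eta_2/(20(d-1)^2)$. Your restriction to $\{C_s=1,\ 2\le|w|\le d\}$ replaces the paper's exact moments \eqref{eq:positive_mu1}--\eqref{eq:positive_mu2}. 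It does give $d\,\E[\gamma_s(1-\gamma_s)]\ge 1/20$ for all $d\ge 3$. Bounding $\gamma_s(1-\gamma_s)\ge (d-1)/d^2$ on that event gives about $0.11$ at $d=3$ and about $0.15$ for large $d$. Two small points:
\begin{itemize}
\item You need $y-f\ge 1$ on positive words. This requires $\beta_i'<1$, which is what $\eta_1<\tfrac{d-1}{2}$ actually gives; $\beta_i'<2$ only fixes the sign.
\item Negative words do not ``push'' anything. They contribute zero to every $\partial_{\alpha_i}\mathcal L$, since each term carries a factor $\gamma_s=0$.
\end{itemize}

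\textbf{The gap: Phase 3's appeal to Proposition~\ref{prop:expressivity}.} As written, this step fails. To get risk $\le\varepsilon/2$ from that proposition you need $\Delta\ge\ln(16d/\varepsilon^2)$. For every $\varepsilon<1$ this is strictly larger than the $\ln(8d/\varepsilon)$ that the hypothesis on $\eta_2$ guarantees. The discrepancy is in the power of $\varepsilon$, not a constant, so you cannot ``adjust as needed'': the theorem fixes the threshold on $\eta_2$. The proposition needs $\varepsilon^{-2}$ because its proof uses a worst-case bound over all words of length up to $2d/\varepsilon$.

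Replace it with the averaged bound the paper uses:
\begin{itemize}
\item $\mathcal L_s(\attn',2\mathbf e_s)=\E[(1-\gamma_s')^2\mid y=1]\le\E[1-\gamma_s'\mid y=1]$.
\item For a positive word of length $m$, $1-\gamma_s'(w)\le (m-1)e^{-\Delta}$.
\item $\E[m-1\mid y=1]\le 2\,\E[m-1]=2(d-2)$.
\end{itemize}
Hence the risk at $2\mathbf e_s$ is at most $2d\,e^{-\Delta}\le\varepsilon/4$.

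\textbf{How your Phase 3 then compares.} With this repair it is a valid and slightly simpler variant of the paper's argument. The paper computes the minimiser on $\mathrm{span}\{\mathbf e_s,\mathbf 1_{-s}\}$ explicitly (risk $<2\delta$) and contracts $\|\readout(t)-\readout^\star\|^2$ from an initial $16d$. You instead compare with the fixed reference $2\mathbf e_s$, as the paper itself does in the finite-sample corollary. You also contract the loss gap from an initial value at most $2$, which holds because $\beta_i'\in[0,1)$ forces $f\in[-1,0)$.

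You still need an explicit $\lambda_{\min}\ge 1/(8d)$ on the two-dimensional subspace, not the heuristic $\gtrsim 1/(2d)$. The paper gets this from $\det\tilde H\ge(1-2\delta)/(4(d-1))$ and $\mathrm{tr}\,\tilde H\le 1$, using that the off-diagonal entry is at most $\delta/(2\sqrt{d-1})$. The squared contraction $(1-\tfrac1{8d})^{2T}\le e^{-T/(4d)}$ then yields your final estimate.
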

We present the complete proof in Appendix~\ref{app:proof_thm_4}, and discuss three implications of the result below. 

\begin{figure}[t]
    \centering
    \includegraphics[width=\linewidth]{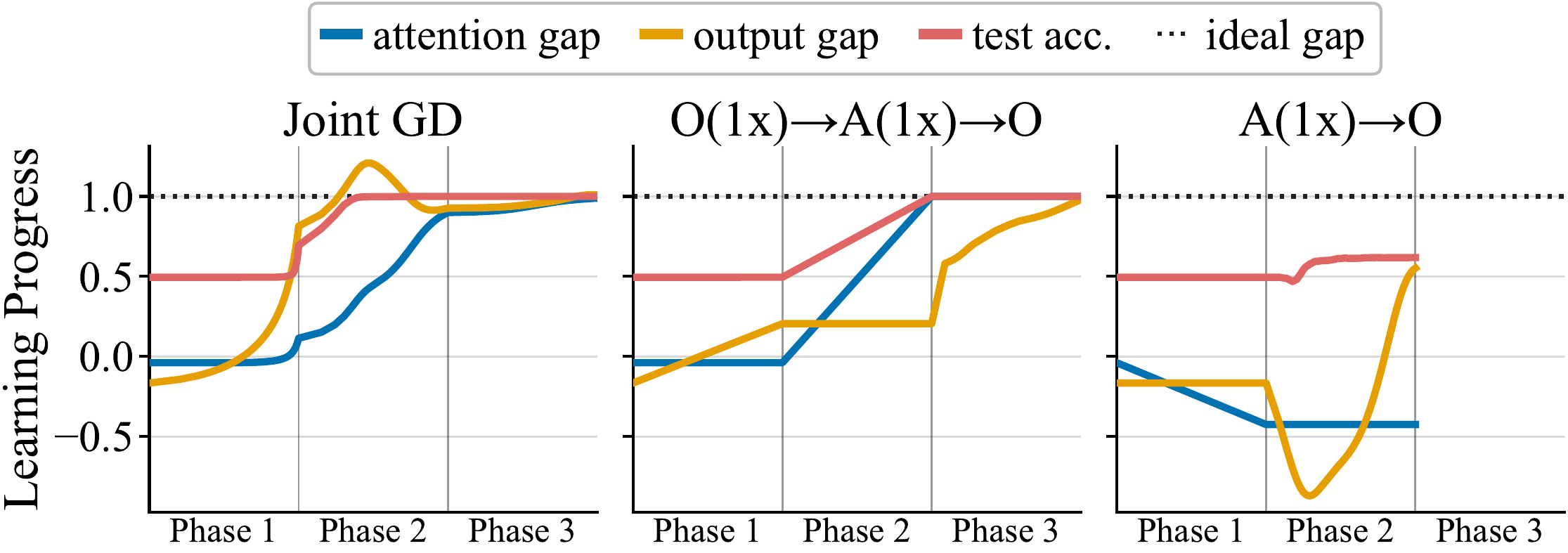}
    \caption{\textbf{Joint GD (left), O-A-O training (middle), and A-O training (right).} Attention gap, output gap, and test accuracy for $d = 50$ are shown with \(5d^2\) training and \(d^2\) test samples. Joint GD has no phases; for comparison, we start its Phase 2 when the attention starts to move (gap = \(0.1\), at step \(31\)) and end its Phase 2 when the attention almost saturates (gap = \(0.9\), at step \(127\)). In the 3-phase plan, the test accuracy reaches $1$ by the end of Phase 2. Phase 3 only reduces training loss by fine-tuning output layer.}
    \label{fig:gd_comparison}
\end{figure}

\textbf{The role of each phase.}
Figure~\ref{fig:gd_comparison} compares joint GD, in which both parameters are updated at every step, with the three-phase plan and the attention-first plan. Phase 1 cannot be omitted, otherwise the plan is the attention-first schedule of Theorem~\ref{thm:attention-first-fails}, in which the attention step receives no guidance from the output layer and may not move towards the target (Figure~\ref{fig:gd_comparison}c). Phase 1 also mirrors the early stage of joint GD, where the output separates while the attention weights barely move (Figure~\ref{fig:gd_comparison}a). 
Phase 2 then saturates the attention on the target with a single large step, and Phase 3 trains the output weights towards $2\mathbf{e}_s$.

The three-phase training plan differs from joint GD in two ways. First, in joint GD, the output weight on the target symbol can overshoot and then decrease while the attention saturates; freezing the output in Phase 2 removes this effect. Second, in joint GD, attention saturation and output weight convergence occur simultaneously. Our analysis, based on decoupling the two layers, shows that these aspects of joint GD are not necessary for learning. However, our analysis cannot explain this specific behavior of joint GD, which requires analysis of joint training dynamics.

\textbf{Initialization} Although zero initialization ($\attn = \readout = \mathbf{0}_d$)  is agnostic to $s$, it is not standard in practice. We adopt this to simplify the theoretical analysis, as also done in~\citet{tarzanagh2023}. Empirically, the same behavior occurs under random initialization (Table~\ref{tab:complete_experiments}). 


\textbf{Finite-Sample GD analysis.} The population gradient is not accessible in practice. We extend Theorem \ref{thm:main_positive} to finite-sample GD to show that $O(d\ln d + (\ln d)/\varepsilon)$ samples suffice to reach a population risk of at most \(\varepsilon\).


\begin{corollary}[Three-phase training with finite samples]\label{thm:fbGD}
    Consider the same initialization \(\attn = \readout = \mathbf 0_d\), and the three-phase plan where the gradients of Phases 1 and 2 are computed on the empirical risk of $N_1$ i.i.d.\ samples from $D$, and the gradients of Phase~3 on the empirical risk of $N_3$ fresh i.i.d.\ samples. For every $c > 0$, there exists a constant $C > 0$ such that the following holds for every $d \geq 3$, every target symbol $s \in [d]$, and every $\varepsilon \in (0, 1)$. If \(0<\eta_1 \leq \sqrt{d}, \; \eta_2 \geq \frac{C d^2}{\eta_1}\ln\frac{d}{\varepsilon}, \; \eta_3 = 1, \; T = \left\lceil \frac{72}{\varepsilon} \right\rceil\), and \(N_1 \geq C d \ln d, \; N_3 \geq \frac{C \ln d}{\varepsilon}\),
    then with probability at least $1 - 3d^{-c}$ over the two draws, the resulting parameters $(\hat{\attn}', \hat{\readout}'')$ achieve
   $\mathcal{L}_s\cp{\hat{\attn}', \hat{\readout}''} \le \varepsilon$.
\end{corollary}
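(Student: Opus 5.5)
The plan is to mirror the population proof of Theorem~\ref{thm:main_positive}, replacing each population gradient by its empirical counterpart, and to control the deviations with concentration. The population proof has three checkpoints. After Phase~1, $\readout'$ has a strictly larger coordinate on $s$ than on all other symbols, which by symmetry are equal. After Phase~2, $\alpha'_s - \max_{j\ne s}\alpha'_j$ is large enough for Proposition~\ref{prop:expressivity}. In Phase~3, GD on a strongly convex quadratic in $\readout$ converges. I would establish the empirical analogue of each checkpoint in turn, with a failure probability of at most $d^{-c}$ per checkpoint, which gives the $3d^{-c}$ bound.

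\textbf{Phase~1.} At $\attn=\readout=\mathbf 0_d$ we have $f\equiv -1$ and $\gamma_i(w)=C_i(w)/|w|$. Hence
\[
\nabla_{\readout}\hat{\mathcal L}=-\frac1{N_1}\sum_n (y_n+1)\,C(w_n)/|w_n|.
\]
Each coordinate is an average of bounded variables in $[0,2]$. In the population, coordinate $s$ exceeds the others by a gap of order $1/d$, while every coordinate has mean of order $1/d$ and variance of order $1/d$. Bernstein's inequality together with a union bound over the $d$ coordinates then gives deviations of order $\sqrt{\ln d/(dN_1)}+\ln d/N_1$. With $N_1\ge Cd\ln d$, this is a small fraction of the $1/d$ gap. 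So $\hat\beta'_s-\max_{j\ne s}\hat\beta'_j\gtrsim \eta_1/d$. The condition $\eta_1\le\sqrt d$ keeps $\|\hat\readout'\|$ small, so the residual $f+1-y$ stays of constant sign structure in Phase~2.

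\textbf{Phase~2.} Here I would write
\[
\partial_{\alpha_i}\hat{\mathcal L}(\mathbf 0,\hat\readout')=\frac1{N_1}\sum_n (f(w_n)-y_n)\,\gamma_i(w_n)\big(\hat\beta'_i-\bar\beta(w_n)\big),
\qquad \bar\beta(w)=\sum_j\gamma_j\hat\beta'_j .
\]
This is linear in $\hat\readout'$. I would decompose $\hat\readout'=b\mathbf 1+g\mathbf e_s+\delta$, where $\delta$ is the Phase~1 fluctuation. The $b\mathbf 1$ part contributes nothing, since softmax is shift invariant. The $g\mathbf e_s$ part produces the population-type signal: $-\partial_{\alpha_s}$ exceeds $-\partial_{\alpha_j}$ by order $g/d^2$. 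The $\delta$ part is controlled by $\|\delta\|_\infty$ times an $O(1/d)$ factor. The resampling error of the Phase~2 gradient, which reuses the same $N_1$ samples, is handled by the same Bernstein-plus-union argument; I would condition on $\hat\readout'$ and bound uniformly over $\readout$ in a small ball, or simply bound each term linearly in $\readout$. This yields an attention gap of at least $\eta_2\,\eta_1/(C'd^2)$, and the choice of $\eta_2$ makes it at least $\ln(4d\cdot\mathrm{poly}/\varepsilon)$. This step is the main obstacle: the signal is second order ($\eta_1/d$ times $1/d$), so I must check that the $\delta$ cross-terms, of order $\sqrt{\ln d/(dN_1)}\cdot\eta_1/d$, and the resampling noise are both dominated. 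If the crude bound is not tight enough, I would exploit that $\delta$ enters through symmetric averages to gain an extra $1/\sqrt d$.

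\textbf{Phase~3.} With the attention fixed at $\hat\attn'$, the empirical risk is the quadratic $\frac1{2N_3}\sum_n(\readout^\top\gamma(w_n)-1-y_n)^2$, with Hessian $\hat\Sigma=\frac1{N_3}\sum_n\gamma\gamma^\top$. Since $\gamma(w)\approx\mathbf e_s$ whenever $s\in w$, and otherwise it is the nearly uniform vector $C(w)/|w|$, I would not rely on strong convexity. Instead I would use the standard rate for GD on a smooth convex quadratic, with $\|\gamma\|\le1$ and $\eta_3=1$:
\[
\hat{\mathcal L}(\readout(T))-\hat{\mathcal L}(\readout^\star)\le \|\readout^\star-\readout(0)\|^2/(2T),
\]
taking the comparator $\readout^\star=2\mathbf e_s$, whose norm is bounded. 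With $T=\lceil 72/\varepsilon\rceil$ this contributes $\varepsilon/4$. Proposition~\ref{prop:expressivity} at the attained gap gives $\mathcal L(\hat\attn',2\mathbf e_s)\le\varepsilon/4$, and Bernstein on $N_3$ fresh bounded samples carries this over to the empirical risk. Finally, to pass from empirical back to population risk at $\readout(T)$, I would use a uniform deviation bound for the quadratic loss over $\{\|\readout\|\le R\}$, where $R$ is bounded because GD on a convex quadratic never leaves the ball of radius $\|\readout(0)-\readout^\star\|$ around $\readout^\star$. The relevant complexity depends on $\gamma$ essentially only through the two regimes "$s\in w$" and "$s\notin w$", which makes the effective dimension small and gives $N_3\gtrsim \ln d/\varepsilon$ via a fast-rate (localized) bound for the square loss.
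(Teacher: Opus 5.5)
Your Phase 1 and the optimization half of Phase 3 match what the paper does. That half is GD on the convex, $1$-smooth empirical quadratic, compared against the reference point $z=2\mathbf e_s$ rather than a minimizer. The genuine gap is Phase 2, which you flag but do not resolve.

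First, the per-sample term $(f(w)-y)\gamma_i(\hat\beta'_i-\bar\beta(w))$ is not linear in $\hat{\readout}'$. The residual $f(w)-y=(\hat{\readout}')^\top\boldsymbol\gamma(w)-1-y$ also depends on it, so the $b\mathbf 1$ component still enters and a linear term-by-term concentration does not apply.

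Second, your remedies for reusing the $N_1$ samples do not work as stated. You cannot condition on $\hat{\readout}'$, since it is a function of those same samples. The obvious net argument over a $d$-dimensional ball multiplies the Bernstein range term by $d$. That gives deviations of order $(\eta_1/d)\cdot d/N_1=\eta_1/N_1$, which exceeds the signal $\eta_1/d^2$ unless $N_1\gtrsim d^2$.

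The missing idea is never to concentrate anything that depends on $\hat{\readout}'$. On the Phase 1 event, $\hat\beta'_s\le 5/4$, $\hat\beta'_s-\max_{j\ne s}\hat\beta'_j\ge\frac{199}{200}\frac{\eta_1}{d}$, and the background spread is at most $\frac{\eta_1}{200d}$. Write $-T_i$ for the per-sample $\alpha_i$-gradient. Then, sample by sample, $T_s\ge\frac34\cdot\frac{199}{200}\frac{\eta_1}{d}\,\gamma_s(1-\gamma_s)$, using $y-f\ge\frac34$ on positive words and $\gamma_s=0$ on negative ones. Likewise $T_j\le\frac{\eta_1}{100d}\,\gamma_j$. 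Only the parameter-free averages $\widehat{\gamma_s(1-\gamma_s)}\ge\frac{1}{10d}$ and $\hat\gamma_j\le\frac2d$ then need to concentrate. They go into the Phase 1 event and give $\hat\alpha'_s-\max_{j\ne s}\hat\alpha'_j\ge\eta_1\eta_2/(20d^2)$.

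Your empirical-to-population transfer in Phase 3 is also unjustified. It is not true that $\boldsymbol\gamma$ lives in essentially two regimes. On negative words it ranges over the background simplex, so $H=\E[\boldsymbol\gamma\boldsymbol\gamma^\top]$ is genuinely $d$-dimensional; in the symmetric case it has $d-2$ eigenvalues of order $\ln d/d^2$. For $\varepsilon$ below that scale, the effective dimension at the target accuracy is of order $d$, and a two-sided localized bound costs a factor $d$, not $\ln d$.

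Also, GD need not stay in the ball of radius $\|\readout(0)-z\|$ around $z$, because $z$ is not an empirical minimizer. Let $\kappa=(2\gamma_s-1-y)^2$. You only get $\|\readout(T)-z\|^2\le\|\readout(0)-z\|^2+2T\hat{\mathcal L}(z)$, which is $O(1)$ because $T\hat\kappa$ is small.

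What makes $N_3\ge C\ln d/\varepsilon$ enough is a one-sided bound. Let $\hat H$ be the empirical second-moment matrix of $\boldsymbol\gamma$ on the fresh batch. Matrix Chernoff, using only $\|\boldsymbol\gamma\|\le 1$, gives $H\preceq 2\hat H+\lambda I$ with $\lambda=16(c+1)\ln d/N_3$. Combine this with $\mathcal L(\readout)\le(\readout-z)^\top H(\readout-z)+\E[\kappa]$ and $(\readout-z)^\top\hat H(\readout-z)\le 4\hat{\mathcal L}(\readout)+2\hat\kappa$. This yields $\mathcal L(\readout(T))\le 8\hat{\mathcal L}(\readout(T))+4\hat\kappa+\lambda\|\readout(T)-z\|^2+\E[\kappa]$, so no two-sided uniform convergence is needed.
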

The proof is deferred to Appendix~\ref{app:proof_cor_1}. Note that Phase 3 uses a batch of samples independent of those used in Phases 1 and 2. Redrawing samples between training stages is a standard approach in layer-wise training: \citet{BaESWWY22} finetune the second layer with a new set of samples after one GD step on the first layer, \citet{NichaniDL23} also adopt similar sample splitting technique in their layer-wise training plan.

In contrast to the one- or two-phase training plans, the three-phase plan learns the occurrence language for every target symbol with polynomially many samples.



\section{Experiments}\label{sec:experiments}
We empirically examine if our three-phase plan learns other language, regression, and in-context learning tasks.
We present the setup and discuss the results.

\textbf{Tasks.} We consider six tasks: (i) occurrence; (ii) \texttt{OR}, where a string is positive if it contains at least one of two unknown target symbols; (iii) \texttt{AND}, where a string is positive if it contains both of two unknown target symbols; (iv) \texttt{even pairs}, the language over the binary alphabet $\{a, b\}$ of strings where $ab$ and $ba$ occur even number of times~\citep{HuangL025}; (v) in-context bigram learning (\texttt{ICL bigram}), a simplified version of \citet{BiettiCBJB23} where the first attention layer is fixed, so that each position locks onto its preceding token; and (vi) single-location regression (\texttt{SLR})~\citep{MarionBBB25}. For occurrence and \texttt{OR}, we use the single-head model in~\eqref{eq:single_head_transformer}, and for \texttt{AND} its multi-head extension~\eqref{eq:multi_head_output} with trainable bias and four heads, since one head cannot express it. For others, we use one-layer, single-head models with similar parameters to our attention and output (see Appendix~\ref{app:experiment_setup}).

\textbf{Training plans.}
We compare joint GD, as a baseline, with five other training plans: (i) attention only (A), corresponds to Proposition~\ref{prop:attn_only}; (ii) output only (O), corresponds to Proposition~\ref{prop:output_only}; (iii) A(1x)$\to$O, matches the plan in Theorem~\ref{thm:attention-first-fails}; (iv): O(1x)$\to$A(1x)$\to$O, following the phases in Theorem~\ref{thm:main_positive}; and (v) O(1x)$\to$A(1x): one output step followed by one attention step with a larger step size. The last plan isolates the role of the final output phase. All plans are run from a random initialization, with O(1x)$\to$A(1x)$\to$O additionally from the fixed initialization in Section~\ref{sec:positive_result}.

\begin{table*}[t]
\centering
\caption{\textbf{Train loss and test accuracy (loss for \texttt{SLR}), with mean over $50$ seeds}. Standard deviations are reported in Appendix~\ref{app:table_with_std}. Values below \(10^{-3}\) are reported with their upper bounds; those that match or surpass the joint GD baseline are bold. For \texttt{AND}, all attention-training phases have multiple steps instead of one.}\label{tab:complete_experiments}
\label{tab:complete_experiments}
\setlength{\tabcolsep}{4pt}
\resizebox{\textwidth}{!}{%
\begin{tabular}{@{}llcccccccccccc@{}}
\toprule
\multicolumn{2}{@{}c}{\multirow{2}{*}[-1.2ex]{\textbf{Training plan}}} & \multicolumn{2}{c}{\textbf{Occurrence}} & \multicolumn{2}{c}{\textbf{OR}} & \multicolumn{2}{c}{\textbf{AND}} & \multicolumn{2}{c}{\textbf{Even-Pair}} & \multicolumn{2}{c}{\textbf{ICL bigram}} & \multicolumn{2}{c}{\textbf{SLR}} \\
\cmidrule(lr){3-4}\cmidrule(lr){5-6}\cmidrule(lr){7-8}\cmidrule(lr){9-10}\cmidrule(lr){11-12}\cmidrule(lr){13-14}
 & & \hd{Train}{Loss} & \hd{Test}{Acc.} & \hd{Train}{Loss} & \hd{Test}{Acc.} & \hd{Train}{Loss} & \hd{Test}{Acc.} & \hd{Train}{Loss} & \hd{Test}{Acc.} & \hd{Train}{Loss} & \hd{Test}{Acc.} & \hd{Train}{Loss} & \hd{Test}{Loss} \\
\midrule
Coupled with rand init.& Joint GD & \(10^{-3}\) & \(1.00\) & \(10^{-3}\) & \(1.00\) & \(0.04\) & \(0.99\) & \(0.01\) & \(1.00\) & \(10^{-3}\) & \(1.00\) & \(0.01\) & \(0.01\) \\
\midrule
\multirow{5}{*}{\shortstack[l]{Decoupled 1/2/3-phase\\with random init.}}
 & Attention (A) only & \(1.64\) & \(0.50\) & \(1.67\) & \(0.50\) & \(0.24\) & 0.94 & \(0.69\) & \(0.51\) & \(3.90\) & \(0.02\) & \(0.50\) & \(0.50\) \\
 & Output (O) only & \(0.74\) & \(0.86\) & \(0.68\) & \(0.85\) & \(0.79\) & \(0.75\) & \(0.63\) & \(0.50\) & \(2.70\) & \(0.16\) & \(0.48\) & \(0.48\) \\
 & A(1x)$\to$O & \(0.72\) & \(0.76\) & \(0.69\) & \(0.75\) & \(0.30\) & \(0.91\) & \(0.62\) & \(0.50\) & \(2.58\) & \(0.26\) & \(0.41\) & \(0.43\) \\
 & O(1x)$\to$A(1x) & \(0.47\) & \good{1.00} & \(0.64\) & \(0.78\) & \(0.52\) & \(0.87\) & \(0.52\) & \(0.50\) & \(3.20\) & \good{1.00} & \(0.28\) & \(0.28\) \\
 & O(1x)$\to$A(1x)$\to$O & \good{10^{-5}} & \good{1.00} & \good{10^{-5}} & \good{1.00} & \(0.27\) & 0.93 & \good{0.08} & \good{1.00} & \good{10^{-3}} & \good{1.00} & \good{0.01} & \good{0.01} \\
\midrule
Sec \ref{sec:positive_result}: 3-phase, zero init.& O(1x)$\to$A(1x)$\to$O & \good{10^{-8}} & \good{1.00} & \good{10^{-7}} & \good{1.00} & 0.90 & 0.68 & \good{0.08} & \good{1.00} & \good{10^{-3}} & \good{1.00} & \good{0.01} & \good{0.01} \\
\bottomrule
\end{tabular}%
}
\end{table*}

We use $d = 50$, $N = 5d^2$ training samples, and $d^2$ samples for testing, with the following two exceptions: for \texttt{even pairs}, we train and test on $N = 2046$ strings of length at most $10$~\citep{HuangL025}; for \texttt{SLR}, we fix input length $L = 10$~\citep{MarionBBB25}. All tasks are trained with Mean Squared Error (MSE), except \texttt{even pairs} and \texttt{ICL bigram}, which use the logistic and cross-entropy loss. Other setups can be found in Appendix~\ref{app:experiment_setup}. We report the train loss and the test accuracy (test loss for \texttt{SLR}) as mean over $50$ seeds.

\paragraph{Results.}
Table~\ref{tab:complete_experiments} shows the performance of various training plans and Figure~\ref{fig:trajectory_complete} visualizes corresponding learning progress. We make the following observations.

\paragraph{Which training plans learn each task?}
Joint GD learns all six tasks, with accuracy at least $0.99$ on the five classification tasks and test loss $0.01$ on \texttt{SLR}. The three-phase plan matches it on five of them, from both the random and the zero initialization: it reaches accuracy $1.00$ on occurrence, \texttt{OR}, \texttt{even pairs}, and \texttt{ICL bigram}, and test loss $0.01$ on \texttt{SLR}. No one- or two-phase plan achieves this across the five tasks.

The O(1x)$\to$A(1x) plan shows the contribution of the final phase. On occurrence and \texttt{ICL bigram}, it reaches perfect accuracy, but the training loss remains high. On these tasks, two phases suffice to classify, and the third is needed to reduce the loss. On the other tasks, two phases are not enough, even for classification.

\texttt{AND} is the exception. From the random initialization, the three-phase plan only reaches an accuracy of $0.93$, below joint GD ($0.99$), and attention-only training matches it ($0.94$). The three-phase plan with zero initialization performs worse than with random initialization, since all heads remain identical throughout training and cannot express the language. We return to further discussions of \texttt{AND} in Section~\ref{sec:discussion}.

\paragraph{Learning progress of joint GD and the phased plans.}
The final train loss and test accuracy / loss show that the three-phase plan can learn other formal languages, but they do not show how the output and attention weights evolve during the learning process. We therefore ask two further questions. \emph{First, does joint GD exhibit the same order of learning on these tasks as on the occurrence language, with the output weights separating before the attention weights move? Second, how closely do the trajectories of different phased plans follow that of joint GD?} 

Figure~\ref{fig:trajectory_complete} addresses both questions. First, on \texttt{AND}, \texttt{ICL bigram}, and \texttt{SLR}, joint GD follows the same order as on occurrence and \texttt{OR}: the attention gap stays near zero while the output gap grows, and only begins to rise when the output separates significantly. \texttt{even pairs} is slightly different: under joint GD, the output only leads at a very short period, and the attention grows as fast as the output during the remaining training time, and the attention saturates before the output does. However, this still satisfies the principle that attention lags output learning at the start of learning.

For the second question, the three-phase plan yields the same final gaps as joint GD on \texttt{even pairs}, \texttt{ICL bigram}, and \texttt{SLR}. On \texttt{ICL bigram} and \texttt{SLR}, the trajectories follow a similar path as joint GD. On \texttt{even pairs}, the paths differ: the first output step of the three-phase plan already brings the gap close to its ideal value, whereas under joint GD the output gap grows slowly throughout. On \texttt{AND}, joint GD overshoots the ideal output gap while the attention saturates, as on the occurrence language, whereas the output gap of the three-phase plan remains below the ideal value, consistent with its lower accuracy in Table~\ref{tab:complete_experiments}.

The other plans show how the order matters. In A(1x)$\to$O, the attention step leaves the attention gap near zero on \texttt{ICL bigram} and \texttt{SLR}, and the subsequent output training cannot compensate. On \texttt{AND}, in contrast, the attention gap grows even when the attention is trained first or alone, possibly because the anti-symmetric initialization ensures that some heads favor a target symbol from the start. Finally, on \texttt{even pairs}, \texttt{ICL bigram} and \texttt{SLR}, output-only training aligns the output with the ideal direction, yet Table~\ref{tab:complete_experiments} shows that it does not learn the task: correct output alone is insufficient without correct attention.

\begin{figure}[htb]
    \centering
    \includegraphics[width=\linewidth]{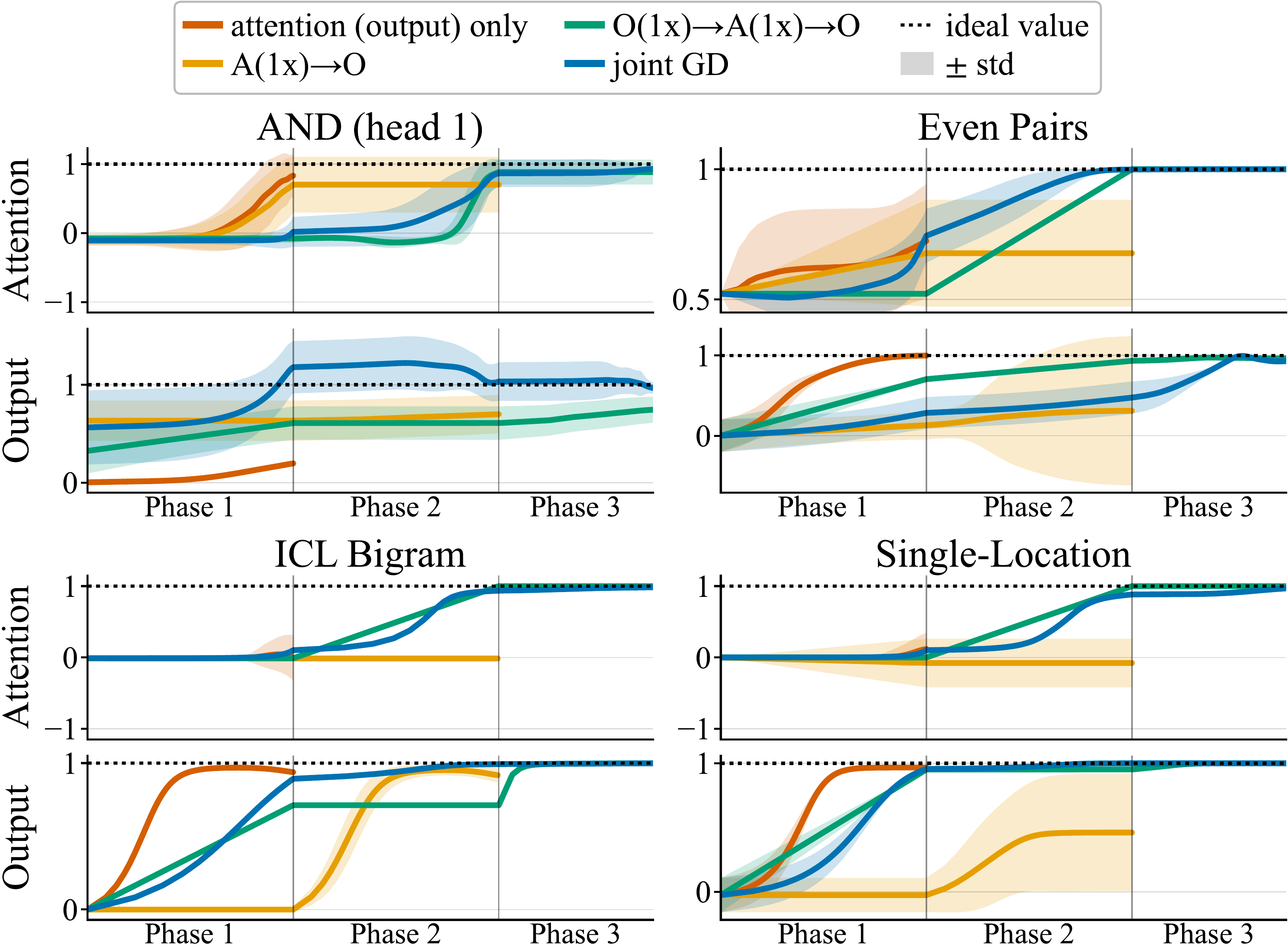}
    \caption{\textbf{Attention and output gaps on four tasks.} Mean $\pm$ standard deviation over $50$ runs; dotted lines mark the ideal gaps (see Appendix~\ref{app:gaps}). Phases are defined as in Figure~\ref{fig:gd_comparison}; single-layer plans end after Phase 1 and A$\to$O after Phase 2. For \texttt{AND}, we plot the head with the largest gap for each plan (all heads: Appendix~\ref{app:and_all_heads}). 
    For \texttt{even pairs} and \texttt{SLR}, the output gap is $\cos(u, w^*)$ and $\cos(v, v^*)$, with $w^*$ the max-margin and $v^*$ the ground-truth direction. Since $w^*$ depends on the attention weights, the output gap on \texttt{even pairs} can change in Phase 2 although the output is frozen. 
    O$\to$A and the zero initialization are omitted, as they coincide with the first two phases of O$\to$A$\to$O and with the random initialization, respectively. The trajectories on \texttt{OR} resemble those on occurrence in Figure~\ref{fig:gd_comparison} and are deferred to Appendix~\ref{app:learning_progress_occur_or}.}
    \label{fig:trajectory_complete}
\end{figure}

\section{Discussions}\label{sec:discussion}

In this work, we study the (decoupled) training dynamics of a shallow transformer that recognizes a star-free language---the occurrence language---from inputs of unbounded length. We prove that one-step attention learning succeeds when it lags output layer learning. The natural two-phase adaptation of gradient feature learning---a single attention update followed by training the output layer---provably fails for some target symbol, as does training either layer alone (Propositions \ref{prop:attn_only}--\ref{prop:output_only_multi_heads}, Theorem \ref{thm:attention-first-fails}). However, a three-phase training plan---involving \emph{one-step attention learning}, sandwiched between two phases of output layer training---suffices to learn the occurrence language (Theorem \ref{thm:main_positive}, Corollary \ref{thm:fbGD}). The underlying mechanism is simple: the gradient of the attention is driven by the separation of the output, so the attention receives no signal about the target symbol until the output layer is informative.

We expect this mechanism to be relevant beyond the specific problem studied in this paper. Our experiments already show that, in many cases, attention learning lags output learning, and the accuracy rises only once the attention begins to saturate. This may help explain the ``abrupt learning'' observed in transformers~\citep{SinghMHCS24, GopalaniH25}. Our impossibility results also suggest that parameter-efficient methods where only the attention or the output layer is finetuned can be fundamentally limited, and that the common theoretical assumption of a faster-learning output layer~\citep{TianWCD23, ChenSWY24} may reflect a necessary order of learning. 
A broad open problem is to establish these connections for deeper models and richer tasks. Our multi-phased training paradigm could complement existing works on joint training dynamics of shallow Transformers \citep{YangKWL24,HuangL025} to investigate this question. We also note specific questions to build upon the presented results.

\textbf{Finite sample complexity and realistic transformer.} The focus of the present study was on the interplay of (decoupled) attention ans output layer training. To derive this, we made two key simplifications---focused on population training risk and parameterized the transformer by $(\attn,\readout)$. Parameterization in terms of key, query, value matrices leads to an over-parametrized system in this case and, as shown in \citet{tarzanagh2023}, leads to different dynamics and implicit biases of GD.
While the finite sample analysis in Corollary \ref{thm:fbGD} shows that $O(d \ln d)$ samples suffice, we do not know if such a sample complexity is necessary for the occurrence language. Furthermore, we assume a batch-GD-style split of training data across phases, and the complexity could differ considerably depending on use of full batch GD or stochastic GD (SGD)---a topic that is currently investigated for parity learning \citep{AbbeAM23}. 


\textbf{Attention-first training with multiple heads.} Table~\ref{tab:complete_experiments} shows that the \texttt{AND} task is the one case where the three-phase plan does not match joint GD, and where training the attention alone does equally well. A possible explanation is from the multi-head model we use for this task. With four heads and a random output layer, the output weights for some heads may already separate a target symbol at initialization, so that their attention receives a useful gradient without a preceding output step. This mirrors prior work on feedforward networks~\citep{BaESWWY22, ShiWL23}, where wider networks can make feature-first learning sufficient. It suggests that the lower bounds for single-layer and attention-first training weaken as the number of heads increases, and that the training order matters most for narrow and shallow models. How many heads are needed for attention-only or attention-first training to succeed is an open question.

\textbf{Learnability and the expressivity hierarchy for transformers.}
The expressive power of transformers over formal languages is well studied: depending on the attention model, depth, and position embeddings, they recognize the star-free languages and
beyond~\citep{StroblMW0A24, BarceloKLP24}. However, expressivity does not guarantee learnability, and our work is only a first step towards a theory of transformer learnability. We establish lower bounds and a convergence guarantee for a single language, and show empirically that the three-phase plan extends to a few other star-free languages. These languages also form a very limited subset of the star-free class: occurrence, \texttt{OR}, and \texttt{AND} are piecewise testable, among the simplest star-free languages, and \texttt{even pairs} compares only the first and last symbols. Characterizing which formal languages are learnable or not learnable by transformers with GD remains wide open. A multi-phase training leads to simple decoupled dynamics and could be potentially useful to answer this question.



\subsection*{AI use statement}

In this work, we used generative AI tools to assist in the writing of proofs and provide feedback on research methodology or experiments.
We have not used generative AI tools to generate synthetic datasets, help develop theoretical models or conceptual frameworks, formulate mathematical claims, provide critical ingredients for proving mathematical claims, propose or refine hypotheses, implement methods, support data analysis, or interpret results. Cleaning and reformating datasets, and assisting with translation, are not applicable to this work.

Additionally, we used generative AI tools to improve scientific figures, suggest experimental parameters, edit software code, and identify relevant literature.

We have reviewed all AI-assisted work. LLM-generated code was verified and refactored by human authors; we use LLM to restructure the proof and check line by line afterward.
We take responsibility for the final content of this work, including text, claims, or artifacts produced with the aid of generative AI.

\subsection*{Acknowledgment}
This work has been supported by the German Research Foundation (DFG) through the Research Training Group GRK 2428 ConVeY.

\bibliography{reference}

\clearpage
\appendix
\thispagestyle{empty}

\onecolumn

\section{Proofs for the main results}\label{app:all_proofs}
\subsection{Notations and Preliminaries}
We introduce some notations, and provide a few calculations that is useful for the remainder of the proofs.
The general per-sample gradients of \(\ell(w)\) with respect to each parameter for each index \(i\in[d]\) is listed as follows:
\begin{align*}
    \fracpartial{\ell(w)}{\beta_i} & = (\hat y (w) - y(w))\cdot \gamma_i\\
    \fracpartial{\ell(w)}{\alpha_i} 
    & = (\hat y (w) - y(w))\cdot \cp{\beta_i\gamma_i(1-\gamma_i) - \sum_{j\ne i}\beta_j\gamma_i\gamma_j}
\end{align*}
We write the expectations of first and the second conditional moments of \(\gamma_s\) at initialisation \(\attn = \mathbf 0_d\):
\begin{align}
    \E[\gamma_s \given y =1] 
    & = \sum_{m=1}^{\infty} \prob\cp{|w| = m \given y=1}\cdot \sum_{n=1}^{m} \prob\cp{C_s(w) = n \given |w|=m, y=1} \cdot \frac{n}{m} \notag\\
    & = \sum_{m=1}^{\infty} \frac{\prob(|w|=m)\cdot \prob(y=1  \given |w| =m)}{\prob(y=1)} \cdot \sum_{n=1}^{m}\frac{\prob(|w|=m)\cdot \prob(C_s(w) = n  \given |w| = m)}{\prob(|w|=m)\cdot \prob(y=1  \given |w|=m)} \cdot \frac{n}{m} \notag\\
    & = 2 \sum_{m=1}^{\infty} \prob(|w| = m) \cdot \E_{n\sim \text{bin}(m, \frac{1}{d})}\cb{\frac{n}{m}} = \frac 2 d \label{eq:positive_mu1}\\
    \E\cb{\gamma_s^2 \given y =1} 
    & = 2\sum_{m=1}^\infty (1-p)^{m-1}p\cdot \E_{n\sim\text{bin}(m, \frac{1}{d})}\cb{\cp{\frac{n}{m}}^2} \notag\\
    & = 2\sum_{m=1}^\infty (1-p)^{m-1}p\cdot\frac{m-1+d}{md^2} = 2 \sum_{m=1}^\infty (1-p)^{m-1}p\cdot\cp{\frac{1}{d^2} -\frac{1}{md^2} + \frac{1}{md}} \notag \\
    & = 2 \cb{\frac{1}{d^2} + \cp{\frac{1}{d} - \frac{1}{d^2}}\cdot \sum_{m=1}^\infty (1-p)^{m-1}p \cdot \frac{1}{m}} \notag\\
    & = 2 \cb{\frac{1}{d^2}+\frac{(d-1)\cdot p}{d^2\cdot (1-p)}\cdot -\ln\cp{p}} = 2 \cb{\frac{1}{d^2}+\frac{(d-1)\cdot\ln(d-1)}{d^2(d-2)}} \label{eq:positive_mu2}\\
    %
\end{align}
For the background positions \(j\ne s\), we have:
\begin{equation}\label{eq:gammaj_positive_mu1}
        \E[\gamma_j \given y = 1]  = \frac{1}{d-1}\cp{1-\frac{2}{d}}, \; \E[\gamma_j \given y = -1] = \frac{1}{d-1}\\
\end{equation}

\begin{align}
    \E[\gamma_j^2 \given y = 1]
    & = \sum_{m=1}^{\infty} \prob\cp{|w|=m \given y=1} \sum_{n=1}^{m}\prob\cp{C_s(w) = n \given y=1, |w| = m}\cdot \E_{l\sim \text{bin}\cp{m-n, \frac{1}{d-1}}}\cb{\cp{\frac{l}{m}}^2} \notag\\
    & =  \sum_{m=1}^{\infty} \prob\cp{|w|=m \given y=1} \sum_{n=1}^{m}\prob\cp{C_s(w) = n \given y=1, |w| = m} \notag\\
    & \qquad \qquad \qquad \qquad \qquad \qquad \qquad \qquad \qquad \cdot \cb{\frac{(m-n)(m-n-1)\cdot \frac{1}{(d-1)^2}+(m-n)\cdot\frac{1}{d-1}}{m^2}} \notag\\
    & \leq \sum_{m=1}^{\infty} \prob\cp{|w|=m \given y=1} \cdot \cb{\frac{1}{(d-1)^2}+\frac{1}{m(d-1)}} \notag \\
    & = \sum_{m=1}^{\infty} \frac{(1-p)^{m-1}p \cdot \cp{1-\cp{\frac{d-1}{d}}^{m}}}{\frac{1}{2}}\cdot \cb{\frac{1}{(d-1)^2}+\frac{1}{m(d-1)}} \notag\\
    & \leq 2 \sum_{m=1}^{\infty} (1-p)^{m-1}p\cdot \cb{\frac{1}{(d-1)^2}+\frac{1}{m(d-1)}} \notag\\
    & = \frac{2}{(d-1)^2} \sum_{m=1}^{\infty} (1-p)^{m-1} p  + \frac{2p}{(1-p)(d-1)}\cdot \sum_{m=1}^{\infty} (1-p)^m\cdot \frac{1}{m} \notag \\
    & = \frac{2}{(d-1)^2} + \frac{2p}{(1-p)(d-1)}\cdot -\ln\cp{1-(1-p)} = \frac{2}{(d-1)^2} + \frac{2\ln(d-1)}{(d-1)(d-2)} \label{eq:gammaj_positive_mu2}\\
    \E\cb{\gamma_j^2 \given y=-1} 
    & = \sum_{m=1}^{\infty} \prob(|w|=m \given y=-1) \sum_{n=0}^{m} \prob\cp{C_j(w) = n \given |w| = m, y=-1}\cdot \cp{\frac{n}{m}}^2 \notag\\
    & = \sum_{m=1}^{\infty} \prob(|w|=m \given y=-1) \cdot \E_{n\sim \text{bin}\cp{m, \frac{1}{d-1}}}\cb{\cp{\frac{n}{m}}^2} \notag\\
    & = \sum_{m=1}^\infty \frac{(1-p)^{m-1}\cdot p \cdot \cp{\frac{d-1}{d}}^{m}}{\frac{1}{2}} \cdot \cb{\frac{m(m-1) \cdot \frac{1}{(d-1)^2}+m\cdot \frac{1}{d-1}}{m^2}} \notag\\
    & \le \sum_{m=1}^\infty 2\cdot (1-p)^{m-1}\cdot p \cdot \cp{\frac{d-1}{d}}^{m} \cdot \cb{\frac{1}{(d-1)^2}+\frac{1}{d-1}\cdot \frac{1}{m}} \notag\\
    & = \frac{2p\cdot (d-1)}{d(d-1)^2} \sum_{m=0}^{\infty} \cb{(1-p)\cdot\cp{\frac{d-1}{d}}}^{m} + \frac{2p}{(1-p)(d-1)} \sum_{m=1}^{\infty} \cb{(1-p)\cdot\cp{\frac{d-1}{d}}}^{m} \cdot \frac{1}{m} \notag\\
    & =\frac{2p}{d(d-1)} \cdot \frac{1}{1-\frac{(1-p)(d-1)}{d}} + \frac{2p}{(1-p)(d-1)} \cdot -\ln \cp{1-(1-p)\cdot \frac{d-1}{d}} \notag\\
    & = \frac{1}{(d-1)^2} + \frac{2\ln\cp{\frac{d}{2}}}{(d-1)(d-2)} \label{eq:gammaj_negative_mu2}
\end{align}

\newpage 
\subsection{Proof of Proposition~\ref{prop:expressivity}}\label{app:proof_prop_expressivity}
\begin{proof}
    Since \(\beta_s = 2\) and \(\beta_j = 0\) for every \(j\ne s\), we have \(f_{\attn, \readout}(w) = 2\gamma_s(w) - 1\). One can verify that \(f_{\attn, \readout}(w) = - 1\) for every \(w\notin A_s\), and hence, the negative examples do not contribute to the loss. For the positive examples $w\in A_s$, we separately consider words of length shorter or longer than $M=\frac{2d}{\varepsilon}$. Since length $|w|$ is a geometric random variable, $\mathbb{P}_{(w,y)\sim D} (|w|>M) = (1-\frac{1}{d-1})^M < e^{-2/\varepsilon} $. Since the squared loss remains bounded by 2 for the choice of $f_{\attn, \readout}(w) = 2\gamma_s(w)-1$, the contribution of long words to the risk $\mathcal{L}(\attn,\readout)$ is bounded by $2e^{-2/\varepsilon}<\varepsilon/2$.

    For short words, $|w|\leq M = 2d/\varepsilon$ with $C_s(w)\geq1$, the attention is bounded from below as $\gamma_s(w) \ge \frac{4d \cdot C_s(w)}{4d \cdot C_s(w) + \varepsilon^2 (m-C_s(w))}\geq \frac{4d}{4d + 2d\varepsilon} = \frac{2}{2 + \varepsilon}$, and from above since \(\gamma_s(w) \le 1\) due to softmax. Hence, the loss for any short word $w\in A_s$ is bounded as \( \frac12 \cp{f_{\attn,\readout}(w) - 1}^2 \leq \frac{2\varepsilon^2}{(2+\varepsilon)^2}<\frac{\varepsilon}{2}\). Hence, short words also contribute to at most $\varepsilon/2$ risk, and overall \(\mathcal{L}(\attn,\readout) \leq \varepsilon\).
\end{proof}

\subsection{Proof of Proposition~\ref{prop:attn_only}}\label{app:proof_prop_attn_only}
\begin{proof}
    Let the target index be \(s = \arg\min_{i\in[d]} \tilde\beta_i\) and discuss the loss based on its value. When \(\tilde\beta_s \ge 1\), it follows that \(\hat y(w) = \sum_{i=1}^d \tilde\beta_i \gamma_i(w) -1 \ge \min_{i\in[d]} \tilde\beta_i -1 \ge 0\) for any input \(w\) and any attention weights \(\attn\). In this case, the expected risk is lower bounded by \(\prob(y=-1)\times \frac 12 \times (0+1)^2 = \frac 14\).

    When \(\tilde\beta_s < 1\), we lower bound the risk incurred on negative samples \(z\) of length \(m\) together with their positive counterparts. For a given \(z\) of length \(m\), consider all positive samples \(z'\) obtained by inserting \(n\ge 1\) copies of \(s\) into \(z\). write \(R(d):=\frac{d^2-d}{d^2-2d+2}>1\), we first compute the ratio of the two probability masses:
    \begin{align*}
        \frac{\sum_{z'} \rho(z')}{\rho(z)} & = \frac{\sum_{n\ge 1} \binom{m+n}{n}(1-p)^{m+n-1} p \cp{\frac 1d}^{m+n}}{(1-p)^{m-1} p \cp{\frac 1d}^{m}} \\
        & = \sum_{n=0}^\infty \binom{m+n}{n}(1-p)^n\cp{\frac 1d}^n- 1 \\
        & = R(d)^{m+1}-1 \ge (m+1)(R(d)-1)
    \end{align*}
    Therefore, for each negative sample \(z\) whose length satisfies \(m \ge \frac{d+1}{2}\), it holds that \(\frac{\sum_{z'} \rho(z')}{\rho(z)} \ge \frac 12\).

    For each negative sample \(z\), write the output as \(\hat y(z) = G(z) -1 = \sum_{i\ne s} \tilde\beta_i \gamma_i(z)-1\), then for each of the positive counterpart \(z'\), we have \(\hat y(z') = \tilde\beta_s \gamma_s(z') + G(z) \cdot (1-\gamma_s(z')) - 1\). Now we distinguish two cases:
    \begin{itemize}
        \item \(G(z)\ge 1\): then \(\hat y(z)\ge 0\), and the error on the negative sample is \(\frac 12 (\hat y(z)+1)^2 = \frac 12 G(z)^2 \ge \frac 12\).
        \item If \(G(z)< 1\): since \(\tilde\beta_s<1\), the convex combination \(\tilde \beta_s \gamma_s(z') + G(z) \cdot (1-\gamma_s(z'))<1\) for every \(\gamma_s(z')\in[0,1]\), so \(\hat y(z')<0\), giving \(\frac12(\hat y(z')-1)^2 > \frac12\) on every positive counterpart, regardless of \(n\) or \(\attn\).
    \end{itemize}
    In either case, the probability mass of samples with an error greater than \(\frac 12\) is at least \(\min\{\rho(z),\sum_{z'}\rho(z')\}\). Summing over the window where the negative samples has length \(m\ge \lceil\frac{d+1}{2}\rceil\), we get:
    \begin{align*}
        \mathcal L(\attn,\tilde\readout) \ge \sum_{\substack{|z| \ge \lceil\frac{d+1}{2}\rceil \\ \chi_s(z) = -1}} \frac{1}{2} \cdot \min\{\rho(z),\sum_{z'}\rho(z')\} \ge \sum_{\substack{|z| \ge \lceil\frac{d+1}{2}\rceil \\ \chi_s(z) = -1}} \frac{1}{2} \cdot \frac 12 \cdot \rho(z) = \frac18\left(1-\frac2d\right)^{\lceil\frac{d+1}{2}\rceil-1} \ge \frac{1}{32},
    \end{align*}
    where the last inequality follows from the monotonicity of the bound on the even and odd sequences of \(d\), minimized at \(d=4\) and \(d=3\) respectively.
\end{proof}

\subsection{Proof of Proposition~\ref{prop:output_only}}\label{app:proof_prop_output_only}
\begin{proof}
    We consider the target symbol as \(s = \arg\min_{i\in[d]} \tilde\alpha_i\).
    First, consider any \(\beta_s \ge d + 1\), we can lower bound the expected risk by summing up the error on all samples made from only the set symbols:
    \begin{align*}
        \mathcal L_s(\tilde\attn, \readout)  \ge \frac 12 (\beta_s -2)^2 \cdot \sum_{m=1}^{\infty} (1-p)^{m-1} p \cp{\frac 1d}^m \ge \frac{d^2}{8\cb{(d-1)^2+1}} \ge \frac 18.
    \end{align*}
    Otherwise, when \(\beta_s < d +1\), we consider the prediction for every negative sample \(z\) whose length satisfies \(m=|z| > d\) and its corresponding positive counterparts where we insert one \(s\) into \(z\). We first compute the ratio between the two masses, which gives:
    \begin{equation}\label{eq:mass_ratio}
        \frac{\sum_{z'}\rho(z')}{\rho(z)} = \frac{\binom{m+1}{1} (1-p)^{m}p \cp{\frac 1d}^{m+1}}{(1-p)^{m-1}p \cp{\frac 1d}^{m}} = \frac{(m+1)(1-p)}{d} \ge \frac 12.
    \end{equation}
    For each negative sample \(z\), write the output as \(\hat y(z) = G(z) -1 = \sum_{i\ne s} \beta_i \gamma_i(z)-1\). And for each counterpart \(z'\), \(\hat y(z') = G(z)\cdot (1-\gamma_s(z')) + \beta_s \cdot \gamma_s(z')-1\). Now we distinguish two cases:
    \begin{itemize}
        \item \(G(z) > \frac 12\): then \(\hat y(z) > -\frac 12\), and the error on the negative sample is at least \(\frac 18\).
        \item \(G(z) \le \frac 12\): since \(\exp(\alpha_s) < \exp(\alpha_i)\) for any \(i\ne s\), we have \(\gamma_s(z') \le \frac{1}{1+d}\), and \(\hat y(z') \le \frac 12\), therefore the error on each \(z'\) is at least \(\frac 18\).
    \end{itemize}
    In either case, the probability mass of samples with an error greater than \(\frac{1}{8}\) is at least \(\min \{\rho(z), \; \sum_{z'}\rho(z')\}\). Summing over the window where the negative samples has length \(m \ge d + 1\), we have:
    \begin{align*}
        \mathcal L(\tilde\attn,\readout)  \ge \sum_{\substack{|z| \ge d+1 \\ \chi_s(z) = -1}} \frac{1}{8} \cdot \min\{\rho(z),\sum_{z'}\rho(z')\}\ge \sum_{\substack{|z| \ge d+1 \\ \chi_s(z) = -1}} \frac{1}{16} \cdot \rho(z) = \frac1{16}\cdot\frac12\cp{1-\frac2d}^{d} = \frac{1}{32}\cp{1-\frac2d}^{d} \ge \frac{1}{864}.
    \end{align*}
\end{proof}

\subsection{Proof of Proposition~\ref{prop:output_only_multi_heads}}\label{app:proof_output_only_multi_heads}
We begin by selecting the target index that receives a low attention score. We first prove that such an index exists:

\begin{lemma}[Existence of Index with Low Attention]\label{lem:pigeon_hole}
    For any fixed attention weights \(\attn^{(1)}, \dots, \attn^{(H)} \in \mathbb R^d\), there exists \(s\in[d]\), such that \(\displaystyle \sum_{h=1}^H \E\cb{\cp{\gamma_s^{(h)}(w)}^2} \le \frac{H}{d}\).
\end{lemma}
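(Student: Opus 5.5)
The plan is a pigeonhole (averaging) argument over the symbol index $s$. For each head $h$ and each word $w$, the attention scores $\gamma_1^{(h)}(w),\dots,\gamma_d^{(h)}(w)$ are nonnegative and sum to one, since they are the softmax weights grouped by symbol. Hence
\[
\sum_{i=1}^d \cp{\gamma_i^{(h)}(w)}^2 \le \cp{\sum_{i=1}^d \gamma_i^{(h)}(w)}^2 = 1,
\]
because every cross term is nonnegative. Equivalently, $\gamma_i^2\le\gamma_i$ because $\gamma_i\in[0,1]$.

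Summing over the heads $h\in[H]$ and taking the expectation over $w\sim\rho$ gives
\[
\sum_{i=1}^d \sum_{h=1}^H \E\cb{\cp{\gamma_i^{(h)}(w)}^2} \le H.
\]
The expectation is over the word distribution $\rho$ alone, which does not depend on the target $s$, so the left-hand side is a fixed quantity once $\tilde{\mathcal A}$ is fixed. The $d$ numbers $a_i := \sum_h \E[(\gamma_i^{(h)})^2]$ are nonnegative with total at most $H$. Their minimum is therefore at most their average, so some $s\in[d]$ satisfies $a_s\le H/d$, which is exactly the claim.

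The main point to check is that the expectation here is unconditional, i.e.\ taken over $\rho$ and not conditioned on the label $y$. If so, there is no circularity in choosing $s$ from quantities that might themselves depend on $s$. Interchanging the sum and the expectation is immediate because the sum is finite and the integrand is bounded. No other obstacle is expected; the lemma is elementary and serves only to select a target index that every head attends to weakly in mean square.
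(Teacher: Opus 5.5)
Your proposal is correct and follows the paper's proof: bound $\sum_{i}(\gamma_i^{(h)}(w))^2 \le (\sum_i \gamma_i^{(h)}(w))^2 = 1$, sum over heads and take the expectation to get a total of at most $H$, then pick the index whose share is at most the average $H/d$. Your remark that the expectation is taken over $\rho$ alone, which does not depend on $s$, is a useful explicit check that the paper leaves implicit; it is consistent with how the lemma is later applied through $\E_\rho$.
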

\begin{proof}
    By the definition of softmax, for each string \(w\) and each head \(h\), \(\gamma_i^{(h)}(w) \in [0,1]\) for every \(i\in[d]\), hence:
    \[
        \sum_{i=1}^d \cp{\gamma_i^{(h)}(w)}^2 \le \cp{\sum_{i=1}^d \gamma_i^{(h)}(w)}^2 = 1.
    \]
    Taking expectation over \(w\) and summing over all heads gives \(\sum_{h=1}^H \sum_{i=1}^{d} \E\cb{\cp{\gamma_i^{(h)}(w)}^2} \le H\). Therefore, there exists an index \(s\in[d]\) such that \(\sum_{h=1}^H \E\cb{\cp{\gamma_s^{(h)}(w)}^2} \le \frac{H}{d}\).
\end{proof}
We choose \(s\) that satisfies Lemma~\ref{lem:pigeon_hole} as the target index. Consider any negative string \(z\) whose length \(m\) satisfies \(\lceil \frac{d}{2} \rceil \le m \le d-1\), it follows that \(\rho(z) = (1-p)^{m-1}p\cp{\frac 1d}^m\). Let \(\mathcal P(z)\) be the set of positive strings which after removing all \(s\) will result in \(z\), we have \(\rho(\mathcal P(z)) = \sum_{n=1}^{\infty} \binom{m+n}{n} (1-p)^{m+n-1}p \cdot \cp{\frac{1}{d}}^{m+n}\). Further, we can consider the ratio between the two masses, which simplifies into:
\begin{align*}
    \frac{\rho(\mathcal P(z))}{\rho(z)} & = \sum_{n=1}^{\infty}\binom{m+n}{n} (1-p)^n \cp{\frac{1}{d}}^n = \cp{\frac{d-2}{d^2-2d+2}+1}^{m+1}- 1 \ge \frac{1}{2}.
\end{align*}
For any \(z'\in\mathcal P(z)\), we can write the output as \(\displaystyle f(z') = \sum_{h=1}^{H} \cb{\cp{1-\gamma_s^{(h)}(z')} g^{(h)}(z) + \beta_s^{(h)} \cdot \gamma_s^{(h)}(z')} - 1\), where \(g^{(h)}(z) = \sum_{i\ne s}\beta^{(h)}_i \gamma^{(h)}_i(z)\). Since \(|g^{(h)}(z)| \le \|\beta^{(h)}\|_2 \|\gamma^{(h)}(z)\|_2 \le \|\beta^{(h)}\|_2\) and \(\left|\beta_s^{(h)}\right| \le \left\|\beta^{(h)}\right\|_2\), we have:
\[
    |f(z') - f(z)| = \left|\sum_{h=1}^H \gamma_s^{(h)}(z') \cp{\beta_s^{(h)} - g^{(h)}(z)}\right|
    \le 2\sum_{h=1}^H \gamma_s^{(h)}(z') \|\beta^{(h)}\|_2
    \le 2\|\mathcal B\|_F \cp{\sum_{h=1}^H \cp{\gamma_s^{(h)}(z')}^2}^{1/2},
\]
where the last step is from Cauchy-Schwarz inequality over the \(H\) heads.

For each negative string \(z\) and each \(z' \in \mathcal P(z)\), we define a coupling weight \(\nu(z,z') := \min\cp{\rho(z), \rho(\mathcal P(z))} \cdot \frac{\rho(z')}{\rho(\mathcal P(z))}\). First, it directly follows that \(\nu(z, z') \le \rho(z')\). Second, \(\sum_{z' \in \mathcal P(z)} \nu(z, z') < \rho(z)\). Since \(\frac{\rho(\mathcal P(z))}{\rho(z)} \ge \frac12\) in the length window we considered for \(z\), it holds that \(\min(\rho(z),\rho(\mathcal P(z))) \ge \frac12 \rho(z)\), hence the total coupling mass is lower bounded by:
 \begin{align*} 
    \sum_{\lceil d/2 \rceil\le|z|\le d-1} \sum_{z'\in\mathcal P(z)} \nu(z,z') & = \sum_z \min(\rho(z),\rho(\mathcal P(z))) \ge \frac12 \sum_{\lceil d/2 \rceil\le|z|\le d-1} \rho(z) = \frac{1}{2}\sum_{m=\lceil d/2 \rceil}^d (1-p)^{m-1}p\cp{\frac{d-1}{d}}^m \\
    & = \frac14\cb{\cp{1-\frac2d}^{\lceil d/2\rceil - 1} - \cp{1-\frac2d}^{d-1}} := M
 \end{align*}
We write \(x = 1-\frac{2}{d}\). For any \(0<t<1\), we have \(\exp\cp{-\frac{t}{1-t}} \le 1-t \le \exp(-t)\). Plug in \(t=2/d\), we get \(x\ge \exp\cp{-\frac{2}{d-2}}\). Since \(\lceil d/2 \rceil - 1 \le \frac d 2\), we get $2\cdot \frac{\lceil d/2 \rceil - 1}{d-2} \le \frac{d}{d-2} \le 3$ for all $d\ge3$, so \(x^{\lceil d/2 \rceil - 1} \ge \exp\cp{-2\frac{\lceil d/2 \rceil - 1}{d-2}} \ge \exp(-3)\). Using the other side of the inequality, since \(x\le \exp\cp{-\frac{2}{d}}\) and \(\lfloor d/2\rfloor \ge \frac{d-1}{2}\), we have \(x^{\lfloor d/2\rfloor} \le \exp\cp{-2\frac{\lfloor d/2\rfloor}{d}} \le \exp\cp{-\frac{d-1}{d}} \le \exp\cp{-\frac23}\), which gives us \(1-x^{\lfloor d/2\rfloor} \ge 1-\exp\cp{-\frac23}\). Combining both sides, we have \(M \ge \frac14\exp(-3)\cp{1-\exp\cp{-\frac23}}>0\), which is an absolute constant independent of \(d\).

Define \(\E_{\mathrm{pair}}\cb{\cdot} := \frac{1}{M} \sum_{z,z'} \nu(z,z') \times(\cdot)\). Since $\nu(z,z') \le \rho(z')$, we have:
\[
    \E_{\mathrm{pair}}\cb{\sum_{h=1}^H \cp{\gamma_s^{(h)}(z')}^2}
    \le \frac{1}{M} \sum_{h=1}^H \E_\rho\cb{\cp{\gamma_s^{(h)}(w)}^2}
    \le \frac{H}{dM},
\]
using Lemma~\ref{lem:pigeon_hole} in the last step. Finally, by Jensen's inequality, we have:
\begin{equation}\label{eq:pos_neg_gap}
    \E_{\mathrm{pair}} |f(z')-f(z)| \le 2 \|\mathcal B\|_F \sqrt{\frac{H}{dM}}.
\end{equation}

For every pair \((z,z')\), \(y(z) = -1\) and \(y(z') = 1\). Denote $e_z := f(z)+1$ and $e_{z'} := f(z')-1$, we have \(e_z^2 + e_{z'}^2 \ge \frac12\cp{2 - |f(z')-f(z)|}_+^2\). Since $u \mapsto (2-u)_+^2$ is convex, combining Jensen's inequality and Eq.~\ref{eq:pos_neg_gap}, we have:
\[
    \E_{\mathrm{pair}}\cb{e_z^2+e_{z'}^2} \ge \frac12\cp{2 - 2\|\mathcal B\|_F\sqrt{\frac{H}{dM}}}_+^2.
\]

Because $\displaystyle \sum_{z'\in\mathcal P(z)}\nu(z,z') \le \rho(z)$ and $\nu(z,z') \le \rho(z')$ as established above, the weighted sum $\sum_{z,z'} \nu(z,z')(e_z^2+e_{z'}^2)$ is bounded above by the corresponding disjoint contribution to $\E_\rho\cb{(f(w)-y(w))^2}$, hence we can bound the risk by:
\[
    \mathcal L_{D}(f) = \frac12 \cdot \E_\rho\cb{(f(w)-y(w))^2} \ge \frac12 M \cdot \E_{\mathrm{pair}}\cb{e_z^2+e_{z'}^2} \ge \frac{M}{4}\cp{2-2\|\mathcal B\|_F\sqrt{\frac{H}{dM}}}_+^2.
\]
Equivalently, when the number of heads \(H \le \frac{dM}{4\|\mathcal B\|_F^2}\), the loss is lower bounded by \(\frac M 4\). Since we know the output norm is bounded by \(\|\mathcal B\|^2_F \le \sqrt d\), we have that when \(H \le \frac{M}{4}\sqrt d\), the loss is lower bounded by \(\frac{M}{4}\).
\subsection{Proof of Theorem~\ref{thm:attention-first-fails}}\label{app:proof_attention_first_fails}
\begin{proof}
    First, we prove that for whatever \(\attn'\) we obtained after the first step, if \(\beta_s > \frac{d}{4}\) then the population risk is already lower bounded by a constant. Here \(\readout \in \mathbb R^d\) is an arbitrary vector. Consider all positive samples with only \(s\), then:
    \[
        \mathcal L(\attn', \readout) \ge \sum_{m=1}^{\infty} \frac{1}{2} \cp{\frac d 4 - 2}^2 \cdot (1-p)^{m-1} p \cp{\frac 1 d}^m = \frac{(d-8)^2}{32\cb{(d-1)^2+1}} \ge \frac{1}{2080}.
    \]
    Hence in the rest of the analysis, we will only consider the case where \(\beta_s \le \frac{d}{4}\). Now we prove that after the first gradient step, we can find an index \(s\) such that after the update, \(\alpha'_s \le 0\). Consider the following sum of the gradients:
    \begin{align*}
        \sum_{i=1}^d \fracpartial{\mathcal L_{i}}{\alpha_i} 
        & = \sum_{i=1}^d \E\cb{\cp{\hat y(w) - \chi_i(w)}\cdot \gamma_i \cp{\beta_i - \sum_{j=1}^d \beta_j \gamma_j}}\\
        & = \sum_{i=1}^d \E\cb{\cp{\sum_{j=1}^d \beta_j \gamma_j - 1 - 2\cdot \mathbf 1\{C_i(w)>0\} +1}\gamma_i \cp{\beta_i - \sum_{j=1}^d \beta_j \gamma_j}} \\
        & = \E\cb{\sum_{i=1}^{d} \beta_i \gamma_i \cdot \cp{\sum_{j=1}^d \beta_j \gamma_j} -2 \sum_{i=1}^d \beta_i\gamma_i - \sum_{i=1}^d \gamma_i \cp{\sum_{j=1}^d \beta_j \gamma_j}^2 + 2\sum_{i=1}^d \gamma_i \cp{\sum_{j=1}^d \beta_j \gamma_j}} \\
        & = \E\cb{\cp{\sum_{i=1}^d \beta_i \gamma_i}^2 - 2\sum_{i=1}^d \beta_i \gamma_i - \cp{\sum_{j=1}^d \beta_j \gamma_j}^2 + 2\sum_{j=1}^d \beta_j \gamma_j} = 0
    \end{align*}
    So there exists \(s\in[d]\) such that \(\fracpartial{\mathcal L}{\alpha_s} \ge 0\) and \(\alpha'_s \le 0\). Consider the sum of gradients of \(\mathcal L_s\) across all indices:
    \begin{align*}
        \sum_{i=1}^d \fracpartial{\mathcal L_s}{\alpha_i} = \sum_{i=1}^d \E\cb{\cp{\hat y - y} \cdot \gamma_i \cdot \cp{\beta_i - \sum_{j=1}^d\beta_j\gamma_j}} = \E\cb{\cp{\hat y - y} \cdot \sum_{i=1}^d \gamma_i \cdot \cp{\beta_i - \sum_{j=1}^d\beta_j\gamma_j}} = 0,
    \end{align*}
    hence it follows \(\sum_{i=1}^d \alpha'_i = 0\) regardless of the learning rate. Therefore, \(\displaystyle \sum_{i=1}^{d} \exp(\alpha'_i) \ge d \cdot \cp{\prod_{i=1}^d \exp(\alpha_i')}^{\frac 1d}=d\). At the same time, \(\alpha'_s \le 0\), which results in \(\exp(\alpha'_s) \le 1\). We will shorthand \(a_i:=\exp(\alpha'_i)\) in the remaining analysis.
    
    Now we consider a negative string \(z\) with length \(m\), where \(m \ge d\), and the set of its positive counterparts \(Z\) where each \(z'\in Z\) is obtained by inserting exactly one symbol \(s\) into \(z\). From \eqref{eq:mass_ratio} we have \(\frac{\rho(Z)}{\rho(z)} \ge \frac 12\). 
    
    For each counter part \(z'\), write \(\displaystyle \gamma_s(z') = \frac{a_s}{\sum_{i\ne s} C_i(z) a_i + a_s}\). Consider the two following cases:
    \begin{itemize}
        \item \(\displaystyle\max_{i\ne s} a_i \ge \frac d 2\). In this case, the probability that the maximizer \(i\) appears in string \(z\) is \(1-\cp{1-\frac{1}{d-1}}^{m} \ge 1-\cp{1-\frac{1}{d-1}}^{\frac d 2}\ge 1-\exp\cp{-\frac12} \ge \frac 18\), under which case we have \(\displaystyle \gamma_s(z') \le \frac{1}{\frac d 2 + 1} = \frac{2}{d+2} < \frac{4}{d}\).
        \item \(\displaystyle\max_{i\ne s} a_i < \frac d 2\). In this case, consider the r.v. \(\xi\sim\text{Unif}(\{a_i : i\ne s\})\), and we rewrite \(\displaystyle \sum_{i\ne s} C_i(z) a_i = \sum_{t=1}^m \xi_t:=S\). We further derive the first and the second moments of \(\xi\):
        \begin{align*}
            \E[\xi] = \frac{1}{d-1}\sum_{i\ne s} a_i \ge \frac{1}{d-1} \cdot (d-1)=1, \quad \E[\xi^2] \le \max_{i \ne s} a_i \E[\xi] \le \frac d 2 \cdot \E[\xi]. \\
        \end{align*}
        Hence, the first and the second moments of \(S\) can be written as:
        \[\E[S] = m\E[\xi], \quad \E[S^2] = m\E[\xi^2] + m(m-1)\E[\xi]^2 \le m\cdot \frac d 2 \cdot \E[\xi] + m(m-1) \E[\xi]^2. \]
        Using the Paley-Zygmund inequality, we have:
        \[
            \prob\cp{S \ge \frac 12 \E[S]} \ge \frac{\E[S]^2}{4\E[S^2]} \ge \frac{m^2\E[\xi]^2}{4m \frac d 2\E[\xi] + 4 m(m-1) \E[\xi]^2} = \frac{m\E[\xi]}{2d + 4(m-1)\E[\xi]} \ge \frac{m}{2d + 4(m-1)} \ge \frac{1}{2\cdot \frac{d}{m} + 4} \ge \frac 18.
        \]
        Since \(\E[S] \ge m\), we have \(\prob\cp{S \ge \frac m 2} \ge \frac{1}{8}\). Under which case we have \(\displaystyle \gamma_s(z') \le \frac{a_s}{a_s + \frac{m}{2}} \le \frac{4}{d+4} < \frac{4}{d}\).
    \end{itemize}
    For each $m\ge d$, let \(G_m:=\left\{ z\in(\Sigma\setminus\{s\})^m: \gamma_s(z')<\frac4d \text{ for every }z'\in Z(z) \right\}.\) 
    
    Conditional on $|z|=m$ and $\chi_s(z)=-1$, the symbols of $z$ are i.i.d.\ uniform on $\Sigma\setminus\{s\}$. The two cases before show that \(\prob(z\in G_m \given |z|=m, y=-1)\ge\frac18\). Fix any final output vector $\beta$ with $\beta_s\le d/4$, and write \(G(z):=\sum_{i\ne s}\beta_i\gamma_i(z).\) For every $z\in G_m$, either $G(z)\ge 1/2$, in which case \(\ell(z)=\frac12G(z)^2\ge\frac18\), or $G(z)<1/2$. In the latter case, for every $z'\in Z(z)$, we have \(f_{\alpha',\beta}(z')-1 =G(z)(1-\gamma_s(z'))+\beta_s\gamma_s(z')-2 <\frac12+1-2=-\frac12\), so $\ell(z')\ge 1/8$. 
    
    Consequently, it follows that \(\rho(z)\ell(z)+ \sum_{z'\in Z(z)}\rho(z')\ell(z') \ge \frac18\min\{\rho(z),\rho(Z(z))\} \ge\frac1{16}\rho(z)\). The sets $\{z\}\cup Z(z)$ are disjoint across negative strings $z$. Hence, with $G:=\bigcup_{m\ge d}G_m$, \(\mathcal L_s(\alpha',\beta) \ge \frac1{16}\rho(G) \ge \frac1{128} \prob_{\rho}(y(w)=-1,\ |w|\ge d) = \frac{1}{256}\cp{1-\frac2d}^{d-1} \ge \frac{1}{6912}\).
    
    For the last inequality, when $d\ge9$, \((d-1)\ln\left(1-\frac2d\right) \ge-\frac{2(d-1)}{d-2} \ge-\frac{16}{7}>-3\), and $e^{-3}>1/27$. Combining this bound with the earlier case $\beta_s>d/4$, which gives $\mathcal L_s(\alpha',\beta)\ge1/2080$, proves the theorem with $c_3=1/6912$.
\end{proof}

\subsection{Proof of Theorem 2}\label{app:proof_thm_4}
\begin{proof}
Throughout this proof, $\readout'$ denotes the output weights after Phase~1, $\attn'$ the attention weights after Phase 2, and $\readout(t)$ the output weights after $t \geq 0$ steps of Phase 3, so that $\readout(0) = \readout'$ and $\readout'' = \readout(T)$. We first prove the following lemma.
\begin{lemma}[Induction on Symmetry]\label{lem:symmetry}
    For any target index \(s\in[d]\), if at the current step we have \(\beta_i = \beta_j\) and \(\alpha_i = \alpha_j\) for every pair of background indices \(i, j \ne s\), then after one step of population GD, the updated parameters satisfy \(\alpha_i' = \alpha_j'\), and \(\beta_i' = \beta_j'\).
\end{lemma}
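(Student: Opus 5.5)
The plan is to prove Lemma~\ref{lem:symmetry} by an exchangeability argument, not by computing gradients explicitly. Fix background indices $i\neq j$, both different from $s$. Let $\pi$ be the transposition of $i$ and $j$ acting on $\Sigma$, and extend it letterwise to words, $\pi(w)=(\pi(w_1),\dots,\pi(w_m))$. I would proceed in three steps.

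\emph{Step 1: the data distribution is $\pi$-invariant.} The length $|w|$ is independent of the symbols, and the symbols are i.i.d.\ uniform, so $\rho(\pi(w))=\rho(w)$. Since $\pi(s)=s$, the word $w$ contains $s$ if and only if $\pi(w)$ does, so $\chi_s(\pi(w))=\chi_s(w)$. The counts transform as $C_k(\pi(w))=C_{\pi(k)}(w)$.

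\emph{Step 2: the model is $\pi$-invariant.} Write $\pi\attn$ for the parameter vector with coordinates $(\pi\attn)_k=\alpha_{\pi(k)}$, and similarly $\pi\readout$. From the grouped form of \eqref{eq:single_head_transformer},
\[
f_{\attn,\readout}(\pi(w))=f_{\pi\attn,\pi\readout}(w),
\]
and the hypothesis gives $\pi\attn=\attn$ and $\pi\readout=\readout$. Moreover, $\gamma_k(\pi(w))=\gamma_{\pi(k)}(w)$.

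\emph{Step 3: the gradients are $\pi$-symmetric.} Use the per-sample gradients from the preliminaries:
\[
\partial_{\beta_k}\ell=(\hat y-y)\gamma_k, \qquad \partial_{\alpha_k}\ell=(\hat y-y)\,\gamma_k\Big(\beta_k-\sum_l\beta_l\gamma_l\Big).
\]
Evaluated at $\pi(w)$, the factor $\hat y-y$ is unchanged by Step 2 and Step 1. The factor $\gamma_k$ becomes $\gamma_{\pi(k)}(w)$. The sum $\sum_l\beta_l\gamma_l(\pi(w))=\sum_l\beta_{\pi(l)}\gamma_{\pi(l)}(w)$ is unchanged, using $\beta_l=\beta_{\pi(l)}$. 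This yields
\[
\partial_{\alpha_k}\ell(\pi(w))=\partial_{\alpha_{\pi(k)}}\ell(w),
\]
and likewise for $\beta$. Take expectations under $\rho$ and change variables $w\mapsto\pi(w)$, which is measure-preserving by Step 1. This gives $\partial_{\alpha_i}\mathcal L_s=\partial_{\alpha_j}\mathcal L_s$ and $\partial_{\beta_i}\mathcal L_s=\partial_{\beta_j}\mathcal L_s$. Since the parameters were already equal and the step sizes are shared across coordinates, $\alpha'_i=\alpha'_j$ and $\beta'_i=\beta'_j$. This holds whether one layer or both are updated, because a frozen layer trivially keeps its symmetry.

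The only point needing care is integrability, so that exchanging expectation and relabeling is legitimate over unbounded lengths. This is routine: $\gamma_k\in[0,1]$, so every per-sample gradient is bounded by a polynomial in $\|\readout\|_\infty$, uniformly in $w$. Dominated convergence therefore justifies differentiating under the expectation. I expect no real obstacle. The main thing to state cleanly is the equivariance identity for $\gamma$ under relabeling, $\gamma_k(\pi(w))=\gamma_{\pi(k)}(w)$, which follows directly from the count form of the softmax.
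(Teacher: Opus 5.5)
Your proposal is correct and follows essentially the same argument as the paper: a transposition of two background symbols preserves $\rho$, preserves the label $\chi_s$, and acts on the model by the matching relabeling of the parameters. The only cosmetic difference is that the paper first proves invariance of the population loss, $\mathcal L(\attn,\readout)=\mathcal L(\tau_{i,j}(\attn,\readout))$, and gets the gradient identity by differentiating it at a fixed point of $\tau_{i,j}$, whereas you check equivariance directly on the per-sample gradient formulas and then integrate; you also justify differentiating under the expectation, which the paper leaves implicit.
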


\begin{proof}
    For any fixed target index \(s\), let \(\tau_{i,j} := (i\ j) \in \mathrm{Sym}([d])\) denote the transposition swapping \(i,j\ne s\) while keeping other indices fixed. For any string \(w = (x_1, x_2\cdots x_m)\in\Sigma^m\), we write \(
    \tau_{i,j}(w) = \tau_{i,j}(x_1)\tau_{i,j}(x_2)\cdots\tau_{i,j}(x_m)\).
    It follows that \(\rho(w) = \rho(\tau_{i, j}(w))\) for every \(w\in\Sigma^+\). Moreover, since \(i,j\ne s\), swapping \(v_i\) and \(v_j\) will not change the acceptance of the string, i.e., \(y(w) = y(\tau_{i,j}(w))\).

    Next, define the corresponding action on parameters: for \((\attn,\readout)\in\mathbb R^d\times\mathbb R^d\), let \(\tau_{i,j}(\attn,\readout)\) denote the parameter vector obtained by swapping \(\alpha_i\) with \(\alpha_j\) and \(\beta_i\) with \(\beta_j\), leaving all other coordinates fixed. It follows that \(\displaystyle f_{\attn,\readout}(w) = f_{\tau_{i,j}(\attn,\readout)}(\tau_{i,j}(w))\) for every  \(w\in\Sigma^+\).

    Combining these three facts, the population loss is invariant under this joint relabeling:
    \begin{align*}
        \mathcal L(\attn,\readout) & = \E_{w\sim\rho}\cb{\cp{f_{\attn,\readout}(w)-y(w)}^2} = \E_{w\sim\rho}\cb{\cp{f_{\tau_{i,j}(\attn,\readout)}(\tau_{i,j}(w))-y(\tau_{i,j}(w))}^2} \\
        & = \E_{w'\sim\rho}\cb{\cp{f_{\tau_{i,j}(\attn,\readout)}(w')-y(w')}^2} = \mathcal L\cp{\tau_{i,j}(\attn,\readout)}.
    \end{align*}

    Therefore, for every pair of weights \((\attn,\readout)\), it holds that \(\fracpartial{\mathcal L}{\alpha_i}\cp{\tau_{i,j}(\attn,\readout)} = \fracpartial{\mathcal L}{\alpha_j}(\attn,\readout), \; \fracpartial{\mathcal L}{\beta_i}\cp{\tau_{i,j}(\attn,\readout)} = \fracpartial{\mathcal L}{\beta_j}(\attn,\readout).\)
    Since the current parameters satisfy \(\alpha_i=\alpha_j\) and \(\beta_i=\beta_j\), then \(\tau_{i,j}(\attn,\readout) = (\attn,\readout)\), and the identities above rewrites as \(\fracpartial{\mathcal L}{\alpha_i}(\attn,\readout) = \fracpartial{\mathcal L}{\alpha_j}(\attn,\readout), \; \fracpartial{\mathcal L}{\beta_i}(\attn,\readout) = \fracpartial{\mathcal L}{\beta_j}(\attn,\readout)\).
    Plugging these equalities into the GD update rule we have
    \(\alpha_i' = \alpha_i - \eta\fracpartial{\mathcal L}{\alpha_i} = \alpha_j - \eta\fracpartial{\mathcal L}{\alpha_j} = \alpha_j'\). Same equality holds for \(\readout\) as well.
\end{proof}

With this lemma, and the symmetrical initialisation for \(\attn\) and \(\readout\), the parameter can be reduced to 4 scalars, \(\alpha_s\) and \(\beta_s\) for the target, and \(\alpha_j\), \(\beta_j\) as the common weight for the background indices. We also shorthand \(\gamma_i(w)\) as the aggregated softmax-attention score for symbol \(v_i\), note that sample-wise there is no symmetry for \(\gamma_i\) when \(i\ne s\), but on expectation level we have \(\E[\gamma_j]\) which is shared by all background indices \(j\ne s\). We also shorthand \(\Delta \beta = \beta_s - \beta_j\) and \(\bar\beta = \frac{1}{d-1}\sum_{i\ne s} \beta_i\).

\textbf{First Phase} In the first step, we calculate the gradient for \(\beta_s\) and \(\beta_j\) in closed forms:
\begin{align*}
    \fracpartial{\mathcal L}{\beta_s} & = \E\cb{(\hat y (w) - y(w))\gamma_s} \overset{(a)}{=} \E\cb{(-1-y(w))\gamma_s} = \prob(y=1) \cdot -2\cdot \E\cb{\gamma_s \given y=1} = -\frac{2}{d}. \\
    \fracpartial{\mathcal L}{\beta_j} & = \E\cb{(\hat y (w) - y(w))\gamma_j} = \E\cb{(-1-y(w))\gamma_j} = \prob(y=1) \cdot -2\cdot \E\cb{\gamma_j \given y=1} = \frac{1}{d-1} \cdot \cp{\frac 2 d -1}
\end{align*}
Hence, the updated parameter is \(\beta_s' = \frac{2\eta_1}{d}, \; \beta_j' = \frac{\eta_1}{d-1} \cdot \cp{1-\frac 2 d}\) for every \(j\ne s\). The separation of target and the background output weights grows from \(\Delta\beta = 0\) to \(\Delta\beta' = \frac{\eta_1}{d-1}\). We choose the learning rate of the first phase to be \(0< \eta_1 < \frac{d-1}{2}\) and therefore the separation will be controlled within a constant range \(0<\Delta\beta' < \frac{1}{2}\).

\textbf{Second Phase} In the second phase, we calculate the gradient for \(\alpha_s\) and \(\alpha_j\) in closed forms:
\begin{align*}
    \fracpartial{\mathcal L}{\alpha_s} & = \Delta\beta' \cdot \E\cb{(\hat y(w) - y(w))\cdot\gamma_s(1-\gamma_s)}\\
    \fracpartial{\mathcal L}{\alpha_j} & = -\frac{1}{d-1}\cdot \fracpartial{\mathcal L}{\alpha_s}
\end{align*}
Consider \(0< \eta_1 < \frac{d-1}{2}\), we have \(\beta_i' < 1\) for every \(i\in[d]\), therefore for positive words \(\hat y(w) = \sum_{i=1}^d \gamma_i\beta_i' - 1<0\) and \(f-y < -1\), hence we get an upper bound for the target gradient (which holds for negative words too since \(\gamma_s = 0\)):
\begin{align*}
    \fracpartial{\mathcal L}{\alpha_s} \le -\Delta \beta' \E[\gamma_s(1-\gamma_s)]
\end{align*}
At uniform attention, the moments in \eqref{eq:positive_mu1} and \eqref{eq:positive_mu2} give \(d\E[\gamma_s(1-\gamma_s)] = \frac{d-1}{d} \cp{1-\frac{\ln(d-1)}{d-2}} \ge \frac{2}{3}(1-\ln 2) > \frac{1}{20}\) for every \(d\ge 3\). Hence we have \(\fracpartial{\mathcal L}{\alpha_s} \le -\frac{\eta_1}{20d(d-1)}\) and \(\fracpartial{\mathcal L}{\alpha_j} > \frac{\eta_1}{20d(d-1)^2}\). Combining both bounds, the attention gap after the second step satisfies
\[
    \Delta\alpha' := \alpha_s' - \alpha_j'
    = -\frac{d}{d-1}\,\eta_2\,\fracpartial{\mathcal L}{\alpha_s}
    \;\geq\; \frac{\eta_1\eta_2}{20(d-1)^2}
    \;\geq\; \ln\frac{8d}{\varepsilon}.
\]
For a positive string $w$ with $|w| = m$ and $C_s(w) = n \geq 1$,
\[
    1 - \gamma_s'(w) = \frac{m-n}{n\exp(\Delta\alpha') + m - n}
    \;\leq\; (m-1)\exp(-\Delta\alpha').
\]
Since $\E\cb{m - 1 \given y = 1} \leq 2\,\E[m-1] = 2(d-2)$, we obtain
\[
    \E\cb{1 - \gamma_s' \given y = 1}
    \;\leq\; \delta := 2d\exp(-\Delta\alpha')
    \;\leq\; \frac{\varepsilon}{4}.
\]

\textbf{Third Phase} During the third step, the attention is fixed at $\attn'$, so $\mathcal L(\attn', \readout)$ is a convex quadratic function w.r.t. \(\readout\) with Hessian $H = \E[\boldsymbol\gamma'\boldsymbol\gamma'^\top]$. Let $\mu_k' := \E\cb{(\gamma_s')^k \given y = 1}$ for \(k=1,2,3\), so that $1 - \delta \leq \mu_1'$ and $(\mu_1')^2 \leq \mu_2' \leq 1$. Let $S := \mathrm{span}\{e_s, \mathbf 1_{-s}\}$ and $u := \mathbf 1_{-s}/\sqrt{d-1}$.

On $S$, the risk is minimized at $\readout^{\star}$
with:
\[
    \beta_{s}^{\star} = \frac{2\mu'_1+2\mu'_2-2(\mu'_1)^2}{2\mu'_2 - (\mu'_1)^2} , \quad \beta_{j}^{\star} = \frac{2\mu'_2-2(\mu'_1)^2}{2\mu'_2 - (\mu'_1)^2}.
\]
And the expected risk at this point will be:
\begin{align*}
    \mathcal L(\attn', \readout^{\star})& = \frac 12 \cdot \E\cb{\frac 12 \cp{\beta_s^{\star} \gamma'_s + \sum_{j\ne s} \beta^{\star}_j \gamma'_j -1 -1}^2 \given y = 1} + \frac 12 \cdot \E\cb{\frac 12 \cp{\sum_{j\ne s} \beta^{\star}_j\gamma'_j-1+1}^2 \given y = -1} \\
    & = \frac 14 \cdot \cp{\E\cb{\cp{\Delta\beta^{\star} \gamma'_s + \beta^{\star}_j -2}^2 \given y = 1} + \cp{\beta^{\star}_j}^2} \\
    & = \frac 14 \cp{\Delta\beta^{\star}}^2 \mu_2' + \frac 12 \Delta\beta^{\star} (\beta_j^{\star}-2) \mu_1' + \frac 14(\beta_j^{\star}-2)^2 + \frac 14(\beta_j^{\star})^2 = \frac{\mu_2'-(\mu_1')^2}{2\mu_2'-(\mu_1')^2} < 2\delta
\end{align*}
Moreover, we have $\beta_j^{\star} = 2\mathcal L(\attn',
\readout^{\star}) \in [0, 4\delta]$ and
$0 \leq \beta_s^{\star} \leq 4\delta + \frac{2}{\mu_1'} < 4$.


At any step in the third phase, the symmetry is preserved for the \(\attn'\), \(\readout\), and the \(\E[\boldsymbol{\gamma'}]\) among all indices \(j\ne s\), as a result of Lemma~\ref{lem:symmetry}. 
Since $\readout^{\star}$ minimizes the risk over $S$, its gradient is orthogonal to $S$ and hence zero. Therefore, for every $\readout \in \mathbb R^d$,
\[
    \mathcal L(\attn', \readout) - \mathcal L(\attn', \readout^{\star}) = \tfrac12 (\readout - \readout^{\star})^\top H (\readout - \readout^{\star}),
\]
and the GD iterates with $\eta_3 = 1$ satisfy $\readout(t+1) - \readout^{\star} = (I - H)(\readout(t) - \readout^{\star})$. By Lemma~\ref{lem:symmetry}, all iterates lie in $S$, so only the restriction $\tilde H$ of $H$ to $S$ matters. Take an orthonormal basis \((e_s, \frac{\mathbf1_{-s}}{\sqrt{d-1}},\ \xi_1,\dots,\xi_{d-2})\) of \(\mathbb R^d\), where \(\xi_1,\dots,\xi_{d-2}\) span the subspace \(\{e_s,\mathbf 1_{-s}\}^\perp\). We can then block-diagonalize \(H\) in this basis as \(H = \tilde H \oplus aI_{d-2}\) with \(a=\Theta\cp{\frac{\ln d}{d^2}}\), where \(\tilde H\in\mathbb R^{2\times2}\) acts on \(\mathrm{span}\{e_s,\mathbf1_{-s}\}\). Each component of \(\tilde H\) has closed forms:
\begin{align*}
    \tilde H_{11} &= \E[(\gamma_s')^2] = \tfrac12\mu_2' \in \cb{\tfrac12(1-\delta)^2,\ \tfrac12}, \\
    \tilde H_{22} &= \frac{\E[(1-\gamma_s')^2]}{d-1} \in \cb{\frac{1}{2(d-1)},\ \frac{1+\delta}{2(d-1)}}, \\
    \tilde H_{12} = \tilde H_{21}&= \frac{\E[\gamma_s'(1-\gamma_s')]}{\sqrt{d-1}} \in \cb{0,\ \frac{\delta}{2\sqrt{d-1}}},
\end{align*}
where we used that $\E[1 - \gamma_s'\given y=-1] = 1$ and $\E\cb{1-\gamma_s' \given y=1} \leq \delta$. Hence $\mathrm{tr}(\tilde H) \leq \tfrac12 + \tfrac{1+\delta}{4} \leq 1$ for $d \geq 3$, and
\[
    \det(\tilde H) \geq \frac{(1-\delta)^2 - \delta^2}{4(d-1)} = \frac{1-2\delta}{4(d-1)}, \qquad \lambda_{\min}(\tilde H) \geq \frac{\det(\tilde H)}{\mathrm{tr}(\tilde H)} \geq \frac{1}{8d},
\]
using $\delta \leq \tfrac14$. Since also $\lambda_{\max}(\tilde H) \leq 1$, each step contracts the error by a factor of at most $1 - \frac{1}{8d}$.

At the beginning of Phase 3, output starts from $\readout'$, with $\beta_s' \in (0, 1)$ and $\beta_j' \in (0, \tfrac12)$, so $\|\readout' - \readout^{\star}\|^2 \leq \|\readout^{\star}\|^2 + \|\readout'\|^2 \le 16 + (d-1) \leq 16d$. Therefore \(\displaystyle \mathcal L(\attn', \readout(T)) - \mathcal L(\attn', \readout^{\star}) \;\leq\; \tfrac12\,\|\readout(T) - \readout^{\star}\|^2 \;\leq\; 8d\cp{1 - \frac{1}{8d}}^{2T} \;\leq\; 8d\exp\cp{-\frac{T}{4d}} \;\leq\; \frac{\varepsilon}{2}\) when plugging in $T \geq 4d\ln\frac{16d}{\varepsilon}$. After \(T\) steps we have $\readout'':= \readout(T)$ and hence $\mathcal L(\attn', \readout'') \leq 2\delta + \frac{\varepsilon}{2} \leq \varepsilon$.
\end{proof}

\subsection{Proof of Corollary~\ref{thm:fbGD}}\label{app:proof_cor_1}
\begin{proof}
We use $(w^{(k)}, y^{(k)})_{k=1}^{N_1}$ denotes the first batch, and for any function $g(w)$ of a sample we write $\hat{g} := \frac{1}{N_1}\sum_{k=1}^{N_1} g(w^{(k)}, y^{(k)})$ for its average over this batch. In Phases 1 and 2, $\attn = \mathbf{0}_d$, so $\gamma_i(w) = \frac{C_i(w)}{|w|}$. 
 
With respect to the first draw of samples, we consider the following variables. For any sample $(w, y)$ and $i \in [d]$, we define \(Z_i := \mathbf{1}\{y = 1\}\,\gamma_i(w), \; V := \gamma_s(w)\cp{1 - \gamma_s(w)}\).
Note that none of these quantities depend on the trained parameters but their randomness come from the raw strings that are drawn.

\begin{lemma}[Concentration on the first batch]\label{lem:first_batch}
    For every $c > 0$, there exists a constant $C > 0$ such that, when $N_1 \geq C d \ln d$, with probability at least $1 - d^{-c}$, the following event $A$ holds :
    \begin{enumerate}
        \item $\left|\hat{Z_i} - \E[Z_i]\right| \leq \frac{1}{800 d}$ for every $i \in [d]$;
        \item $\hat{V} \geq \frac12 \E[V]$;
        \item $\hat{\gamma}_j \leq \frac{2}{d}$ for every
        $j \neq s$.
    \end{enumerate}
\end{lemma}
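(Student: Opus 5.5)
The plan is to prove each of the three items separately with a standard concentration inequality and then take a union bound over the at most $2d$ resulting events. The key observation is that at $\attn=\mathbf 0_d$ every quantity involved is a bounded function of the raw string alone: $Z_i=\mathbf 1\{y=1\}C_i(w)/|w|\in[0,1]$, $V=\gamma_s(1-\gamma_s)\in[0,\tfrac14]$, and $\gamma_j=C_j(w)/|w|\in[0,1]$. So the $N_1$ samples give i.i.d.\ bounded summands, and Bernstein's inequality applies. The crude Hoeffding bound would need $N_1\gtrsim d^2\ln d$ to reach accuracy $1/(800d)$, which is too many, so variance information is needed.

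\textbf{Item 1.} First bound the variances using the moment computations of the preliminaries. Since $Z_i^2\le \mathbf 1\{y=1\}\gamma_i^2$, \eqref{eq:positive_mu2} gives $\mathrm{Var}(Z_i)\le \tfrac12\E[\gamma_s^2\given y=1]=O(\ln d/d^2)$ for $i=s$, and \eqref{eq:gammaj_positive_mu2} gives the same order for $i\ne s$. Bernstein's inequality then yields
\[
\prob\cp{|\hat Z_i-\E Z_i|>t}\le 2\exp\cp{-\frac{N_1t^2}{2\sigma^2+2t/3}}.
\]
With $t=1/(800d)$ and $\sigma^2=O(\ln d/d^2)$, the exponent is of order $N_1/(\ln d + d)\cdot d^{-1}$ up to constants, since the range term $t/3\sim 1/d$ dominates $\sigma^2 d^2 t^{2}$-type contributions. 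The exponent is therefore $\Omega(N_1/d)$. Choosing $N_1\ge Cd\ln d$ with $C$ large in terms of $c$ makes each failure probability at most $d^{-c-1}/3$.

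\textbf{Item 3.} This follows in the same way. Here $\E\gamma_j=\tfrac12\cdot\frac{1}{d-1}(1-\tfrac2d)+\tfrac12\cdot\frac1{d-1}\le \frac1{d-1}$, and a deviation of $t=\tfrac2d-\frac1{d-1}=\Theta(1/d)$ is allowed (positive for $d\ge3$). The variance is $O(\ln d/d^2)$ by \eqref{eq:gammaj_positive_mu2} and \eqref{eq:gammaj_negative_mu2}, so Bernstein again gives an exponent $\Omega(N_1/d)$.

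\textbf{Item 2.} This needs only a one-sided lower tail. Phase 2 of the proof of Theorem~\ref{thm:main_positive} shows $\E V\ge \frac{1}{20d}$, and $\mathrm{Var}(V)\le\E V^2\le \tfrac14\E V$ since $V\le\tfrac14$. Bernstein with $t=\tfrac12\E V$ gives an exponent of order $N_1(\E V)^2/(\E V)=\Omega(N_1/d)$. Alternatively, the multiplicative Chernoff bound for $[0,\tfrac14]$-valued variables gives the same exponent.

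A union bound over the $d+(d-1)+1\le 2d$ events, each failing with probability at most $d^{-c-1}/2$, completes the proof.

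The main obstacle is the variance bookkeeping in Item 1. One must check that the additive accuracy $1/(800d)$ is achievable with only $d\ln d$ samples. This works because the variances are $O(\ln d/d^2)$, not $O(1/d)$: for $i=s$ the second moment $\E[\gamma_s^2\given y=1]$ is $\Theta(\ln d/d^2)$, which is smaller than $t=\Theta(1/d)$ times a constant. Hence the exponent $N_1t^2/\sigma^2$ is of order $N_1/\ln d$, the range term $N_1t$ is of order $N_1/d$, and the range term is the binding constraint. If this comparison failed, the sample requirement would degrade to $d\ln^2 d$, so I would verify the constants in \eqref{eq:positive_mu2} carefully.
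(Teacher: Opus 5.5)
Your proposal is correct and follows essentially the same route as the paper: Bernstein's inequality for each of the three items, using the $O(\ln d/d^2)$ second-moment bounds from the preliminaries and $\E V=\Theta(1/d)$ to get exponents of order $N_1/(\ln d+d)=\Omega(N_1/d)$, followed by a union bound over the $O(d)$ events. Two minor remarks: the extra factor $d^{-1}$ in your stated order of the Item~1 exponent is a slip, and the $\ln d/d^2$ variance bound is less critical than your last paragraph suggests, since the crude bound $\E[Z_i^2]\le \E[Z_i]\le \E[\gamma_i]=1/d$ already gives the same $\Omega(N_1/d)$ exponent.
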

\begin{proof}
    All of $Z_i$, $V$, $\gamma_j$ take values in $[0, 1]$. We can bound their second moments by:
    \begin{align*}
        \E[Z_i^2] &= \frac 1 2\cdot \E[\gamma^2_i \given y=1] = \frac{1}{(d-1)^2} + \frac{\ln(d-1)}{(d-1)(d-2)} \le \frac{1+\ln(d-1)}{(d-1)^2}; \\
        \E[V^2] & \le \E[\gamma_s^2 \given y=1] \le \frac{2+2\ln d}{d^2}; \\
        \E[\gamma_j^2] & = \frac{1}{d^2} + \frac{(d-1)\ln(d-1)}{d^2(d-2)} \le \frac{1+2\ln d}{d^2}.
    \end{align*}
    Hence, we can use \(\sigma^2 = \frac{2+2\ln d}{(d-1)^2} \le \frac{12\ln d}{d^2}\) as a common upper bound for the three terms above. And we also have \(\E[V] = \E[\gamma_s] - \E[\gamma_s^2] = \frac1d - \frac{1}{d^2} - \frac{(d-1)\ln(d-1)}{d^2(d-2)} \ge \frac{1}{5d}\) for all \(d \ge 3\).
    By Bernstein's inequality, for each statement in the event \(A\), the failure probability of the each term can be bounded by:
    \begin{align*}
        \prob(|\hat Z_i - \E[Z_i]| > \tfrac{1}{800d}) & < 2\exp\cp{-\frac{N_1 \cp{\frac{1}{800d}}^2}{2 \cdot \frac{12 \ln d}{d^2} + \frac{1}{1200d}}} < 2\exp\cp{-\frac{N_1}{C_0 \ln d + C_1 d}}, \\
        \prob(\E[V] - \hat V > \tfrac 12 \E[V]) &< \exp\cp{-\frac{N_1 \cp{\frac{1}{10d}}^2}{2 \cdot \frac{12 \ln d}{d^2} + \frac{1}{15d}}} < \exp\cp{-\frac{N_1}{C_0' \ln d + C_1' d}} \\
        \prob(\hat \gamma_j -  \E[\gamma_j]> \tfrac{1}{d}) & < \exp\cp{-\frac{N_1 \cp{\frac{1}{d}}^2}{2 \cdot \frac{12 \ln d}{d^2} + \frac{2}{3d}}} < \exp\cp{-\frac{N_1}{C_0''' \ln d + C_1''' d}}
    \end{align*}
    Taking the union bound of all the \(2d+1\) statements, choosing
    $N_1 \geq C d \ln d$ with $C$ large enough (depending on $c$) makes the combined failure at most $\frac{d^{-c}}{2d+1}$.
\end{proof}
 
\textbf{First Phase.} When $\attn = \readout = \mathbf{0}_d$, the output is $f(w)=-1$ on every string, so $\fracpartial{\hat{\mathcal L}}{\beta_i} = -2\hat{Z_i}$ and the updated output weight becomes $\hat\beta_i' = 2\eta_1 \hat{Z_i} \geq 0$. By the moment of the attention scores specified in \eqref{eq:gammaj_positive_mu1}, \eqref{eq:gammaj_positive_mu2}, \eqref{eq:gammaj_negative_mu2}, and $\prob(y=-1) = \frac12$, we have \( \E[Z_s] = \frac1d, \; \E[Z_j] = \frac1d - \frac{1}{2(d-1)} = \frac{d-2}{2d(d-1)}\) for every \(j \ne s\).
With Lemma~\ref{lem:first_batch} and \(0 < \eta_1 \leq \sqrt d\), we have:
\begin{align}
    &\hat\beta_s' \leq \eta_1\cp{\frac2d + \frac{1}{400d}} \leq \frac{2.01}{\sqrt d} \leq \frac54 \label{eq:beta_s} \\
    &\hat\beta_s' - \max_{i\ne s}\hat\beta_i' \geq \eta_1\cp{\frac{1}{d-1} - \frac{1}{200d}} \geq \frac{199}{200}\cdot\frac{\eta_1}{d}, \label{eq:beta_sep} \\
    &\left|\hat\beta_j' - \hat\beta_{j'}'\right| \leq \frac{\eta_1}{200d}, \text{ for } j, j'\ne s; \;\; 0 \leq \hat\beta_j' \leq \frac{1.01\,\eta_1}{d} .\label{eq:beta_spread}
\end{align}

\textbf{Second Phase.} For a sample $(w, y)$ and $i \in [d]$, we define \(T_i := \cp{y - f(w)}\,\gamma_i \sum_{j \neq i} \cp{\hat\beta_i' - \hat\beta_j'}\gamma_j\), Then the empirical gradient w.r.t. \(\alpha_i\) can be written as $\fracpartial{\hat{\mathcal L}}{\alpha_i} (\mathbf{0}_d, \hat{\readout}') = -\hat{T_i}$, hence for every background symbol \(j\ne s\):
\begin{equation}\label{eq:gap_identity}
    \hat\alpha_s' - \hat\alpha_j' = \eta_2\cp{\hat{T_s} - \hat{T_j}}
\end{equation}
Since the gap and the output weights $\hat{\readout}'$ depend on the same batch of samples, we will not apply a concentration inequality directly to $T_i$. Instead, we bound $T_i$ sample by sample, in terms of $V$ and $\gamma_j$.
 
We first consider a lowe bound for \(T_s\). For negative samples, we have $y = -1$, then $\gamma_s = 0$ and $T_s = 0 = V$. If $y = 1$, then \(y - f(w) = 2 - \sum_{i=1}^d \hat\beta_i'\gamma_i \geq 2 - \hat\beta_s' \geq \tfrac34\) by \eqref{eq:beta_s}, and \(\sum_{i \neq s}(\hat\beta_s' - \hat\beta_i')\gamma_i \geq \frac{199}{200}\frac{\eta_1}{d}(1 - \gamma_s)\) by \eqref{eq:beta_sep}. Hence $T_s \geq \frac34\cdot\frac{199}{200}\cdot\frac{\eta_1}{d}\, V$ for every sample, regardless of positive or negative, and by Lemma~\ref{lem:first_batch}, when \(A\) happens:
\[
    \hat{T_s} \;\geq\; \frac34\cdot\frac{199}{200}\cdot \frac{\eta_1}{d}\cdot\frac{1}{10d} \ge \frac{7\,\eta_1}{100\,d^2}.
\]
 
Then we bound \(T_j\) from above. For positive samples, $y = 1$, then \(0 < y - f(w) \leq 2\), and
\[
    \sum_{i \neq j}\cp{\hat\beta_j' - \hat\beta_i'}\gamma_i = \cp{\hat\beta_j' - \hat\beta_s'}\gamma_s + \sum_{i \neq j, s}\cp{\hat\beta_j' - \hat\beta_i'}\gamma_i \le \frac{\eta_1}{200d},
\]
since the first term is negative and the second is bounded by
\eqref{eq:beta_spread}. For negative samples $y = -1$, then $\gamma_s = 0$, \(|y - f(w)| = \sum_{i \neq s}\hat\beta_i'\gamma_i \leq 2\), and \(\left|\sum_{i \neq j}(\hat\beta_j' - \hat\beta_i')\gamma_i\right| \le \frac{\eta_1}{200d}\). For both positive and negative samples, we have \(T_j \leq 2\gamma_j\cdot\frac{\eta_1}{200d}\), and by Lemma~\ref{lem:first_batch}, we have \(\hat{T_j} \le \frac{\eta_1}{100d}\cdot\frac{2}{d} = \frac{2\,\eta_1}{100\,d^2}\).
 
Combining both bounds with \eqref{eq:gap_identity}, on the event $A$, the empirical attention gap is bounded by:
\begin{equation}\label{eq:gap}
    \hat\Delta \;:=\; \hat\alpha_s' - \max_{j \neq s}\hat\alpha_j'\;\geq\; \frac{\eta_1\eta_2}{20\,d^2}.
\end{equation}

\textbf{Third phase.}
Conditioned on event \(A\) happens with the first batch, we fix \((\hat{\attn}', \hat{\readout}')\). All probabilities below are over the fresh batch $(w^{(k)}, y^{(k)})_{k=1}^{N_3}$, which is independent of $(\hat{\attn}', \hat{\readout}')$. From here on, $\boldsymbol\gamma(w)$ denotes the attention weights under $\hat{\attn}'$, $\boldsymbol\gamma^{(k)} := \boldsymbol\gamma(w^{(k)})$, and we denote the expected as well as the empirical hessians of \(\readout\) as \(H := \E\cb{\boldsymbol\gamma\boldsymbol\gamma^\top}\) and \(\hat H := \frac{1}{N_3} \sum_{k=1}^{N_3} \boldsymbol\gamma^{(k)} \cp{\boldsymbol\gamma^{(k)}}^\top\).
We write \(\mathcal L(\readout) := \mathcal L_s(\hat{\attn}', \readout)\) and \(\hat{\mathcal L}(\readout)\) for the empirical risk on the fresh batch. 
In contrast to the third phase of population GD, we do not analyze the convergence of $\readout$ to the risk minimizer. Instead, we compare the iterates with the fixed reference point $z := 2 e_s$.
 
We first prove that the reference point has negligible risk. Let \(r(w) := f_{\hat{\attn}', z}(w) - y(w) = 2\gamma_s(w) - 1 - y(w)\). For negative samples, \(r(w) = 0 -1 + 1 = 0\). For positive samples, when \(|w| = m\) and \(C_s(w) = n \geq 1\), we have
\[
    |r(w)| = 2 - 2\gamma_s(w) = \frac{2\cdot\sum_{j \neq s} C_j(w)\exp(\hat\alpha_j')}{n\exp(\hat\alpha_s') + \sum_{j \neq s} C_j(w)\exp(\hat\alpha_j')} \le 2(m-1)\exp(-\hat\Delta),
\]
Note that this bound does not depend on how the attention is distributed among the non-target symbols. Define the random variable \(\kappa := r(w)^2\) and \(\hat\kappa := \frac{1}{N_3}\sum_{k=1}^{N_3} r(w^{(k)})^2\). Since \(\E\cb{(m-1)^2} = (d-2)(2d-3) \leq 2d^2\), we can bound the expectation of \(\kappa\) by:
\begin{equation}\label{eq:kappa}
    \E[\kappa] \le 8 d^2 \exp(-2\hat\Delta),
\end{equation}
and by Markov's inequality, we have \(\hat\kappa \leq d^{c}\E[\kappa]\) with probability
at least $1 - d^{-c}$.
 
Now we compare population and empirical risk through \(z\). For every \(\readout \in \mathbb R^d\) and every string \(w\), we have \(f_{\hat{\attn}', \readout}(w) - y(w) = (\readout - z)^\top \boldsymbol\gamma(w) + r(w)\), and the loss can be bounded from above and below by:
\begin{align}
    & \mathcal L(\readout) \le (\readout - z)^\top H (\readout - z) + \E[\kappa], \label{eq:pop_upper} \\
    & (\readout - z)^\top \hat H (\readout - z) \le 4\hat{\mathcal L}(\readout) + 2\hat\kappa; \label{eq:emp_lower}
\end{align}
from the inequalities \((a+b)^2 \leq 2a^2 + 2b^2\) and \((a+b)^2 \geq \tfrac12 a^2 - b^2\). In particular, \(\hat{\mathcal L}(z) = \tfrac12 \hat\kappa\).
 
The empirical risk $\hat{\mathcal L}$ is a convex quadratic with Hessian $\hat H \preceq I$, so it is $1$-smooth. Let $\readout(0) = \hat{\readout}'$ and let $\readout(t)$ denote the GD iterates with $\eta_3 = 1$. By the descent lemma and convexity, for every $t \geq 0$,
\[
    \|\readout(t+1) - z\|^2 \le \|\readout(t) - z\|^2 - 2\cp{\hat{\mathcal L}(\readout(t+1)) - \hat{\mathcal L}(z)},
\]
and $\hat{\mathcal L}(\readout(t))$ is non-increasing in $t$. Summing over $t = 0, \dots, T-1$ yields
\begin{equation}\label{eq:gd}
    \hat{\mathcal L}(\readout(T)) \le \hat{\mathcal L}(z) + \frac{\|\readout(0) - z\|^2}{2T}, \; \|\readout(T) - z\|^2 \le \|\readout(0) - z\|^2 + 2T\hat{\mathcal L}(z).
\end{equation}
By \eqref{eq:beta_s} and \eqref{eq:beta_spread}, \(0 \leq \hat\beta_s' < 2\) and \(\sum_{j \neq s}(\hat\beta_j')^2 \leq (d-1)\frac{1.03\,\eta_1^2}{d^2}\le 2\), hence $\|\readout(0) - z\|^2 \leq 6$. Together with \(\hat{\mathcal L}(z) = \tfrac12\hat\kappa\), \eqref{eq:gd} gives
\begin{equation}\label{eq:gd2}
    \hat{\mathcal L}(\readout(T)) \leq \frac{\hat\kappa}{2} + \frac{3}{T}, \; \|\readout(T) - z\|^2 \leq 6 + T\hat\kappa .
\end{equation}

We next consider the concentration on the hessian with the following lemma:
\begin{lemma}\label{lem:one_sided}
    Let $\lambda := \frac{16(c+1)\ln d}{N_3}$ and assume $\lambda \leq 1$.
    Then, with probability at least $1 - d^{-c}$,
    \[
        H \;\preceq\; 2\hat H + \lambda I .
    \]
\end{lemma}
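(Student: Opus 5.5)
The plan is to prove the claim with a matrix Chernoff lower-tail bound applied to a \emph{whitened and regularized} version of the rank-one summands. Whitening puts the additive slack $\lambda I$ directly into the statement, so no lower bound on $\lambda_{\min}(H)$ is needed. Indeed $H$ has eigenvalues of order $\ln d/d^2$ on $\{e_s,\mathbf 1_{-s}\}^\perp$, so a purely multiplicative bound would be too weak.

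\textbf{Setup.} Let $M := H + \lambda I$, which is positive definite. For each fresh sample define
\[
Y_k := M^{-1/2}\cp{\boldsymbol\gamma^{(k)}(\boldsymbol\gamma^{(k)})^\top + \lambda I}M^{-1/2} \succeq 0 .
\]
The $Y_k$ are i.i.d.\ because the fresh batch is independent of $\hat{\attn}'$. They have the following properties.
\begin{itemize}
\item Their mean is $\E[Y_k] = M^{-1/2}(H+\lambda I)M^{-1/2} = I$, so $\mu_{\min} := \lambda_{\min}\bigl(\sum_k \E Y_k\bigr) = N_3$.
\item Since $\boldsymbol\gamma$ is a probability vector, $\|\boldsymbol\gamma\|_2\le 1$. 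Together with $M \succeq \lambda I$ this gives $\|Y_k\| \le (1+\lambda)/\lambda \le 2/\lambda =: R$, using $\lambda\le 1$.
\end{itemize}

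\textbf{Concentration step.} Apply the matrix Chernoff lower tail (Tropp) with $\delta = 1/2$:
\[
\prob\Bigl(\lambda_{\min}\bigl(\textstyle\sum_k Y_k\bigr) \le \tfrac12 N_3\Bigr) \le d\,\Bigl[\tfrac{e^{-1/2}}{(1/2)^{1/2}}\Bigr]^{N_3/R} = d\,\exp\bigl(-\theta N_3\lambda/2\bigr).
\]
Here $\theta = \tfrac12 - \tfrac12\ln 2 \approx 0.153$. Substituting $\lambda = 16(c+1)\ln d/N_3$ gives $N_3\lambda/2 = 8(c+1)\ln d$. The failure probability is then at most $d\cdot d^{-8\theta(c+1)} \le d^{1-1.22(c+1)} \le d^{-c}$. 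Checking this constant is the only delicate arithmetic. If it turns out to be too tight, I would instead restrict to the range of $H$, or use the cruder bound $e^{-\delta^2\mu_{\min}/(2R)}$, adjusting the constant~$16$ if necessary.

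\textbf{Unwhitening.} On the complementary event, $\frac{1}{N_3}\sum_k Y_k \succeq \tfrac12 I$. Conjugating by $M^{1/2}$ yields
\[
\hat H + \lambda I \succeq \tfrac12(H+\lambda I),
\]
which rearranges to $H \preceq 2\hat H + \lambda I$, as claimed. The main obstacle is just making the constants in the Chernoff exponent beat the dimension factor $d$; everything else is mechanical.
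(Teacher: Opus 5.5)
Your proposal is correct and matches the paper's proof essentially line by line. You use the same regularized whitened summands $Y_k=(H+\lambda I)^{-1/2}\bigl(\boldsymbol\gamma^{(k)}(\boldsymbol\gamma^{(k)})^\top+\lambda I\bigr)(H+\lambda I)^{-1/2}$ with $\E[Y_k]=I$ and $\lambda_{\max}(Y_k)\le 2/\lambda$, the same matrix Chernoff lower tail at $\delta=1/2$, and the same unwhitening step. The only difference is the exponent: you use the exact Chernoff exponent $\theta=\tfrac12(1-\ln 2)$, while the paper uses the simplified form $d\exp(-\delta^2\mu_{\min}/(2R))=d\exp(-N_3\lambda/16)$, which gives exactly $d^{-c}$. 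Your check $4(1-\ln 2)(c+1)\ge c+1$ is also valid, so the constant $16$ needs no adjustment.
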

\begin{proof}
    Let $H_\lambda := H + \lambda I$ and define the independent positive semidefinite matrices
    \[
        Y_k := H_\lambda^{-1/2} \boldsymbol\gamma^{(k)}\cp{\boldsymbol\gamma^{(k)}}^\top H_\lambda^{-1/2} + \lambda H_\lambda^{-1}, \;k  \in [N_3].
    \]
    Then $\E[Y_k] = I$, so $\lambda_{\min}\cp{\sum_k \E[Y_k]} = N_3$, and $\lambda_{\max}(Y_k) \leq \frac{\|\boldsymbol\gamma^{(k)}\|_2^2}{\lambda} + 1 \leq \frac{2}{\lambda}$. By the matrix Chernoff inequality~\citep{Tropp15}, we have:
    \[
        \prob\cp{\lambda_{\min}\cp{\sum_{k=1}^{N_3} Y_k} \leq \frac{N_3}{2}} \le d\exp\cp{-\frac{N_3}{8}\cdot\frac{\lambda}{2}} = d^{-c}.
    \]
    On the complementary event, $H_\lambda^{-1/2}(\hat H + \lambda I)H_\lambda^{-1/2} \succeq \tfrac12 I$, that is, $\hat H + \lambda I \succeq \tfrac12 (H + \lambda I)$, proving the claim.
\end{proof}
 
On the events of Step~1 and Lemma~\ref{lem:one_sided}, which hold simultaneously with probability at least $1 - 2d^{-c}$, we combine \eqref{eq:pop_upper}, Lemma~\ref{lem:one_sided}, \eqref{eq:emp_lower} and \eqref{eq:gd2}:
\begin{align*}
    \mathcal L(\readout(T)) &\leq (\readout(T) - z)^\top H (\readout(T) - z) + \E[\kappa] \le 2(\readout(T) - z)^\top \hat H (\readout(T)- z) + \lambda\|\readout(T) - z\|^2 + \E[\kappa] \\
    &\leq 8\hat{\mathcal L}(\readout(T)) + 4\hat\kappa + \lambda\|\readout(T) - z\|^2 + \E[\kappa] \le \frac{24}{T} + 6\lambda + (8 + \lambda T)\hat\kappa + \E[\kappa] \le \frac{24}{T} + 6\lambda + (9 + T)d^{c}\E[\kappa] .
\end{align*}
We bound the three terms by $\varepsilon/3$ each. The first term, we can set the training step \(T = \lceil 72/\varepsilon \rceil\); for the second term, we set \(N_3 \geq \frac{288(c+1)\ln d}{\varepsilon}\), then \(6\lambda \leq \frac{\varepsilon}{3}\). For the third term, since $9 + T \leq \frac{82}{\varepsilon}$, by \eqref{eq:kappa} we know $(9 + T)\,d^{c}\E[\kappa] \leq \frac{656\, d^{c+2}}{\varepsilon} \exp(-2\hat\Delta) \leq \frac{\varepsilon}{3}$ if $\hat\Delta \geq \ln\frac{45\, d^{(c+2)/2}}{\varepsilon}$. By \eqref{eq:gap}, this will hold if $\eta_2 \geq \frac{20 d^2}{\eta_1} \ln\frac{45\, d^{(c+2)/2}}{\varepsilon}$, which is implied by $\eta_2 \geq \frac{C d^2}{\eta_1}\ln\frac{d}{\varepsilon}$ for a suitable constant $C$.
Hence $\mathcal L_s(\hat{\attn}', \hat{\readout}'') \leq \varepsilon$ with $\hat{\readout}'' := \readout(T)$. A union bound over the event $A$ and the two events above gives a total probability at least $1 - 3d^{-c}$. 
\end{proof}

\section{Additional Experiments}

\subsection{Experiment Setup}\label{app:experiment_setup}
To produce the experiment results, we adopt the following initialization, losses, step sizes, and step counts:

\paragraph{Models.} For \texttt{even pairs} and \texttt{SLR}, we use the one-layer, single-head models of~\citet{HuangL025} and~\citet{MarionBBB25}, with parameters $(W \in \mathbb{R}^{D \times D}, u \in \mathbb{R}^{D})$ and $(k \in \mathbb{R}^{D}, v \in \mathbb{R}^{D})$, respectively, where $D$ is the embedding dimension. In both cases the first parameter plays the role of our attention weights and the second that of our output weights. For \texttt{ICL bigram}, we use a single attention layer with one-hot token embeddings and parameters $A, B \in \mathbb{R}^{d \times d}$. Given the query token $q$, position $t$ receives the attention logit $A_{q, x_{t-1}}$, which depends on its preceding token $x_{t-1}$ through the fixed first layer. The model outputs the logits \( \sum_{t} \mathrm{softmax}_t\big(A_{q, x_{t-1}}\big)\, B_{:, x_t} \in \mathbb{R}^{d}\)  over the next token and is trained with the cross-entropy loss. Here, $A$ and $B$ play the roles of our attention and output weights, respectively.
\paragraph{Initialization.}
Table~\ref{tab:init_config} lists the initializations. \texttt{AND} differs in three respects. First, we draw $\attn^{(1)}, \attn^{(2)}, \readout^{(1)}, \readout^{(2)}$ with i.i.d.\ $\mathcal{N}(0, 1)$ entries and set $\attn^{(3)} = -\attn^{(1)}$, $\attn^{(4)} = -\attn^{(2)}$, $\readout^{(3)} = -\readout^{(1)}$, $\readout^{(4)} = -\readout^{(2)}$, which prevents head collapse. Second, in the plans A$\to$O, O\(\to\)A, and O$\to$A$\to$O, the attention phase consists of $1250$ GD steps instead of one, since a single step did not suffice for the heads to attend to different targets. 


\begin{table}[ht]
    \centering
    \small
    \setlength{\tabcolsep}{4pt}
    \caption{Initializations. The fixed initialization is used only for O$\to$A$\to$O; Gaussian entries are i.i.d.}
    \label{tab:init_config}
    \begin{tabular}{@{}lll@{}}
    \toprule
    Task & Fixed & Random \\
    \midrule
    Occurrence, \texttt{OR}
        & $\attn = \readout = \mathbf{0}_d$
        & $\attn, \readout \sim \mathcal{N}(0, 0.1^2)$ \\
    \texttt{AND}
        & $\attn^{(h)} = \readout^{(h)} = \mathbf{0}_d$ for \(h\in[4]\)
        & see text \\
    \texttt{Even pairs}
        & $W = \mathbf{0}_{D \times D},\ u = \mathbf{0}_D$
        & $W, u \sim \mathcal{N}(0, 0.1^2)$ \\
        \texttt{ICL-bigram}
        & $A = B = \mathbf{0}_d$
        & $A, B \sim \mathcal{N}(0, 0.1^2)$ \\
    \texttt{SLR}
        & $k = v = \mathbf{0}_d$
        & $k, v \sim \mathcal{N}(0, 0.1^2/d)$ \\
    \bottomrule
    \end{tabular}
\end{table}

\paragraph{Step sizes and counts.}
Every plan is built from the same three ingredients: a base step size $\eta$ for multi-step phases and for joint GD, and the step sizes $\eta_{\mathrm{O}}$ and $\eta_{\mathrm{A}}$ of the single output and attention steps. Table~\ref{tab:schedules} gives the schedule of each plan and Table~\ref{tab:step_sizes} gives the values per task.

For squared-loss experiments, we minimize and report MSE without the factor $1/2$ used in the theoretical analysis. An experimental learning rate $\eta$ therefore corresponds to $2\eta$ under the theoretical loss normalization.

\begin{table}[ht]
    \small
    \begin{minipage}[t]{0.58\textwidth}
        \centering
        \setlength{\tabcolsep}{4pt}
        \caption{Schedule of each training plan.
        ``A: $\eta_{\mathrm{A}} \times 1$'' means one GD step on the
        attention weights with step size $\eta_{\mathrm{A}}$;
        ``O: $\eta \times 2500$'' means $2500$ GD steps on the output
        weights with step size $\eta$.}
        \label{tab:schedules}
        \begin{tabular}{@{}llll@{}}
        \toprule
        Plan & Phase 1 & Phase 2 & Phase 3 \\
        \midrule
        A only                & A: $\eta \times 2502$ & -- & -- \\
        O only                & O: $\eta \times 2502$ & -- & -- \\
        A(1x)$\to$O           & A: $\eta_{\mathrm{A}} \times 1$
                              & O: $\eta \times 2500$ & -- \\
        O(1x)$\to$A(1x)       & O: $\eta_{\mathrm{O}} \times 1$
                              & A: $\eta_{\mathrm{A}} \times 1$ & -- \\
        O(1x)$\to$A(1x)$\to$O & O: $\eta_{\mathrm{O}} \times 1$
                              & A: $\eta_{\mathrm{A}} \times 1$
                              & O: $\eta \times 2500$ \\
        Joint GD              & \multicolumn{3}{l}{A and O jointly:
                                $\eta \times 2502$} \\
        \bottomrule
        \end{tabular}
    \end{minipage}%
    \hfill
    \begin{minipage}[t]{0.38\textwidth}
        \centering
        \setlength{\tabcolsep}{5pt}
        \caption{Step sizes per task. For \texttt{AND}, the attention
        phase uses multiple steps with the base step size $\eta$ (see
        text).}
        \label{tab:step_sizes}
        \begin{tabular}{@{}lccc@{}}
        \toprule
        Task & $\eta$ & $\eta_{\mathrm{O}}$ & $\eta_{\mathrm{A}}$ \\
        \midrule
        Occurrence           & $1$    & $50/3$ & $500$ \\
        \texttt{OR}          & $1$    & $12.5$ & $500$ \\
        \texttt{AND}         & $0.25$ & $8$    & -- \\
        \texttt{Even pairs}  & $1$    & $100$  & $1000$ \\
        \texttt{SLR}         & $0.03$ & $3$    & $100$ \\
        \texttt{ICL bigram} & $50$ & $625$ & $25{,}000$ \\
        \bottomrule
        \end{tabular}
    \end{minipage}
\end{table}

For \texttt{AND}, every single attention step is replaced by a multi-step
phase with step size $\eta$: A$\to$O uses $1250$ attention steps followed
by $1250$ output steps, O(1x)$\to$A uses one output step followed by $1250$
attention steps, and O(1x)$\to$A$\to$O uses 1250 attention steps in the second phase and 1250 output steps in the third.

\subsection{Attention and Output Gaps}\label{app:gaps}


\textbf{\texttt{OR}.} With target set $T$ ($|T| = 2$), the attention gap is $\mathbb{E}_+\big[\sum_{t\in T}\gamma_t(w) - \max_{i\notin T}\gamma_i(w)\big]$. The output gap is $\big(\tfrac{1}{|T|}\sum_{t\in T}\beta_t - \max_{j\notin T}\beta_j\big)/2$.

\textbf{\texttt{AND}.} For each head $h$, let $t_h = \arg\max_{t\in T}\alpha^{(h)}_t$ be the target that head favours. Its attention gap is $\mathbb{E}_+\big[\gamma^{(h)}_{t_h}(w) - \max_{i\neq t_h}\gamma^{(h)}_i(w)\big]$. Heads are ranked by their final gap, separately in every run. 

\textbf{\texttt{even pairs}.} Let $z(X) = \sum_{l}\phi_l(X)\,x_l$ be the attention output, where $\phi$ are the attention weights under $W$. The attention measure is the linear separability of $\{(z(X), y)\}$ over all strings. It is the accuracy of the max-margin linear classifier $w^*$ through the origin, computed with a squared-hinge SVM with $C = 10^3$. The output gap is $\cos(u, w^*)$. Since $w^*$ is recomputed from the current attention, this gap can change while $u$ is frozen.

\textbf{\texttt{ICL bigram}.} For a sequence with query $q$, let $t^*$ be the position after the query key, i.e., the position with $x_{t^*-1} = q$. The attention gap is the average over the training sequences of $\mathrm{softmax}_{t^*} - \max_{t\neq t^*}\mathrm{softmax}_{t}$, with weights $\mathrm{softmax}_t(A_{q,x_{t-1}})$. The output gap is $\cos(\tilde B, \tilde I)$, using the Frobenius inner product. Here $\tilde B$ and $\tilde I = I - \tfrac{1}{d}\mathbf{1}\mathbf{1}^\top$ are $B$ and the identity with each column centered over the classes.

\textbf{\texttt{SLR}.} With $J$ the relevant location of a sample, the attention gap is the average over the training samples of $\mathrm{softmax}(Xk)_J - \max_{l\neq J}\mathrm{softmax}(Xk)_l$. The output gap is $\cos(v, v^*)$ with the ground-truth direction $v^*$.

\paragraph{Compute resources.} All experiments were run on an Apple MacBook Air with an M2 chip (8 CPU cores) and 8 GB of unified memory, running macOS 14.4. No GPU was used; all training is full-batch GD on the CPU. 

\subsection{Learning progress for \texttt{AND} task across all heads.}\label{app:and_all_heads}
\begin{figure*}[h]
    \centering
    \includegraphics[width=\linewidth]{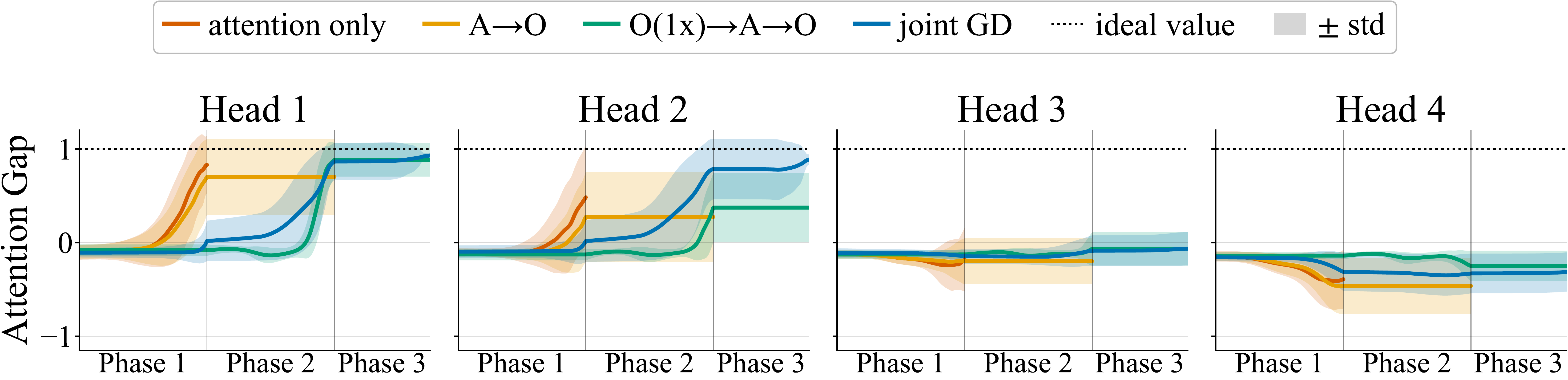}
    \caption{The learning progress of the \texttt{AND} task for each head, with average trajectory and std (shade). The heads are ranked by the gap they reach at the end, per head and per run.}\label{fig:trajectory_all_heads}
\end{figure*}
Figure~\ref{fig:trajectory_all_heads} shows the evolution of the per-head attention gap for the \texttt{AND} task. Under all training plans, at most two of the four heads develop a positive gap, with the gaps of Heads 3 and 4 staying near zero or becoming negative. The plans differ mostly in the second head. Under joint GD, both Head 1 and Head 2 reach a gap close to the ideal value, so the two heads attend to the two target symbols. Under the three-phase plan, Head 1 saturates closely in the same way, but the gap of Head~2 reaches only about $0.4$ on average, with a large variance across runs: in many runs, the second target symbol receives little more attention than the non-target symbols. A$\to$O and attention-only training show the same pattern, with one head close to saturation and a weaker second head. This explains why the phased plans do not match joint GD on \texttt{AND} in Table~\ref{tab:complete_experiments}: recognizing the language requires attending to both target symbols, and only joint GD reliably achieves this.

\subsection{Complete results with standard deviation.}\label{app:table_with_std}
\begin{table}[ht]
\centering
\caption{Train loss and test accuracy (loss for \texttt{SLR}) as mean over $50$ seeds, with the standard deviation in parentheses. Values \(\le10^{-3}\) are shown in upper bounds; those that match or surpass joint GD are bold. The red entries are deterministic runs because the training and test sets are the entire instance space.}
\label{tab:complete_experiments_std}
\setlength{\tabcolsep}{4pt}
\resizebox{\textwidth}{!}{%
\begin{tabular}{@{}llcccccccccccc@{}}
\toprule
\multicolumn{2}{@{}c}{\multirow{2}{*}[-1.2ex]{\textbf{Training plan}}} & \multicolumn{2}{c}{\textbf{Occurrence}} & \multicolumn{2}{c}{\textbf{OR}} & \multicolumn{2}{c}{\textbf{AND}} & \multicolumn{2}{c}{\textbf{Even-Pair}} & \multicolumn{2}{c}{\textbf{ICL bigram}} & \multicolumn{2}{c}{\textbf{SLR}} \\
\cmidrule(lr){3-4}\cmidrule(lr){5-6}\cmidrule(lr){7-8}\cmidrule(lr){9-10}\cmidrule(lr){11-12}\cmidrule(lr){13-14}
 & & \hd{Train}{Loss} & \hd{Test}{Acc.} & \hd{Train}{Loss} & \hd{Test}{Acc.} & \hd{Train}{Loss} & \hd{Test}{Acc.} & \hd{Train}{Loss} & \hd{Test}{Acc.} & \hd{Train}{Loss} & \hd{Test}{Acc.} & \hd{Train}{Loss} & \hd{Test}{Loss} \\
\midrule
Coupled with rand init. & Joint GD & \(10^{-3}\) & \(1.00\) & \(10^{-3}\) & \(1.00\) & \(0.04\) & \(0.99\) & \(0.01\) & \(1.00\) & \(10^{-3}\) & \(1.00\) & \(0.01\) & \(0.01\) \\
 & & \sd{10^{-6}} & \sd{0.00} & \sd{10^{-5}} & \sd{0.00} & \sd{0.06} & \sd{10^{-3}} & \sd{10^{-4}} & \sd{0.00} & \sd{10^{-7}} & \sd{0.00} & \sd{10^{-3}} & \sd{10^{-3}} \\
\midrule
\multirow{10}{*}{\shortstack[l]{Decoupled 1/2/3-phase\\with random init.}}
 & Attention (A) only & \(1.64\) & \(0.50\) & \(1.67\) & \(0.50\) & \(0.24\) & \good{0.94} & \(0.69\) & \(0.51\) & \(3.90\) & \(0.02\) & \(0.50\) & \(0.50\) \\
 & & \sd{0.08} & \sd{0.01} & \sd{0.06} & \sd{0.01} & \sd{0.18} & \sd{0.07} & \sd{10^{-3}} & \sd{0.02} & \sd{0.00} & \sd{0.01} & \sd{0.02} & \sd{0.02} \\\addlinespace[2pt]
 & Output (O) only & \(0.74\) & \(0.86\) & \(0.68\) & \(0.85\) & \(0.79\) & \(0.75\) & \(0.63\) & \(0.50\) & \(2.70\) & \(0.16\) & \(0.48\) & \(0.48\) \\
 & & \sd{0.05} & \sd{0.01} & \sd{0.04} & \sd{0.02} & \sd{0.03} & \sd{0.02} & \sd{0.00} & \sd{0.00} & \sd{0.04} & \sd{0.01} & \sd{0.01} & \sd{0.01} \\\addlinespace[2pt]
 & A(1x)$\to$O & \(0.72\) & \(0.76\) & \(0.69\) & \(0.75\) & \(0.30\) & \(0.91\) & \(0.62\) & \(0.50\) & \(2.58\) & \(0.26\) & \(0.41\) & \(0.43\) \\
 & & \sd{0.26} & \sd{0.13} & \sd{0.21} & \sd{0.11} & \sd{0.19} & \sd{0.09} & \sd{0.02} & \sd{0.11} & \sd{0.86} & \sd{0.22} & \sd{0.14} & \sd{0.14} \\\addlinespace[2pt]
 & O(1x)$\to$A(1x) & \(0.47\) & \good{1.00} & \(0.64\) & \(0.78\) & \(0.52\) & \(0.87\) & \(0.52\) & \(0.50\) & \(3.20\) & \good{1.00} & \(0.28\) & \(0.28\) \\
 & & \sd{0.07} & \sd{0.00} & \sd{0.07} & \sd{0.19} & \sd{0.19} & \sd{0.07} & \sd{10^{-3}} & \sd{10^{-3}} & \sd{0.02} & \sd{0.00} & \sd{0.03} & \sd{0.03} \\\addlinespace[2pt]
 & O(1x)$\to$A(1x)$\to$O & \good{10^{-5}} & \good{1.00} & \good{10^{-5}} & \good{1.00} & \(0.27\) & \good{0.93} & \good{0.08} & \good{1.00} & \good{10^{-3}} & \good{1.00} & \good{0.01} & \good{0.01} \\
 & & \sd{10^{-6}} & \sd{0.00} & \sd{10^{-5}} & \sd{0.00} & \sd{0.13} & \sd{0.06} & \sd{10^{-3}} & \sd{0.00} & \sd{10^{-8}} & \sd{0.00} & \sd{10^{-3}} & \sd{10^{-3}} \\
\midrule
Sec.~\ref{sec:positive_result}: 3-phase, zero init. & O(1x)$\to$A(1x)$\to$O & \good{10^{-8}} & \good{1.00} & \good{10^{-7}} & \good{1.00} & \(0.90\) & \(0.68\) & \textcolor{red}{\good{0.08}} & \textcolor{red}{\good{1.00}} & \good{10^{-3}} & \good{1.00} & \good{0.01} & \good{0.01} \\
 & & \sd{10^{-9}} & \sd{0.00} & \sd{10^{-6}} & \sd{0.00} & \sd{0.02} & \sd{0.02} & \textcolor{red}{\sd{0.00}} & \textcolor{red}{\sd{0.00}} & \sd{10^{-8}} & \sd{0.00} & \sd{10^{-3}} & \sd{10^{-3}} \\
\bottomrule
\end{tabular}%
}
\end{table}

\subsection{Learning progress on occurrence and \texttt{OR}} \label{app:learning_progress_occur_or}
Figure~\ref{fig:trajectory_occur_or} shows the trajectories on occurrence and \texttt{OR} for all training plans. Both tasks behave as in Figure~\ref{fig:gd_comparison}: under joint GD, the output gap grows first, overshoots while the attention saturates, and then settles at its ideal value, and the three-phase plan reaches the same final gaps. When the attention is trained first or alone, its gap moves in the wrong direction, and the output weights then grow far beyond their ideal value. 

\begin{figure}[t]
\centering 
\includegraphics[width=0.7\linewidth]{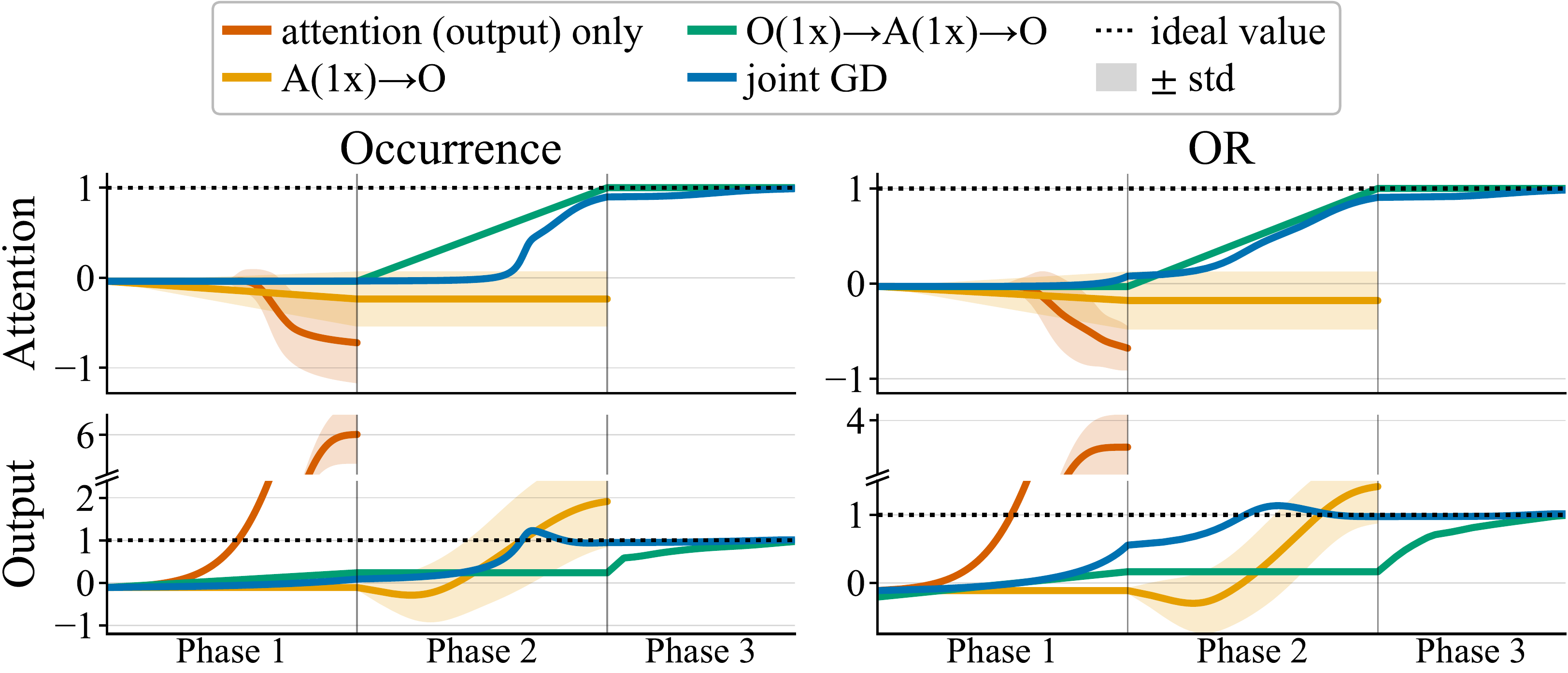}
\caption{\textbf{Attention and output gaps on occurrence and \texttt{OR}.} Notation and phases are as in Figure~\ref{fig:trajectory_complete}.} \label{fig:trajectory_occur_or}
\end{figure}

\end{document}